\icmltitlerunning{REPAINT: Knowledge Transfer in Deep Reinforcement Learning}
\begin{document}

\twocolumn[
\icmltitle{REPAINT: Knowledge Transfer in Deep Reinforcement Learning}

% It is OKAY to include author information, even for blind
% submissions: the style file will automatically remove it for you
% unless you've provided the [accepted] option to the icml2021
% package.

% List of affiliations: The first argument should be a (short)
% identifier you will use later to specify author affiliations
% Academic affiliations should list Department, University, City, Region, Country
% Industry affiliations should list Company, City, Region, Country

% You can specify symbols, otherwise they are numbered in order.
% Ideally, you should not use this facility. Affiliations will be numbered
% in order of appearance and this is the preferred way.
\icmlsetsymbol{equal}{*}

\begin{icmlauthorlist}
\icmlauthor{Yunzhe Tao}{1}
\icmlauthor{Sahika Genc}{1}
\icmlauthor{Jonathan Chung}{1}
\icmlauthor{Tao Sun}{1}
\icmlauthor{Sunil Mallya}{1}
\end{icmlauthorlist}

\icmlaffiliation{1}{AI Labs, Amazon Web Services, Seattle, WA 98121, USA}

\icmlcorrespondingauthor{Yunzhe Tao}{yunzhe.tao@gmail.com}

% You may provide any keywords that you
% find helpful for describing your paper; these are used to populate
% the "keywords" metadata in the PDF but will not be shown in the document
\icmlkeywords{Reinforcement Learning, transfer learning, MuJoCo, DeepRacer, StarCraft}

\vskip 0.3in
]

% this must go after the closing bracket ] following \twocolumn[ ...

% This command actually creates the footnote in the first column
% listing the affiliations and the copyright notice.
% The command takes one argument, which is text to display at the start of the footnote.
% The \icmlEqualContribution command is standard text for equal contribution.
% Remove it (just {}) if you do not need this facility.

\printAffiliationsAndNotice{}  % leave blank if no need to mention equal contribution
% \printAffiliationsAndNotice{\icmlEqualContribution} % otherwise use the standard text.

\begin{abstract}
Accelerating learning processes for complex tasks by leveraging previously learned tasks has been one of the most challenging problems in reinforcement learning, especially when the similarity between source and target tasks is low. This work proposes REPresentation And INstance Transfer (REPAINT) algorithm for knowledge transfer in deep reinforcement learning. REPAINT not only transfers the representation of a pre-trained teacher policy in the on-policy learning, but also uses an advantage-based experience selection approach to transfer useful samples collected following the teacher policy in the off-policy learning. Our experimental results on several benchmark tasks show that REPAINT significantly reduces the total training time in generic cases of task similarity. In particular, when the source tasks are dissimilar to, or sub-tasks of, the target tasks, REPAINT outperforms other baselines in both training-time reduction and asymptotic performance of return scores.
%In representation transfer, we adopt a kickstarted training method using a pre-trained teacher policy by introducing an auxiliary cross-entropy loss. In instance transfer, we develop a sampling approach, i.e., advantage-based experience replay, on transitions collected following the teacher policy, where only the samples with high advantage estimates are retained for policy update. We consider both learning an unseen target task by transferring from previously learned teacher tasks and learning a partially unseen task composed of multiple sub-tasks by transferring from a pre-learned teacher sub-task. In several benchmark experiments, REPAINT significantly reduces the total training time and improves the asymptotic performance compared to training with no prior knowledge and other baselines.
\end{abstract}

\section{Introduction}
\label{sec:intro}
% what is our goal?
In the past few years, deep reinforcement learning (RL) has become more ubiquitous in solving sequential decision-making problems for many real-world applications, such as game playing \citep{openai2019dota,silver2016mastering}, robotics \citep{kober2013reinforcement,openai2018learning}, and autonomous driving \citep{sallab2017deep}. However, most RL methods train an agent from scratch, typically requiring a huge amount of time and computing resources. Accelerating the learning processes for complex tasks has been one of the most challenging problems in RL \citep{kaelbling1996reinforcement, sutton2018reinforcement}. The computational cost of learning grows as the task complexity increases in the real-world applications. Therefore, it is desirable for a learning algorithm to leverage knowledge acquired in one task to improve performance on other tasks.

Transfer learning has achieved significant success in computer vision, natural language processing, and other knowledge engineering areas \citep{pan2009survey}. In transfer learning, the source (teacher) and target (student) tasks are not necessarily drawn from the same distribution \citep{taylor2008transferring}. The unseen target task may be a simple task which is similar to the previously trained tasks, or a complex task with traits borrowed from significantly different source tasks. Despite the prevalence of direct weight transfer, knowledge transfer from pre-trained agents for RL tasks has not been gaining much attention until recently \citep{barreto2019transfer,ma2018universal,schmitt2018kickstarting,lazaric2012transfer,taylor2009transfer}. However, many transfer RL algorithms are designed to select similar tasks or samples from a set of source tasks, or learn representations of source tasks. Hence they perform well only when the target tasks are similar to the source tasks, but are usually not helpful when the task similarity is low or the target tasks are much more complex than the source tasks.

% The goal of a transfer reinforcement learning agent varies as follows: (1) The initial performance of an agent may be improved by learning from a pre-trained policy, (2) the final performance and total accumulated reward of the agent after training may be improved via transfer, (3) the training convergence time or the learning time needed by the agent to achieve a specified performance level may be reduced via knowledge transfer.

%. The goal of the knowledge transfer for deep RL is to leverage a previously learned task to a new, unseen task. The new task may be a simple but \emph{related} previously-trained task or a complex task with traits borrowed from \emph{related} previously-trained tasks. The transfer learning differs from the multi-task learning in the sense that the source and target tasks are not necessarily drawn from the same distribution \cite{taylor2008transferring}. The most important benefit of the transfer learning in solving complex real-world problems is that the training convergence time or the learning time needed by the agent to achieve a specified performance level may be reduced via knowledge transfer. 

% Our Contributions
In this work, we propose an algorithm, i.e., REPresentation And INstance Transfer (REPAINT), to address the aforementioned problem. The algorithm introduces an off-policy instance transfer learning and combines it with an on-policy representation transfer. The main contributions of this paper are as follows. 
(1) We develop an advantage-based experience selection approach in the off-policy instance transfer, which helps improve the sample efficiency by only transferring the useful instances. %We present the discussion on how the experience selection impacts the off-policy training and provide the practical usage of the approach.
(2) The REPAINT algorithm is simple to implement and can be naturally extended to any policy gradient-based RL algorithms. In addition, we also provide two variants of REPAINT for actor-critic RL and an extension to Q-learning. 
(3) We clarify that our REPAINT algorithm exploits the (semantic) relatedness between the source samples and target tasks, instead of the task/sample similarities that most transfer RL methods exploit.
(4) On several transfer learning tasks, we empirically demonstrate that REPAINT significantly reduces the training time needed to achieve certain performance level in generic cases of task similarity. Moreover, when the source tasks are dissimilar to, or sub-tasks of, complex target tasks, REPAINT greatly outperforms other baseline methods in both training-time reduction and asymptotic return scores.

\section{Related Work: Transfer Learning in RL}
% should we also include alternatives of TL on accelerating RL?
% parameter transfer: warm start, task mapping
% representation transfer: reward shaping, NN, successor features, policy distillation
% instance transfer
This section only introduces the related work on transfer learning for RL. We will discuss the connection between our proposed algorithm and other related work in Section~\ref{sec:algorithm}. In transfer learning for RL, most algorithms either assume specific forms of reward functions or perform well only when the teacher and student tasks are similar. Additionally, very few algorithms are designated to actor-critic RL.

Transfer learning algorithms in RL can be characterized by the definition of transferred knowledge, which contains the \textit{parameters} of the RL algorithm, the \textit{representation} of the trained policy, and the \textit{instances} collected from the environment \citep{lazaric2012transfer}. When the teacher and student tasks share the same state-action space and they are considered similar \citep{ferns2004metrics,phillips2006knowledge}, \textit{parameter transfer} is the most straightforward approach, namely, one can initialize the policy or value network in the student tasks by that from teacher tasks \citep{mehta2008transfer,rajendran2015attend}. Parameter transfer with different state-action variables is more complex, where the crucial aspect is to find a suitable mapping from the teacher state-action space to the student state-action space \citep{gupta2017learning,talvitie2007experts,taylor2008autonomous}.

Many transfer learning algorithms fall into the category of \textit{representation transfer}, where the algorithm learns a specific representation of the task or the solution, and the transfer algorithm performs an abstraction process to fit it into the student task. \citet{konidaris2012transfer} uses the \textit{reward shaping} approach to learn a portable shaping function for knowledge transfer, while some other works use neural networks for feature abstraction \citep{duan2016rl,parisotto2015actor,zhang2018decoupling}. \textit{Policy distillation} \citep{rusu2015policy}, or its variants, is another popular choice for learning the teacher task representation, where the student policy aims to mimic the behavior of pre-trained teacher policies during its own learning process \citep{schmitt2018kickstarting,yin2017knowledge}. Recently, \textit{successor representation} has been widely used in transfer RL, in which the rewards are assumed to share some common features, %, e.g., the one-step reward in a set of tasks can be computed as a linear combination of the feature vectors. 
so that the value function can be simply written as a linear combination of the \textit{successor features} (SFs) \citep{barreto2017successor,madarasz2019better}. \citet{barreto2019transfer} extends the method of using SFs and generalised policy improvement in Q-learning \citep{sutton2018reinforcement} to more general environments. \citet{borsa2018universal}, \citet{ma2018universal}, and \citet{schaul2015universal} learn a universal SF approximator for transfer.% in Q-learning or actor-critic algorithms.

The basic idea of \textit{instance transfer} algorithms is that the transfer of teacher samples may improve the learning on student tasks. \citet{lazaric2008transfer} and \citet{tirinzoni2018importance} selectively transfer samples on the basis of the compliance between tasks in a model-free algorithm, while \citet{taylor2008transferring} studies how a model-based algorithm can benefit from samples coming from the teacher task. %In addition, \citet{tirinzoni2018importance} considers importance weighted instance transfer in Q-learning. 

In this work, we propose a representation-instance transfer algorithm to handle the generic cases of task similarity in RL. The algorithm is also naturally fitted for actor-critic framework and can be easily extended to other RL algorithms.

% However, most of the aforementioned algorithms either assume specific forms of reward functions or perform well only when the teacher and student tasks are similar. Additionally, very few algorithms are designated to actor-critic RL. In this work, we propose a representation-instance transfer algorithm to handle the generic cases of task similarity in RL, which is also naturally fitted for actor-critic framework and can be easily extended to other RL algorithms.

\section{Background: Actor-Critic RL}\label{sec:back}
%In this section, we provide the background on fundamental concepts of actor-critic reinforcement learning (RL). In addition, we introduce the policy distillation approach stated in \cite{schmitt2018kickstarting}, where a teacher policy is used in computing the loss during training for knowledge transfer.
A general RL agent interacting with environment can be modeled in a Markov decision process (MDP), which is defined by a tuple $M = (S, A, p, r, \gamma)$, where $S$ and $A$ are sets of states and actions, respectively. The state transfer function $p(\cdot|s, a)$ maps a state and action pair to a probability distribution over states. %, hence defines the probability of transitioning to a state $s'$ upon taking action $a$ at state $s$. 
$r: S\times A\times S \to \mathbb{R}$ denotes the reward function that determines a reward received by the agent for a transition from $(s,a)$ to $s'$. The discount factor, $\gamma\in[0,1]$, provides means to obtain a long-term objective. Specifically, the goal of an RL agent is to learn a policy $\pi$ that maps a state to a probability distribution over actions at each time step $t$, so that $a_t \sim \pi(\cdot | s_t)$ maximizes the accumulated discounted return $\sum_{t\ge0} \gamma^t r(s_t, a_t, s_{t+1})$.

To address this problem, a popular choice to adopt is the model-free actor-critic architecture, e.g., \citet{konda2000actor,degris12:offpolicyAC,mnih2016asynchronous,schulman2015trust,schulman2017proximal}, where the critic estimates the value function and the actor updates the policy distribution in the direction suggested by the critic. 
% The \textit{state value function} at time $t$ is defined as
% $$
%     V^\pi(s)=\mathbb{E}_{a_i\sim\pi(\cdot|s_i)} \left[ \sum_{i\ge t} \gamma^{i-t} r(s_i, a_i, s_{i+1}) \bigg| s_t=s \right]\,.
% $$
% % which can be written with a simplified notation as
% % \begin{equation}
% %     V^\pi(s):=\mathbb{E}_\pi \left[ \sum_{i\ge t} \gamma^{i-t} r_{i+1} \bigg| s_t=s \right]\,.
% % \end{equation}
% Similarly, the \textit{action value function} (also called Q function) is defined by
% $$
%     Q^\pi(s, a)=\mathbb{E}_{a_i\sim\pi(\cdot|s_i)} \left[ \sum_{i\ge t} \gamma^{i-t} r(s_i, a_i, s_{i+1}) \bigg| s, a \right]\,.
% $$
The actor-critic methods usually rely on the \textit{advantage function}, which is computed by
$    A^\pi(s,a) = Q^\pi(s, a)-V^\pi(s)\,,$
where $Q^\pi(s, a):=\mathbb{E}_{a_i\sim\pi(\cdot|s_i)}\left[ \sum_{i\ge t} \gamma^{i-t} r(s_i, a_i, s_{i+1}) | s, a \right]$ is the Q (action value) function and $V^\pi(s):=\mathbb{E}_\pi \left[ \sum_{i\ge t} \gamma^{i-t} r_{i+1} | s_t=s \right]$ is the state value function.

Intuitively, the advantage can be taken as the extra reward that could be obtained by taking a particular action $a$. %The advantage is usually approximated by generalized advantage estimator (GAE) \citep{schulman2015high}, which is the exponentially-weighted average of the $k$-step discounted advantage. %, namely,
% \begin{align}\label{eq:gae}
%     \hat{A}_t=\hat{A}^{\text{GAE}(\gamma,\lambda)}_t := \sum_{l=0}^\infty (\gamma\lambda)^l (r_{t+l} &+\gamma V^\pi(s_{t+l+1}) \nonumber \\
%     & - V^\pi(s_{t+l}))\,,
% \end{align}
% where the parameter $0\le \lambda\le 1$ allows a trade-off of the bias and variance. 
In deep RL, the critic and actor functions are usually parameterized by neural networks. Then the policy gradient methods can be used to update the actor network. For example, in the clipped proximal policy optimization (Clipped PPO) \citep{schulman2017proximal}, the policy's objective function is defined to be the minimum between the standard surrogate objective and an $\epsilon$ clipped objective:
\begin{equation}\label{eq:clipped-ppo}
\footnotesize
    L_\text{clip}(\theta)=\hat{\mathbb{E}}_t \left[ \min\left(\ell_\theta(s_t,a_t)\cdot \hat{A}_t, \text{clip}_\epsilon\left(\ell_\theta(s_t,a_t)\right)\cdot \hat{A}_t\right)\right]\,,
\end{equation}
where the policy $\pi$ is parameterized by $\theta$, $\hat{A}_t$ are the advantage estimates, and $\ell_\theta(\cdot,\cdot)$ is the likelihood ratio that
$$
    \ell_\theta(s_t,a_t) = \frac{\pi_\theta(a_t|s_t)}{\pi_{\theta_\text{old}}(a_t|s_t)}\,.
$$
Additionally, the function $\text{clip}_\epsilon$ truncates $\ell_\theta(\cdot,\cdot)$ to the range of $(1-\epsilon, 1+\epsilon)$.

\section{The REPAINT Algorithm}\label{sec:algorithm}
We now describe our knowledge transfer algorithm, i.e., REPAINT, for actor-critic RL framework, which is provided in Algorithm~\ref{alg:repaint}. %The state value function and policy function are parameterized by $\nu$ and $\theta$, respectively. 
Without loss of generality, we demonstrate the policy update using Clipped PPO, and use a single teacher policy in the knowledge transfer. In practice, it can be directly applied to any policy gradient-based RL algorithms, and it is straightforward to have multiple teacher policies in transfer. More discussion can be found later in this section.%The more general algorithm and several variants are presented in Section \ref{sec:alg}. 

In REPAINT with actor-critic RL, the critic update uses traditional supervised regression, which is exactly the same as Clipped PPO. However, there are two core concepts underlying our actor update, i.e., on-policy representation transfer learning and off-policy instance transfer learning. The on-policy representation transfer employs a policy distillation approach \citep{schmitt2018kickstarting}. %, which corresponds to the replay buffer $\mathcal{S}$ and objective $L_\text{rep}^k$ in Algorithm~\ref{alg:repaint}. 
In the off-policy instance transfer, we update the actor by an off-policy objective $L_\text{ins}$ and using an \emph{advantage-based experience selection} on $\tilde{\mathcal{S}}$, the replay buffer for teacher instances. The proposed experience selection approach is used to select samples that have high \emph{semantic relatedness}, instead of high similarity, to the target task. We defer the discussion to Section~\ref{sec:relatedness}.

% \begin{algorithm}[!t]
%   \caption{REPAINT algorithm with Clipped PPO}
%   \label{alg:transfer-PPO}
% \begin{algorithmic}
%   \FOR{iteration $k=1,2,\ldots$}
%   \IF{$k$ is odd}
%   \STATE Collect trajectories $\mathcal{S}=\{(s,a,s',r)\}$ following $\pi_{\theta_\text{old}}(s)$
%   \STATE Fit state value network $V_\nu$ using $\mathcal{S}$ to update $\nu$
%   \STATE Compute advantage estimates $\hat{A}_1,\ldots,\hat{A}_T$ using \eqref{eq:gae}
%   \STATE Perform gradient optimization on $L_\text{RL}^k(\theta)$ defined in \eqref{eq:rl-loss} \quad // representation transfer
%   \ELSE
%   \STATE Collect trajectories $\mathcal{S}'=\{(s,a,s')\}$ following $\pi_{\text{teacher}}(s)$ \quad\quad\quad\quad   // instance transfer
%   \STATE Compute $r$ for each transition using current reward function and add to $\mathcal{S}'$
%   \STATE Compute advantage estimates $\hat{A}'_1,\ldots,\hat{A}'_{T'}$ using \eqref{eq:gae}
%   \FOR{t=1,\ldots,$T'$}
%     \IF{$\hat{A}'_t<\zeta$} 
%         \STATE Remove $\hat{A}'_t$ and the corresponding transition $(s_t,a_t,s_{t+1},r_t)$ from $\mathcal{S}'$
%     \ENDIF
%   \ENDFOR
%   \STATE Perform gradient optimization on $L_\text{clip}(\theta)$ defined in \eqref{eq:clipped-ppo}
%   \ENDIF
%   \ENDFOR
% \end{algorithmic}
% \end{algorithm}

\subsection{On-policy Representation Transfer: Kickstarting} 
In order to boost the initial performance of an agent, we use the policy distillation approach adopted in a kickstarting training pipeline \citep{schmitt2018kickstarting,rusu2015policy} for on-policy representation transfer. %which can be viewed as a combination of policy distillation \citep{rusu2015policy} and population based training \citep{jaderberg2017population}. 
The main idea is to employ an auxiliary loss function which encourages the student policy to be close to the teacher policy on the trajectories sampled by the student. Given a teacher policy $\pi_\text{teacher}$, we introduce the auxiliary loss as
$
    L_\text{aux}(\theta) = H\left(\pi_\text{teacher}(a|s) \| \pi_\theta(a|s)\right),
$
where $H(\cdot\|\cdot)$ is the cross-entropy. %In order for the agent to maximize its own future rewards, 
Then the policy distillation adds the above loss to the Clipped PPO objective function, i.e., \eqref{eq:clipped-ppo}, weighted at optimization iteration $k$ by the scaling $\beta_k\ge 0$:
\begin{equation}\label{eq:rl-loss}
    L_\text{rep}^k(\theta) = L_\text{clip}(\theta) - \beta_k L_\text{aux}(\theta)\,.
\end{equation}
In our experiments, the weighting parameter $\beta_k$ is relatively large at early iterations, and vanishes as $k$ increases, which is expected to improve the initial performance of the agent while keeping it focused on the current task in later epochs.

% The contribution of the cross-entropy is more clear when we look at the policy gradient. Without loss of generality, consider the surrogate loss without the clipping and assume immediate rewards instead of advantages, then 
% \begin{equation}
%     \nabla_{\theta} L_\text{RL}^k(\theta)|_{\theta_{old}} \approx \mathbb{E}_{\tau \sim \theta_{old}} [\nabla_{\theta} \log P(\tau|\theta) |_{\theta_{old}} R(\tau)] + \mathbb{E}_{\tau \sim \theta_{teacher}} [\nabla_{\theta} \log P(\pi|\theta) |_{\theta_{old}} \beta_k] 
% \end{equation}
% where $\tau$ is a state-action sequence collected by the student policy and $R{\tau}$ is the cumulative reward of the state-action sequence. Note that in the early iterations, we want to use the teacher policy to kickstart the student policy, hence, we keep the beta value higher and the learning loss is guided by the teacher policy. In the later iterations, to enable learning a new, unseen task, we start decaying the beta parameter. Replacing the cumulative immediate reward with the cumulative advantages reduces the bias using the fitted value function. Therefore, the advantages does not have a significant impact initially in the policy distillation stage. However, we can use the advantages to improve the robustness to unrelated experiences to the student task at hand. We achieve that by the advantage-prioritized experience replay sampled by the teacher policy and updated with the advantages from the student policy. 

\begin{algorithm}[!t]
  \caption{REPAINT with Clipped PPO}
  \label{alg:repaint}
\begin{algorithmic}
  \STATE Initialize $\nu$, $\theta$, and load teacher policy $\pi_{\text{teacher}}(\cdot)$
%   \STATE Load teacher policy $\pi_{\text{teacher}}(\cdot)$
  \STATE Set hyper-parameters $\zeta$, $\alpha_1$, $\alpha_2$, and $\beta_k$ in \eqref{eq:rl-loss}
  \FOR{iteration $k=1,2,\ldots$}
  \STATE Set $\theta_\text{old} \leftarrow \theta$
  \STATE Collect samples $\mathcal{S}=\{(s,a,s',r)\}$ using $\pi_{\theta_\text{old}}(\cdot)$
  \STATE Collect samples $\tilde{\mathcal{S}}=\{(\tilde{s},\tilde{a},\tilde{s}',\tilde{r})\}$ using $\pi_{\text{teacher}}(\cdot)$
%   \STATE Compute $r$ for each transition using current reward function and add to $\mathcal{S}$ and $\mathcal{S}'$, respectively
  \STATE Fit state-value network $V_\nu$ using only $\mathcal{S}$ to update $\nu$
  \STATE Compute advantage estimates $\hat{A}_1,\ldots,\hat{A}_T$ for $\mathcal{S}$ and $\hat{A}'_1,\ldots,\hat{A}'_{T'}$ for $\tilde{\mathcal{S}}$ %by GAE %\eqref{eq:gae}
  \FOR[\quad\quad// \emph{experience selection}]{t=1,\ldots,$T'$}
    \IF{$\hat{A}'_t<\zeta$} 
        \STATE Remove $\hat{A}'_t$ and the corresponding transition $(\tilde{s}_t,\tilde{a}_t,\tilde{s}_{t+1},\tilde{r}_t)$ from $\tilde{\mathcal{S}}$
    \ENDIF
  \ENDFOR
  \STATE Compute sample gradient of $L_\text{rep}^k(\theta)$ in \eqref{eq:rl-loss} using $\mathcal{S}$
  \STATE Compute sample gradient of $L_\text{ins}(\theta)$ in \eqref{eq:obj-ins} using $\tilde{\mathcal{S}}$
  \STATE Update policy network by $$\theta \leftarrow \theta + \alpha_1\nabla_\theta L_\text{rep}^k(\theta) + \alpha_2\nabla_\theta L_\text{ins}(\theta)$$ 
  \ENDFOR
\end{algorithmic}
\end{algorithm}

\subsection{Off-policy Instance Transfer: Advantage-based Experience Selection} 
Note that the kickstarting aims to replicate the behavior of teacher policy in the early training stage, so that it can improve the agent's initial performance. However, when the target task is very different from the source task, kickstarting usually does not lead to much improvement. To address this, we now propose the off-policy instance transfer with an approach called advantage-based experience selection.

In the off-policy instance transfer, we form a replay buffer $\tilde{\mathcal{S}}$ by collecting training samples following the teacher policy $\pi_\text{teacher}$, but compute the rewards using current reward function from the target task. Since the samples are obtained from a different distribution, we do not use those samples to update the state value (critic) network. In order to improve the sample efficiency, when updating the policy (actor) network, we select the transitions based on the advantage values and only use the samples that have advantages greater than a given threshold $\zeta$. Moreover, since the teacher policy has been used in collecting roll-outs, we compute the objective without the auxiliary cross-entropy loss, but replace $\pi_{\theta_\text{old}}$ with $\pi_\text{teacher}$ in \eqref{eq:clipped-ppo} for off-policy learning, which leads to the following objective function:
\begin{equation}\label{eq:obj-ins}
\footnotesize
    L_\text{ins}(\theta)=\hat{\mathbb{E}}_t \left[ \min\left(\rho_\theta(s_t,a_t)\cdot \hat{A}'_t, \text{clip}_\epsilon\left(\rho_\theta(s_t,a_t)\right)\cdot \hat{A}'_t\right)\right]\,,
\end{equation}
where $\rho_\theta$ now is given by
$
    \rho_\theta(s_t,a_t)=\frac{\pi_\theta(a_t|s_t)}{\pi_\text{teacher}(a_t|s_t)}\,.
$
The idea of advantage-based experience selection is simple but effective. As mentioned before, the advantage can be viewed as the extra reward that could be obtained by taking a particular action. Therefore, since the advantages are computed under the reward function of the target task, the state-action transitions with high advantage values can be viewed as ``good'' samples for transfer, no matter how different the source and target tasks are. By retaining only the good samples from replay buffer $\tilde{\mathcal{S}}$, the agent can focus on learning useful behavior for current task, which as a result can improve the sample efficiency in knowledge transfer.

\vspace{-0.6em}
\paragraph{Related work on experience selection.}
We note that although the experience selection approach has not been used in knowledge transfer before, it is related to the prioritized experience replay (PER) \citep{schaul2015prioritized}, which prioritizes the transitions in replay buffer by the temporal-difference (TD) error, and utilizes importance sampling for the off-policy evaluation. In comparison, our method uses experience selection in the actor-critic framework, where the replay buffer is erased after each training iteration. Unlike the stochastic prioritization which imposes the probability of sampling, our method directly filters out most of the transitions from replay buffer and equally prioritizes the remaining data, which further improves the sample efficiency. In addition, our method performs policy update without importance sampling. All transitions in a batch have the same weight, since the importance can be reflected by their advantage values. The empirical comparison among PER and several experience selection approaches can be found in Section~\ref{sec:exp-mujoco}.

As another related work, Self-Imitation Learning (SIL) \citep{oh2018self} provides an off-policy actor-critic algorithm that learns to reproduce the agent's past good decisions. It selects past experience based on the gap between the off-policy Monte-Carlo return and the agent's value estimate, and only uses samples with positive gaps to update both the actor and the critic. Similar approach can also be seen in Q-filter \citep{nair2018overcoming}. In comparison, other than that REPAINT uses an advantage-based experience selection instead of the return gap, our method introduces a more general threshold $\zeta$, while SIL fixes it to be zero. Moreover, the positive-gap samples are also used to update the critic in SIL, but we only fit the state values by on-policy data for following the correct distribution. Motivated by the justification that SIL is equivalent to a lower-bound soft-Q-learning, we also present the extension of REPAINT with Q-learning in this paper, which is given in Section~\ref{sec:alg}.% of appendix.

%Most recently, the concept of advantage-based prioritizing has also been used in offline RL \citep{peng2019advantage,siegel2020keep,wang2020critic}, in order for better leveraging the off-policy data. In offline RL, the ``advantage weighting'' can be regarded as a regularization to either critic or actor, where the model can focus on learning good actions while ignoring poor ones. 
Again, we want to remark that our proposed advantage-based experience selection has a different formulation with the existing methods. Moreover, to the best of our knowledge, it is the first time that the advantage-based filtering has been applied to the knowledge transfer for RL. %In Figure~\ref{fig:reacher}(b), we will compare the performance of several prioritization rules. The discussion is presented in Section~\ref{sec:exp-mujoco}.

\subsection{Discussion and Extension}
In regards to the proposed REPAINT algorithm, we want to remark that the policy distillation weight $\beta_k$ and advantage filtering threshold $\zeta$ are task specific. They are dependent with the one-step rewards. To this end, one can consider to normalize reward functions in practice, so that the one-step rewards are in the same scale. In general, larger $\beta_k$ encourages the agent to better match the teacher policy, and larger $\zeta$ leads to that fewer samples are kept for policy update, which in result makes current learning concentrate more on the high-advantage experience. The empirical investigations on these two parameters can be found later.

So far, we have demonstrated the REPAINT algorithm using only a single teacher policy in the knowledge transfer and the objective function from Clipped PPO. Indeed, it is straightforward to using multiple teacher policies, or even using different teacher policies for representation transfer and instance transfer. In addition, REPAINT can be directly applied to any policy gradient-based RL algorithms, such as A2C \cite{sutton2000policy}, A3C \cite{mnih2016asynchronous}, TRPO \cite{schulman2015trust} and REINFORCE \cite{williams1992simple}. In Section~\ref{sec:alg}, we will provide the discussion in more details. We also present an extension of the REPAINT algorithm with Q-learning there.

\section{Theoretical Justification and Analysis Related to REPAINT}
This section introduces some theoretical results and justifications that are related to the proposed algorithm, from which we hope to clarify REPAINT in greater depth. The detailed discussion of Theorem~\ref{thm:convergence} and Theorem~\ref{thm:rate} can be found in Sections~\ref{sec:conv-appendix} and \ref{sec:rate-appendix} from appendix, respectively.

% \subsection{Convergence of Off-policy Instance Transfer}
\subsection{Convergence Results}
We first discuss the convergence of REPAINT without any experience selection approach, and then consider how the experience selection from teacher policy impacts the policy update. 
To simplify the illustration without loss of generality, we consider the objective for the actor-critic to be 
$J_\text{RL}(\theta)=\hat{\mathbb{E}}_t[Q^{\pi_\theta}(s,a)\log\nabla\pi_\theta(a|s)]$
instead of $L_\text{clip}(\theta)$, and modify the objectives for representation transfer and instance transfer accordingly (denoted by $J_\text{rep}$ and $J_\text{ins}$).
The convergence of representation transfer can be easily obtained, as it is equivalent to the convergence of other actor-critic methods. %So we only consider the convergence of the off-policy instance transfer here.
Our instance transfer learning fits into the framework of the off-policy actor-critic \citep{degris12:offpolicyAC,zhang2019generalized}. 
% Indeed, ignore the clip function in \eqref{eq:obj-ins} for simplicity, one can get the sample gradient of \eqref{eq:obj-ins} is the empirical estimate of the following policy gradient:
% \begin{equation}\label{eq:ins-grad}
%     \nabla_{\theta} L_\text{ins}(\theta) = \mathbb{E}_{\substack{s\sim d_\text{teacher}\\a\sim\pi_\text{teacher}}} \left[\rho_\theta(s, a) \nabla_{\theta} \log \pi_\theta(a|s) A^{\pi_\theta}\right]\,,
% \end{equation}
% where $d_\text{teacher}$ is the limiting distribution of states under $\pi_\text{teacher}$, and the advantage estimates are computed based on the student policy $\pi_\theta$. 
Following %\citet{bhatnagar2009natural} and \citet{degris12:offpolicyAC}
\citet{holzleitner2020convergence}, under certain commonly used assumptions, we can prove the convergence of the off-policy instance transfer. 

\begin{theorem}\label{thm:convergence}
Suppose the critic is updated by TD residuals, and the actor $\pi_\theta$ is updated based on the objective $J_\text{ins}(\theta)$. Fix a starting point $(\theta_0, \nu_0)$ and construct the associated neighborhood $W_0\times U_0$ as in Section~\ref{sec:conv-appendix}. Assume the loss functions satisfy Assumptions~\ref{ass:smooth1}-\ref{ass:smooth3}, and the learning rates for critic and actor satisfy Assumption~\ref{ass:lr}. Then $(\theta_n, \nu_n)$ converges to a local optimum almost surely as $n\to\infty$ via online stochastic gradient descent (SGD).
\end{theorem}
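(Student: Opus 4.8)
The plan is to cast the coupled critic/actor iteration as a stochastic approximation recursion — two-timescale, or the single combined recursion used by \citet{holzleitner2020convergence} — and to check that the hypotheses of the corresponding convergence theorem are met for our loss functions. Concretely, I would stack the parameters as $z_n=(\theta_n,\nu_n)$ and rewrite the update of Algorithm~\ref{alg:repaint}, with $L_\text{clip}$ replaced by $J_\text{RL}$ and $L_\text{ins}$ by $J_\text{ins}$, in the canonical form $z_{n+1}=z_n+\alpha_n\big(h(z_n)+M_{n+1}+\varepsilon_n\big)$, where $h$ is the mean field, $M_{n+1}$ is a martingale-difference term for the natural filtration, and $\varepsilon_n$ is a bias term coming from the critic's tracking error (and, in the single-timescale reading, from the finite-sample advantage estimate). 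The first step is to identify the equilibria of $h$: for the critic, Assumptions~\ref{ass:smooth1}--\ref{ass:smooth3} together with on-policy sampling of $\mathcal{S}$ make the TD operator a contraction, so for each fixed $\theta$ there is a unique asymptotically stable point $\nu^\star(\theta)$ (the projected-Bellman fixed point) and $\theta\mapsto\nu^\star(\theta)$ is Lipschitz; for the actor, I would invoke the off-policy policy-gradient identity of \citet{degris12:offpolicyAC,zhang2019generalized} to show that, with $\nu$ replaced by $\nu^\star(\theta)$, the actor's mean field is the gradient $\nabla_\theta\bar J_\text{ins}(\theta)$ of a smooth surrogate $\bar J_\text{ins}$ whose stationary set on $W_0$ is exactly the local optimum captured by the neighborhood construction of Section~\ref{sec:conv-appendix}.

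With the limiting ODEs in hand, the second block of steps is the routine stochastic-approximation verification. (i) Assumption~\ref{ass:lr} provides the Robbins--Monro conditions $\sum_n\alpha_n=\infty$, $\sum_n\alpha_n^2<\infty$ and, in the two-timescale version, the ratio condition separating the critic and actor rates. (ii) Assumptions~\ref{ass:smooth1}--\ref{ass:smooth3} give $L$-smoothness and at-most-linear growth of the losses and their gradients, hence Lipschitz continuity and linear growth of $h$. (iii) The conditional second moment of $M_{n+1}$ is bounded by $K(1+\|z_n\|^2)$; this is where one needs the importance ratio $\rho_\theta=\pi_\theta/\pi_\text{teacher}$ appearing in $J_\text{ins}$ to be uniformly bounded on $W_0$, which follows from the neighborhood construction together with the assumed coverage of $\pi_\text{teacher}$. (iv) The iterates stay in $W_0\times U_0$ almost surely: I would prove this by a stopping-time argument using the local Lyapunov function $V(z)=\bar J_\text{ins}(\theta)+\tfrac12\|\nu-\nu^\star(\theta)\|^2$ and showing $V(z_n)$ is, up to a summable perturbation, a supermartingale, so the escape probability is zero. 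Assembling (i)--(iv), the two-timescale theorem used by \citet{holzleitner2020convergence} — critic tracks $\nu^\star(\theta_n)$, actor follows $\dot\theta=\nabla_\theta\bar J_\text{ins}(\theta)$ — yields $z_n\to$ a stable equilibrium a.s., i.e. the local optimum.

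I expect the main obstacle to be the off-policy nature of $J_\text{ins}$: the actor gradient is estimated from roll-outs of $\pi_\text{teacher}$ rather than $\pi_\theta$, so one must control both the bias and the variance contributed by $\rho_\theta$, which forces the explicit bound on this ratio over $W_0$ and is the reason the statement is local rather than global; and one must ensure that the advantage estimates feeding the actor, though built from a critic trained only on on-policy data $\mathcal{S}$, are consistent for $A^{\pi_\theta}$, so that the actor's mean field is genuinely a gradient field — otherwise the conclusion weakens from ``local optimum'' to a biased stationary point. A secondary technical point is the joint localization in step (iv): the Lyapunov argument has to handle the coupled events of $\theta_n$ leaving $W_0$ and $\nu_n$ leaving $U_0$ simultaneously, which I would resolve by taking $U_0$ large relative to the Lipschitz constant of $\nu^\star(\cdot)$ and shrinking $W_0$ accordingly, exactly as in the neighborhood construction referenced in Section~\ref{sec:conv-appendix}. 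The remaining work is a transcription of the actor-critic convergence proof of \citet{holzleitner2020convergence} to the instance-transfer objective, and is essentially mechanical once the three obstacles above are settled.
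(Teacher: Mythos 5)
Your proposal follows essentially the same route as the paper: the paper's own argument is precisely a verification that the off-policy instance-transfer recursion satisfies conditions (A1)--(A7) of \citet{holzleitner2020convergence} (smoothness and isolated-minima assumptions giving a unique local optimum in $W_0\times U_0$, the two-timescale learning-rate condition, continuity of the transition kernels, and online SGD), followed by an appeal to the two-timescale stochastic-approximation theorem of \citet{karmakar2018two}. Your write-up fills in several steps the paper leaves implicit (the martingale-noise bound via the bounded importance ratio, and the Lyapunov/stopping-time localization), but the decomposition and the key citation are the same, so the proposal is correct and consistent with the paper's proof.
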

% \vspace{-.6em}
In addition, we also want to show the convergence rate for REPAINT, if a good approximation for the critic is available. Again, we assume that the the experience selection approach is not used. We also assume the learning rates $\alpha_1$ and $\alpha_2$ are iteration-dependent (denoted by $\alpha_{1,k}$ and $\alpha_{2,k}$).
Let $K_{\epsilon}$ be the smallest number of updates $k$ required to attain a function gradient smaller than $\epsilon$, i.e.,
$$K_{\epsilon} = \min \{k : \underset{0\leq m\leq k}{\inf} \mathcal{F}(\theta_m) < \epsilon \},$$
with $A_k := \alpha_{2, k}/\alpha_{1, k}$ and
\begin{equation*}
\begin{aligned}
    \mathcal{F}(\theta_m) = &\norm{\nabla J_\text{rep} (\theta_m)}^2 + A_k \norm{\nabla J_\text{ins} (\theta_m)}^2 \\ 
    &+ (1+A_k) \nabla J_\text{rep}^T (\theta_m) \nabla J_\text{ins} (\theta_m).
\end{aligned}
\end{equation*}

Note that the hyper-parameter $A_k$ can be determined by how much one wants to learn from instance transfer against the representation transfer. If $A$ is set to be 1, then we can get $\mathcal{F}(\theta_m)=\|\nabla J_\text{rep} (\theta_m) + \nabla J_\text{ins} (\theta_m)\|^2$.

\begin{theorem}\label{thm:rate}
Suppose the learning rate for representation transfer satisfies $\alpha_{1, k} = k^{-a}$ for $a>0$, and the critic update satisfies Assumption~\ref{assm:criticrate}. %The instance transfer step size satisfies $\alpha_{2, k} = A \alpha_{1, k}$ for some $A \in \mathbb{R}$. 
When the critic bias converges to zero as $\mathcal{O}(k^{-b})$ for some $b\in (0, 1]$, we can find an integer $T(b,k)$ such that $T(b,k)$ critic updates occur per actor update.
%When the critic bias converges to null as $\mathcal{O}(k^{-1})$ ($b=1$), then $T_C(k)=k+1$ critic updates occur per actor update. Alternatively, if the critic bias converges to null more slowly $b\in (0, 1)$ in Assumption~\ref{assm:criticrate}, then $T_C(k)=k$ critic updates per actor update are chosen. 
Then the actor sequence %defined in Algorithm~\ref{alg:repaint} 
satisfies
\begin{equation*}
    K_{\epsilon} \leq \mathcal{O} (\epsilon^{-1/l}),\text{ where } l = \min\{a, 1-a, b\}.
\end{equation*}
% where $l = \min{a, 1-a, b}$. The resulting sample complexity depends on the attenuation $b$ of the critic bias as
% \begin{equation}
%     K_{\epsilon} \leq \left\{
%     \begin{array}{cc}
%         \mathcal{O} (\epsilon^{-1/b}) & b\in (0, 1/2) \\
%         \mathcal{O} (\epsilon^{-2}) & b\in (1/2, 1] 
%     \end{array}   
%     \right.
% \end{equation}
\end{theorem}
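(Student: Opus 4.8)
\emph{Proof plan.} The plan is to reduce the statement to a standard nonconvex stochastic‑approximation argument on the combined objective $J:=J_\text{rep}+J_\text{ins}$. Writing the actor update of Algorithm~\ref{alg:repaint} with iteration‑dependent rates as $\theta_{k+1}=\theta_k+\alpha_{1,k}\bigl(\widehat{\nabla J}_\text{rep}(\theta_k)+A_k\,\widehat{\nabla J}_\text{ins}(\theta_k)\bigr)$, where $A_k=\alpha_{2,k}/\alpha_{1,k}$ and the hats denote the stochastic estimates built from the (biased) critic, one checks by direct expansion that the exact search direction $\nabla J_\text{rep}+A_k\nabla J_\text{ins}$ and the full gradient $\nabla J=\nabla J_\text{rep}+\nabla J_\text{ins}$ satisfy $\langle \nabla J(\theta_k),\,\nabla J_\text{rep}(\theta_k)+A_k\nabla J_\text{ins}(\theta_k)\rangle=\mathcal{F}(\theta_k)$. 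Thus $\mathcal{F}$ is exactly the per‑step correlation between the update direction and $\nabla J$, i.e.\ the quantity that a descent/ascent inequality for $J$ naturally produces; this is why driving $\mathcal{F}$ below $\epsilon$ is the right stationarity target.

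Next I would establish the one‑step progress bound. Using $L$‑smoothness of $J_\text{rep}$ and $J_\text{ins}$ (Assumptions~\ref{ass:smooth1}--\ref{ass:smooth3}), a second‑order Taylor expansion along the update, followed by taking conditional expectations and invoking the uniform bounds on gradients and noise variances, gives
\[
\mathbb{E}_k\!\left[J(\theta_{k+1})\right]\;\ge\; J(\theta_k)+\alpha_{1,k}\,\mathcal{F}(\theta_k)-C_1\,\alpha_{1,k}^2-\alpha_{1,k}\,\delta_k ,
\]
where $\mathbb{E}_k$ conditions on the history through step $k$, $C_1$ collects the smoothness constant, the gradient/variance bounds, and $\sup_k A_k$, and $\delta_k$ is the error caused by the inexact critic at actor iteration $k$. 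By Assumption~\ref{assm:criticrate}, running $T(b,k)$ warm‑started critic updates per actor update forces the critic bias at iteration $k$ to be of order $k^{-b}$, so $\delta_k=\mathcal{O}(k^{-b})$. Summing from $k=1$ to $K$, using that $J$ is bounded above, and dividing by $\sum_{k\le K}\alpha_{1,k}$ yields
\[
\min_{1\le m\le K}\mathbb{E}\!\left[\mathcal{F}(\theta_m)\right]\;\le\;\frac{C_0+C_1\sum_{k=1}^{K}\alpha_{1,k}^2+C_2\sum_{k=1}^{K}\alpha_{1,k}\,\delta_k}{\sum_{k=1}^{K}\alpha_{1,k}} .
\]

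With $\alpha_{1,k}=k^{-a}$ (and necessarily $a\in(0,1)$, so that $\sum_k\alpha_{1,k}$ diverges and the bound is nontrivial), the relevant partial sums behave as $\sum_{k\le K}k^{-a}\asymp K^{1-a}$, $\sum_{k\le K}k^{-2a}\asymp\max\{1,K^{1-2a}\}$, and $\sum_{k\le K}k^{-a-b}\asymp\max\{1,K^{1-a-b}\}$. Substituting, the right‑hand side is $\mathcal{O}\!\bigl(K^{-(1-a)}\bigr)+\mathcal{O}\!\bigl(K^{-a}\bigr)+\mathcal{O}\!\bigl(K^{-b}\bigr)=\mathcal{O}\!\bigl(K^{-\min\{a,1-a,b\}}\bigr)$, the first term coming from the drift, the second from the step‑size noise, and the third from the critic bias. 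Setting this expression equal to $\epsilon$ and solving for $K$ gives $K_\epsilon\le\mathcal{O}\!\bigl(\epsilon^{-1/l}\bigr)$ with $l=\min\{a,1-a,b\}$, which is the claim.

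The hard part will be the two places where the argument is not pure bookkeeping. First, the coupling of the two timescales: I must verify that the inner‑loop count $T(b,k)$ guaranteeing $\delta_k=\mathcal{O}(k^{-b})$ is finite for every $k$ and, crucially, that using it does not inflate $C_1$ or $C_2$ in a $K$‑dependent way — this is precisely what Assumption~\ref{assm:criticrate} (a contraction/accuracy guarantee for the warm‑started critic recursion, in the spirit of Theorem~\ref{thm:convergence}) is meant to supply. Second, the telescoping only gives a useful bound because it controls the cumulative increase of $J$, so one needs $\mathcal{F}(\theta_k)$ to be eventually non‑negative along the trajectory; this is automatic when $A_k\equiv1$, since then $\mathcal{F}=\|\nabla J_\text{rep}+\nabla J_\text{ins}\|^2$, and for general bounded $A_k$ it follows by bounding the cross term $\nabla J_\text{rep}^{\!\top}\nabla J_\text{ins}$ with the same smoothness and boundedness hypotheses. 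Everything else is the routine summation of power‑law step sizes.
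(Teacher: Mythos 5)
Your proposal is correct and follows essentially the same route as the paper's proof in Section~\ref{sec:rate-appendix}: an approximate ascent/descent inequality for $J_\text{rep}+J_\text{ins}$ obtained from smoothness and the bounded-variance and critic-bias assumptions, followed by telescoping, division by the accumulated step size, the standard power-law sum estimates, and inversion via the definition of $K_\epsilon$ (your auxiliary quantity $\delta_k$ plays the role of the paper's $\mathbb{E}\left[\norm{\xi_k-\xi_*}\right]$ term, and your direct telescoping of $J$ is equivalent to the paper's bookkeeping sequence $W_k$). Your closing worry about needing $\mathcal{F}(\theta_k)\ge 0$ is unnecessary: the final step only uses that $\mathcal{F}(\theta_m)\ge\epsilon$ for all $m< K_\epsilon$, which holds by the definition of $K_\epsilon$, so no sign condition on the cross term is required.
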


Despite greatly improving the sample efficiency, the experience selection introduces bias by changing the distribution in an uncontrolled fashion. In practice, to mitigate it, we can adopt REPAINT in the early training stage, and then reduce to traditional actor-critic algorithms. As a consequence, the agent first learns useful teacher behavior to achieve good initial performance, and focuses on the target task afterwards.

\subsection{Semantic Relatedness \textit{vs.} Task Similarity}\label{sec:relatedness}
Most transfer learning algorithms for RL are built based on the \textit{similarity} between source and target tasks, and perform well only when the tasks are similar. %Hence, a critical question for those algorithms is how to characterize the task (sample) similarity. 
If samples from two MDPs are given, some metrics can be defined to compute or learn the task similarity, e.g., the Kantorovich distance-based metric \citep{song2016measuring}, the restricted Boltzmann machine distance measure \citep{ammar2014automated}, the policy overlap \citep{carroll2005task}, and the task compliance and sample relevance \citep{lazaric2008transfer}. In general, the similarity is usually unknown before getting any samples, unless other information is given. For example, the methods using successor features \citep{barreto2019transfer,borsa2018universal,schaul2015universal} assume that the reward functions among tasks are a linear combination of some common features, %and they only differ in the feature weights, 
namely, $r(s,a,s')=\sum_i w_i \phi_i(s,a,s')$ with fixed $\phi_i$'s. Then the similarity can be characterized by the distance of the weight vectors. 

In this paper, we aim to show that REPAINT handles generic cases of task similarity. Therefore, we do not use any similarity information during transfer. %Defining a new similarity metric is also not a goal of this work. 
Instead, REPAINT falls into exploiting the \textit{relatedness} for knowledge transfer, a concept that has been used in some other machine learning areas, e.g., multi-task learning \citep{Caruana:1997} and meta-learning \citep{achille2019task2vec}.
The difference between relatedness and similarity in transfer learning can be analogous to the difference between semantic relatedness and lexicographical similarity in languages. More specifically, no matter how different the source and target tasks are, we can always select related samples from the source tasks that are useful for learning the target tasks. In REPAINT, the relatedness is just defined by the advantage values of source samples under target reward function and state values. Moreover, the cross-entropy weights $\beta_k$ and the experience selection threshold $\zeta$ are used to control the contribution from the source task. We will compare REPAINT against a similarity-based transfer learning algorithm \cite{lazaric2008transfer} in Section~\ref{sec:exp-mujoco}.

% Our approach falls into exploiting the relatedness implicitly as shown in multi-task learning \citep{Caruana:1997} and model compression \citep{Bucilu:2006}. We assume that the teacher and student tasks are related if (1) the algorithm learns the student task faster when given the teacher policy, and (2) there are no modifications to the policy training algorithm that allows it to learn the student task better when the teacher policy is not given. Unfortunately, it is difficult to prove the second condition in general, which requires a task-specific parameter optimization. Therefore, we focus on developing methods to ignore the knowledge from the teacher task when it becomes unrelated. We use a weight $\beta_k$ in the representation transfer and a threshold $\zeta$ in the instance transfer to control the contribution from the teacher policy.

% \vspace{-0.3em}
\section{Experiments}
% \textcolor{red}{[We do not assume that the teacher and student tasks are similar or drawn from the same distribution.]}

According to \citet{taylor2009transfer}, the \emph{performance} of transfer learning (TL) can be measured by: (1) the improvement of the agent's initial performance when learning from a pre-trained policy; (2) the improvement of final performance and total accumulated reward after transfer; and (3) the reduction of training convergence time or the learning time needed by the agent to achieve a specified performance level. In this paper, we are particularly interested in the last metric, i.e., the training time reduction, since one cannot always expect the return score improvement, especially when the source tasks are very different from the target tasks.

In this section, we conduct experiments for answering following questions. (1) When the source (teacher) tasks are similar to the target (student) tasks, it is expected that most TL methods perform well. Can REPAINT also achieve good TL performance? (2) When the task similarity is low, can REPAINT still reduce the training time of the target tasks? (3) When the source tasks are only sub-tasks of the complex target tasks, is REPAINT still helpful? (4) Are both on-policy representation transfer and off-policy instance transfer necessary for REPAINT? (5) How do other experience selection (prioritization) approaches perform on REPAINT? (6) How do the hyper-parameters $\beta_k$ and $\zeta$ change the TL performance? Is REPAINT robust to them?

A metric to quantify the task similarity level is required for answering those questions. For simplicity, we assume in the experiments that the state and action spaces stay the same between teacher and student tasks, and the reward functions have the form of linear combination of some common features. Then we use the cosine distance function to define the task similarity, namely, the similarity between two tasks with reward functions $r_1(s,a,s')=\bm{\phi}(s,a,s')^\top \bm{w}_1$ and $r_2(s,a,s')=\bm{\phi}(s,a,s')^\top \bm{w}_2$ can be computed as
\begin{equation}\label{eq:similarity}
    \text{sim}(r_1,r_2) = \frac{\bm{w}_1 \cdot \bm{w}_2}{\|\bm{w}_1\|\|\bm{w}_2\|}\,.
\end{equation}
We say the two tasks are \textit{similar} if $\text{sim}(r_1,r_2)>0$. Otherwise ($\le 0$), they are considered to be \textit{different} (\textit{dissimilar}). In addition, if some feature weights are zero in a reward function, the corresponding task can be viewed as a \textit{sub-task} of other tasks that have non-zero feature weights.

\subsection{Experimental Setup}
To assess the REPAINT algorithm, we use three platforms across multiple benchmark tasks with increasing complexity for experiments, i.e., Reacher and Ant environments in MuJoCo simulator \citep{todorov2016mujoco}, single-car and multi-car racings in AWS DeepRacer simulator \citep{balaji2019deepracer}, and \textit{BuildMarines} and \textit{FindAndDefeatZerglings} mini-games in StarCraft II environments \cite{vinyals2017starcraft}. More detailed descriptions of the environments are given in Section~\ref{sec:env}. The first four questions mentioned above will be answered across all environments. We use simpler environments, i.e., MuJoCo-Reacher and DeepRacer single-car, to answer the last two questions, as it is easier to interpret the results without extra complexity and environment noise.

In order to compare the performance of REPAINT with other methods and demonstrate that REPAINT improves sample efficiency during transfer, we should guarantee that REPAINT does not use more samples for transfer in each iteration. Therefore, we employ an alternating REPAINT with Clipped PPO in experiments, where we adopt on-policy representation transfer and off-policy instance transfer alternately on odd and even numbered iterations. The algorithm is presented in Section~\ref{sec:alg} (Algorithm~\ref{alg:transfer-PPO}). The study of different alternating ratios has also been provided in Section~\ref{sec:exp-ratio}. In addition, one can find in Section~\ref{sec:env} the hyper-parameters we used for reproducing our results.

% We compare the performance of REPAINT with training from scratch and training with only kickstarting or instance transfer. Note that since the value network cannot be updated in the instance transfer iteration, we implement training with only instance transfer by alternately applying Clipped PPO and instance transfer (namely, setting $\beta_k=0$ in Algorithm \ref{alg:transfer-PPO}). For discrete control tasks, we use the AWS DeepRacer simulator \citep{balaji2019deepracer} to evaluate different algorithms on complex tasks, such as single-car time-trial race and multi-car racing against bot cars. The detail of our experimental setup is presented in Section~\ref{sec:env}.

%We consider three different types of task similarities in the experiments based on the cosine similarity \eqref{eq:similarity}, i.e., the teacher (source) task is a similar task, different task, or a sub-task of the target task (some feature weights are zero in the reward function). We also evaluate the effectiveness of using different teacher policies, cross-entropy weights, and advantage thresholds. Moreover, several prioritization (filtering) approaches in the instance transfer are investigated. More extensive results and discussion on the properties of REPAINT are presented in Section~\ref{sec:more-exp} for completeness.

\begin{figure}[!t]
    \centering
    \includegraphics[width=.9\linewidth]{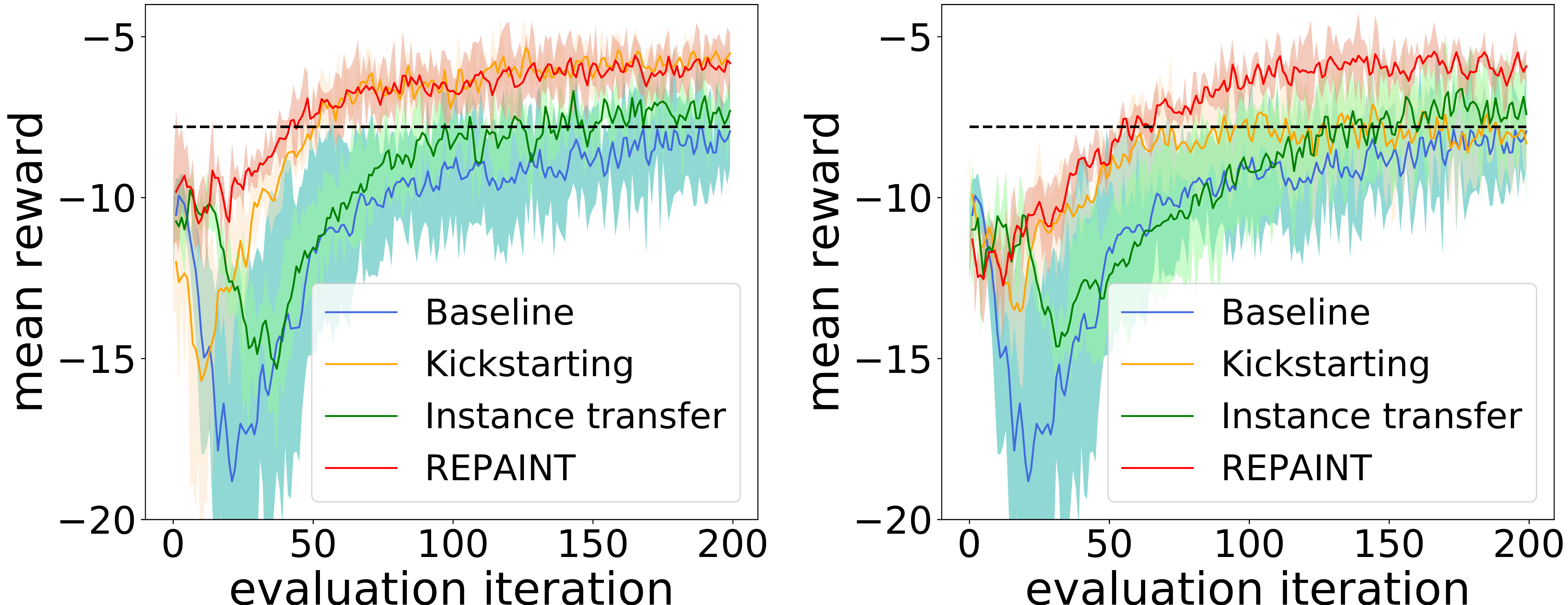}
    \caption{Evaluation performance for MuJoCo-Reacher, averaged across five runs. We consider both teacher task is similar to (left) and different from (right) the target task.}
    \label{fig:reacher1}
\end{figure}

\begin{figure}[!t]
% \vspace{-0.4em}
    \centering
    \includegraphics[width=.9\linewidth]{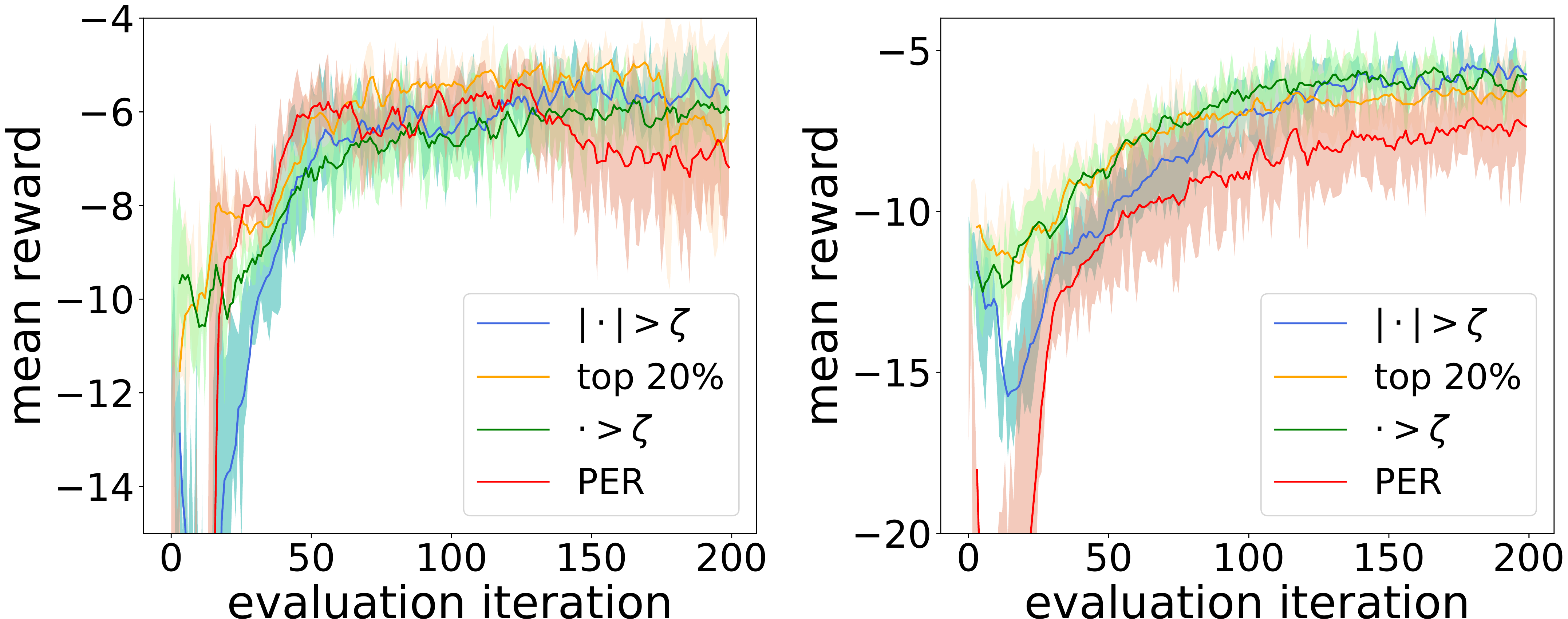}
    \caption{Comparison of different experience selection (prioritization) approaches. Left: similar tasks. Right: dissimilar tasks.}
    \label{fig:reacher2}
% \vspace{-.2em}
\end{figure}

\subsection{Continuous Action Control in MuJoCo}\label{sec:exp-mujoco}
\paragraph{MuJoCo-Reacher.} In the target task, the agent is rewarded by getting close to the goal point with less movement. As an ablation study, we first compare REPAINT against training with only kickstarting or instance transfer and with no prior knowledge (baseline), based on two teacher tasks. The first teacher policy is trained with similar reward function but a higher weight on the movement penalty, where we set it to be 3 as an example, so that the cosine similarity is positive. Another teacher policy is trained in a dissimilar task, where the agent is penalized when it is close to the goal. In this case, the cosine similarity is zero. After each training iteration, we evaluate the policy for another 20 episodes. The evaluation performance is presented in Figure~\ref{fig:reacher1}\footnote{The black dashed lines in here and other figures indicate the best return score of training from scratch (baseline), by which one can see how fast each method achieves a certain performance level. More quantitative results can be found in Table~\ref{table:summary}.}. REPAINT outperforms baseline algorithm and instance transfer in both cases of task similarity, regarding the training time reduction, asymptotic performance, and the initial performance boosting. Although kickstarting can improve the initial performance, it has no performance gain in convergence when the teacher behavior is opposed to the expected target behavior (see right sub-figure). In contrast, although the instance transfer does not boost the initial performance, it surpasses the baseline performance asymptotically in both cases. %We can also observe that introducing a teacher policy can significantly reduce the performance variance across multiple runs, because the exploration in training is replaced by the teacher guidance in the instance transfer steps.

We also compare the performance of several experience selection rules in REPAINT, including high absolute values ($|\cdot|>\zeta$), top 20\% of transitions in ranking (top 20\%), our proposed rule ($\cdot >\zeta$), and PER \citep{schaul2015prioritized}. For PER, we used the advantage estimates to compute the prioritization instead of TD errors for a fair comparison, and slightly tuned the hyper-parameters. From Figure~\ref{fig:reacher2}, we can observe that the proposed selection rule and top 20\% rule perform better than others on the initial performance, where only the most related samples are selected for policy update. Moreover, PER does not work as well as other approaches, especially when the task similarity is low, since it includes low-advantage teacher samples which have no merits for the student policy to learn. Therefore, we suggest to use the proposed selection rule with a threshold $\zeta$ or the ranking-based rule with a percentage threshold. %in the off-policy instance transfer.

\begin{figure}[!t]
    \centering
    \includegraphics[width=.9\linewidth]{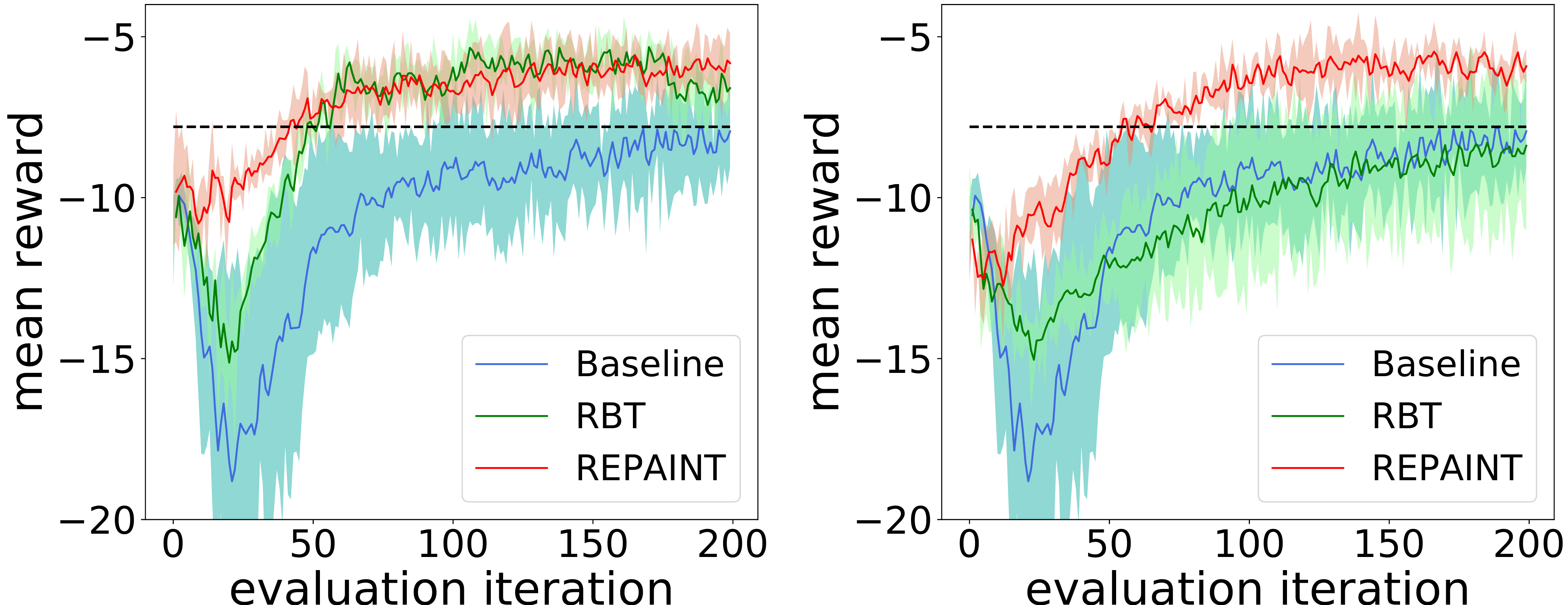}
    \caption{Comparison of relevance-based transfer (RBT) and REPAINT. Left: similar tasks. Right: dissimilar tasks.}
    \label{fig:reacher3}
% \vspace{-.5em}
\end{figure}

In order to showcase that our REPAINT algorithm exploits the (semantic) relatedness between source samples and target tasks, while most of other TL algorithms exploit the sample similarities, we make a comparison with an existing method here. \citet{lazaric2008transfer} defined some metrics of task compliance and sample relevance and proposed an instance transfer algorithm based on that. We call it relevance-based transfer (RBT). We incorporate kickstarting with RBT, and compare its TL performance with REPAINT in Figure~\ref{fig:reacher3}. When a similar task is used in knowledge transfer, RBT works well. However, when the target task is very different from the source task, although RBT attempts to transfer the most similar samples, it has no performance gain over the baseline training. The performance of REPAINT is significantly better than RBT in this case.

% \begin{table*}[t]
% \caption{Evaluation performance for MuJoCo-Ant, averaged across three runs. In this experiment, teacher task is similar to the target task.}
% % \vspace{-.5em}
% \label{table:ant}
% \begin{center}
% \begin{footnotesize}
% \begin{tabular}{cccccccc}
% \toprule
% \multirow{2}{*}{Model} & \multicolumn{7}{c}{Average return score of num$\_$iterations$\pm 50$}\\
% & 100 & 200 & 300 & 400 & 600 & 800 & 1000\\
% \midrule
% Baseline & 37 & 202 & 653 & 1072 & 1958 & 2541 & 3418\\
% Kickstarting (teacher coef.$=3$) & 1686 & 2657 & 2446 & 3688 & 4233 & 4744 & 4924\\
% Instance transfer ($\zeta=1.2$, teacher coef.$=3$) & 338 & 1058 & 1618 & 1973 & 3117 & 4163 & 4432 \\
% REPAINT ($\zeta=1.2$, teacher coef.$=3$) & 1782 & 2663 & 3150 & 3789 & 4698 & 4996 & 5173 \\
% \midrule
% REPAINT ($\zeta=0$, teacher coef.$=3$) & 1747 & 1932 & 2665 & 3245 & 4250 & 4561 & 4809 \\
% REPAINT ($\zeta=0.8$, teacher coef.$=3$) & 2004 & 2593 & 2987 & 3638 & 4299 & 4847 & 5074 \\
% REPAINT ($\zeta=1.5$, teacher coef.$=3$) & 2009 & 2605 & 2764 & 3309 & 4061 & 4838 & 5116 \\
% \midrule
% REPAINT ($\zeta=1.2$, teacher coef.$=5$) & 1034 & 2515 & 2998 & 3492 & 4621 & 4855 & 5028 \\
% REPAINT ($\zeta=1.2$, teacher coef.$=10$) & 267 & 1423 & 1785 & 2019 & 2504 & 2914 & 3420 \\
% \bottomrule
% \end{tabular}
% \end{footnotesize}
% \end{center}
% % \vspace{-.3em}
% \end{table*}

\begin{figure}[!t]
    \centering
    \includegraphics[width=.9\linewidth]{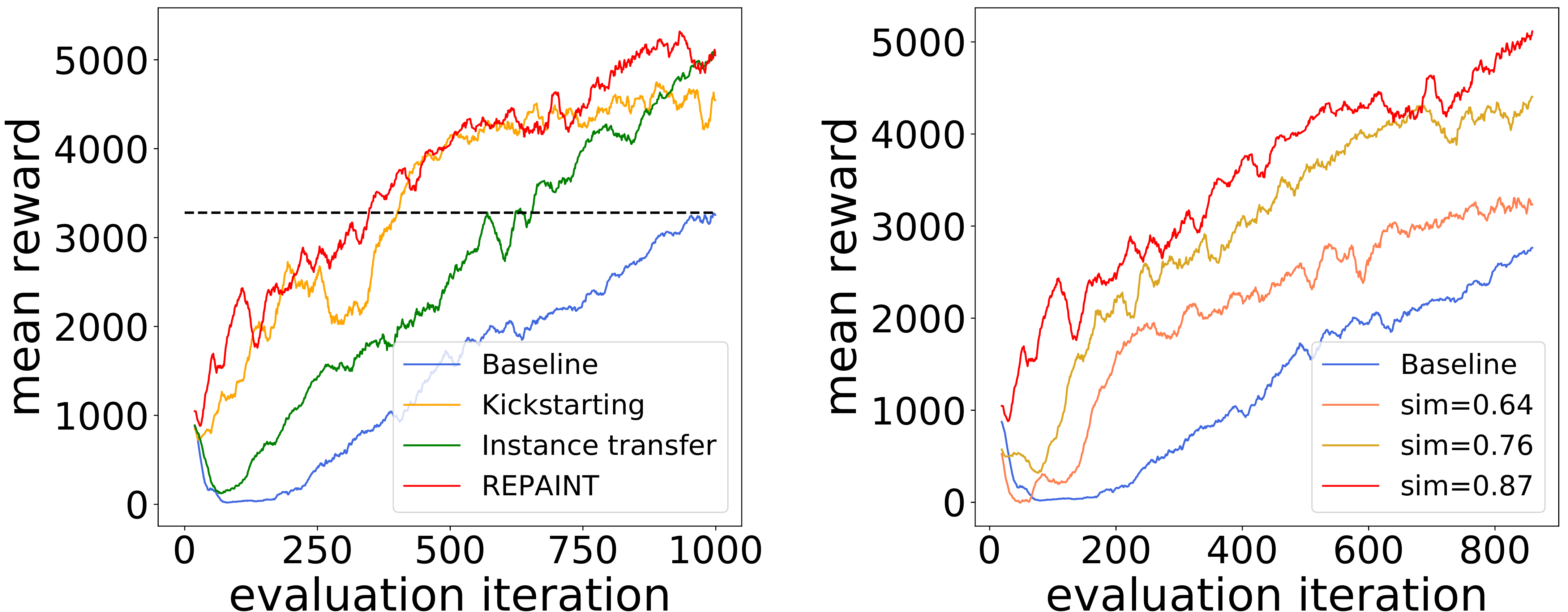}
    \caption{Evaluation performance for MuJoCo-Ant, averaged across three runs. Left: Model performance with same teacher that is pre-trained from a similar task. Right: REPAINT performance with different teacher policies (cosine similarities between source and target tasks are given). The plots are smoothed for visibility.}
    \label{fig:ant}
    % \vspace{-0.2em}
\end{figure}

\vspace{-0.8em}
\paragraph{MuJoCo-Ant.} %Other than the comparison against baselines, we also assess our REPAINT algorithm with different similarity levels.
%filtering thresholds here, i.e., the parameter $\zeta$ in Algorithm~\ref{alg:transfer-PPO}.
In this target task, the agent is rewarded by survival and moving forward, and penalized by control and contact cost. The teacher policies are all trained from similar tasks, where the reward functions have higher weights on moving forward. %In the experiments, 
We train each model for 1000 iterations, and evaluate each iteration for another 5 episodes. The results are shown in Figure~\ref{fig:ant}\footnote{We noticed that in MuJoCo-Ant, the variances of evaluation across different runs are very large. Hence we only show the mean values and omit the error bars in Figure~\ref{fig:ant}.}.
From the left sub-figure, we can again observe that, when task similarity is positive, training with REPAINT or kickstarting significantly improves the initial performance and reduces the learning cost of achieving a certain performance level. %, and improves the asymptotic performance. 
%The table also indicates that our algorithm is robust to the threshold parameter. Similar learning progresses are observed from training with different $\zeta$ values. 
We also evaluate the transfer performance from the similar source tasks with different levels of cosine similarities. We set the forward coefficients in the reward function of teacher tasks to be 3, 5, and 10, corresponding to the cosine similarities of 0.87, 0.76, and 0.64, respectively. The results in the right sub-figure indicate that task similarity impacts the overall training performance, even when they are all related. Pre-trained teacher policies from more similar tasks can better contribute to the transfer performance. In addition, we present more results in Section~\ref{sec:ant-threshold}, showing that REPAINT is robust to the threshold parameter.

% \begin{figure}[!t]
%     \begin{subfigure}{\linewidth}
%         \centering
%         \includegraphics[width=.9\linewidth]{}
%         \caption{Task with advanced reward}
%     \end{subfigure}
%     \begin{subfigure}{\linewidth}
%         \centering
%         \includegraphics[width=.9\linewidth]{}
%         \caption{Task with progress-based reward}
%     \end{subfigure}
%     \caption{Evaluation performance for DeepRacer multi-car racing against bot cars, averaged across three runs. The plots are smoothed for visibility.}
%     \label{fig:multi-car}
%     % \vspace{-3em}
% \end{figure}

\begin{figure}[!t]
    \begin{subfigure}{.9\linewidth}
        \centering
        \includegraphics[width=\linewidth]{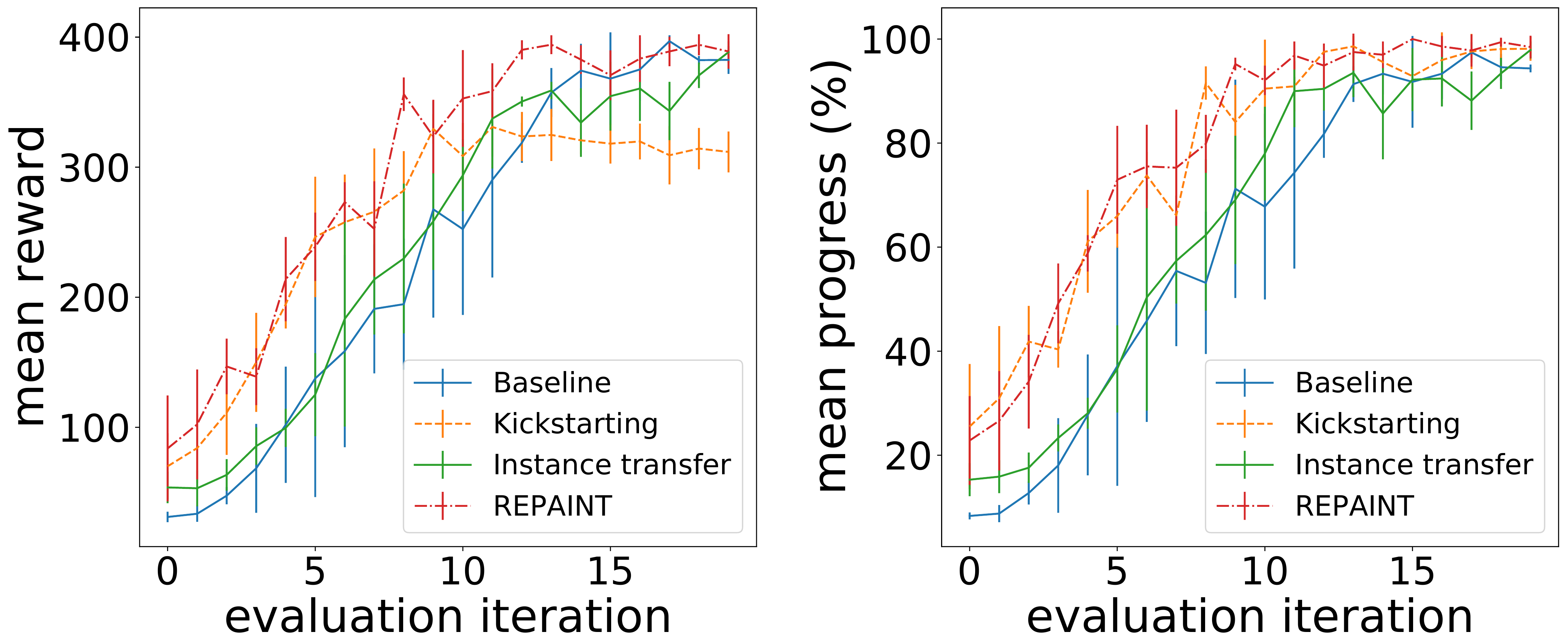}
        \caption{Outer-lane reward task with inner-lane teacher}
    \end{subfigure}
    \begin{subfigure}{.9\linewidth}
        \centering
        \includegraphics[width=\linewidth]{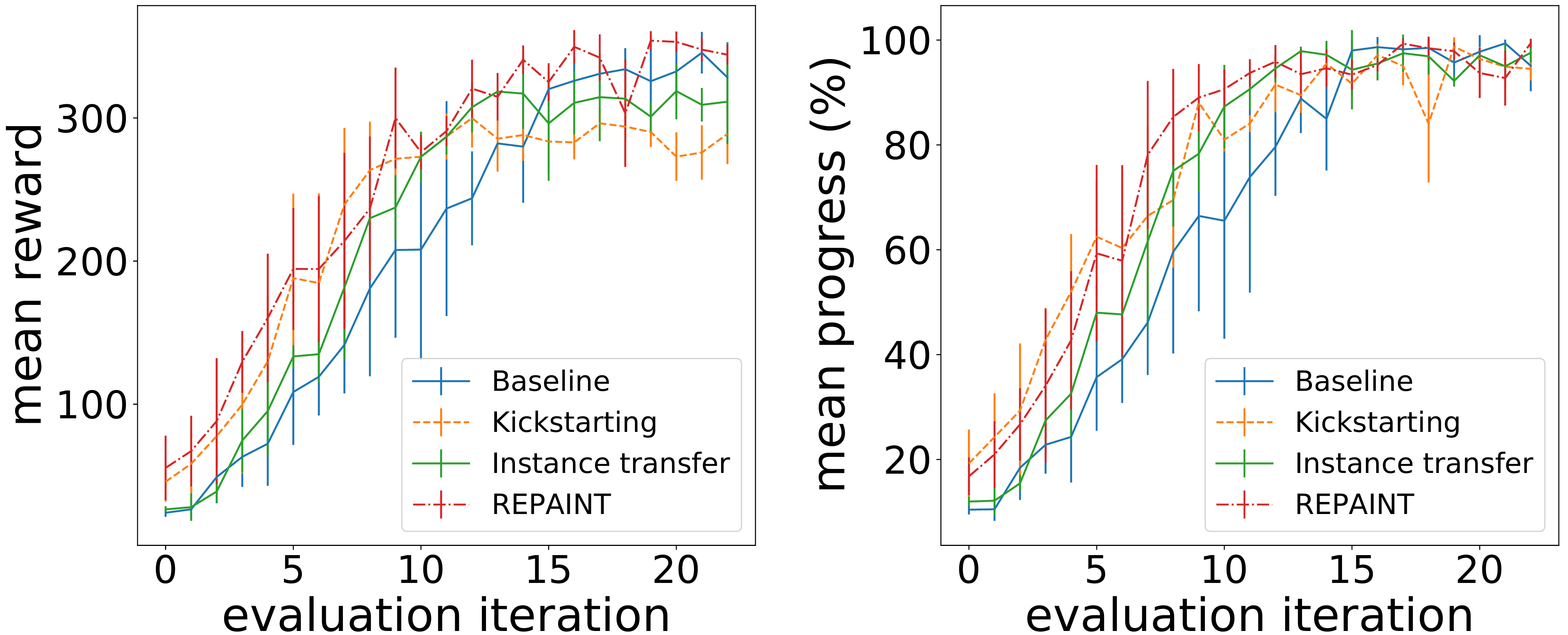}
        \caption{Inner-lane reward task with outer-lane teacher}
    \end{subfigure}
    \caption{Evaluation performance for DeepRacer single-car time-trial race, including mean accumulated rewards and mean progress (lap completion percentage), averaged across five runs.}
    \label{fig:single-car}
\end{figure}

\subsection{Autonomous Racing in AWS DeepRacer}
\paragraph{Single-car time trial.} In this experiment, we use two different reward functions, one of which rewards the agent when it is in the inner lane and penalizes when in the outer lane, and the other reward function does the opposite. When we use one reward in the student task, we provide the teacher policy that is trained with the other reward. Therefore, the cosine similarity of teacher and target tasks is negative. 

We evaluate the policy for 5 episodes after each iteration. The evaluation performance is presented in Figure~\ref{fig:single-car}, where both average return and progress (percentage of a lap the agent accomplished when it went out of track) are given. Although upon convergence, all models can finish a lap without going off-track, REPAINT and kickstarting again significantly boost the initial performance. However, when the teacher task is very different from the target task, training with kickstarting cannot improve the final performance via transfer. In contrast, instance transfer can still reduce the training convergence time with a final performance better than kickstarting (though with small margins in this example). Due to the page limit, we present the study on the effect of different cross-entropy weights $\beta_k$ and instance filtering thresholds $\zeta$ in Section~\ref{sec:single-car-exp} of appendix.

We also want to compare the REPAINT algorithm, which is a \textit{representation-instance} transfer algorithm, with a widely-used \textit{parameter} transfer approach, i.e., warm-start. In warm-start, the agent initializes with parameters from the teacher policy and conducts the RL algorithm after that. When the target task is similar to the teacher task, it usually works well. But here we compare the two algorithms in the DeepRacer single-car experiment, where the two tasks are significantly different. Figure~\ref{fig:trace} visualizes the trajectories of the agent on the track during evaluations. Each model is trained for two hours and evaluated for another 20 episodes. From both cases, we can see that although the two reward functions encode totally different behaviors, REPAINT can still focus on current task while learning from the teacher policy. This again indicates the effectiveness of the advantage-based experience selection in the instance transfer%, where we use the reward function from target task and filter out the low-advantage transitions for the student
. In comparison, training with warm-start cannot get rid of the unexpected behavior at convergence due to the reason that it may be stuck at some local optima. Therefore, initialization with previously trained policies can sometimes jump-start the training with good initial performance, but the method contributes to the final performance only when two tasks are highly similar.

\begin{figure}[!t]
    \centering
    \begin{subfigure}{\linewidth}
        \centering
        \includegraphics[width=.45\linewidth]{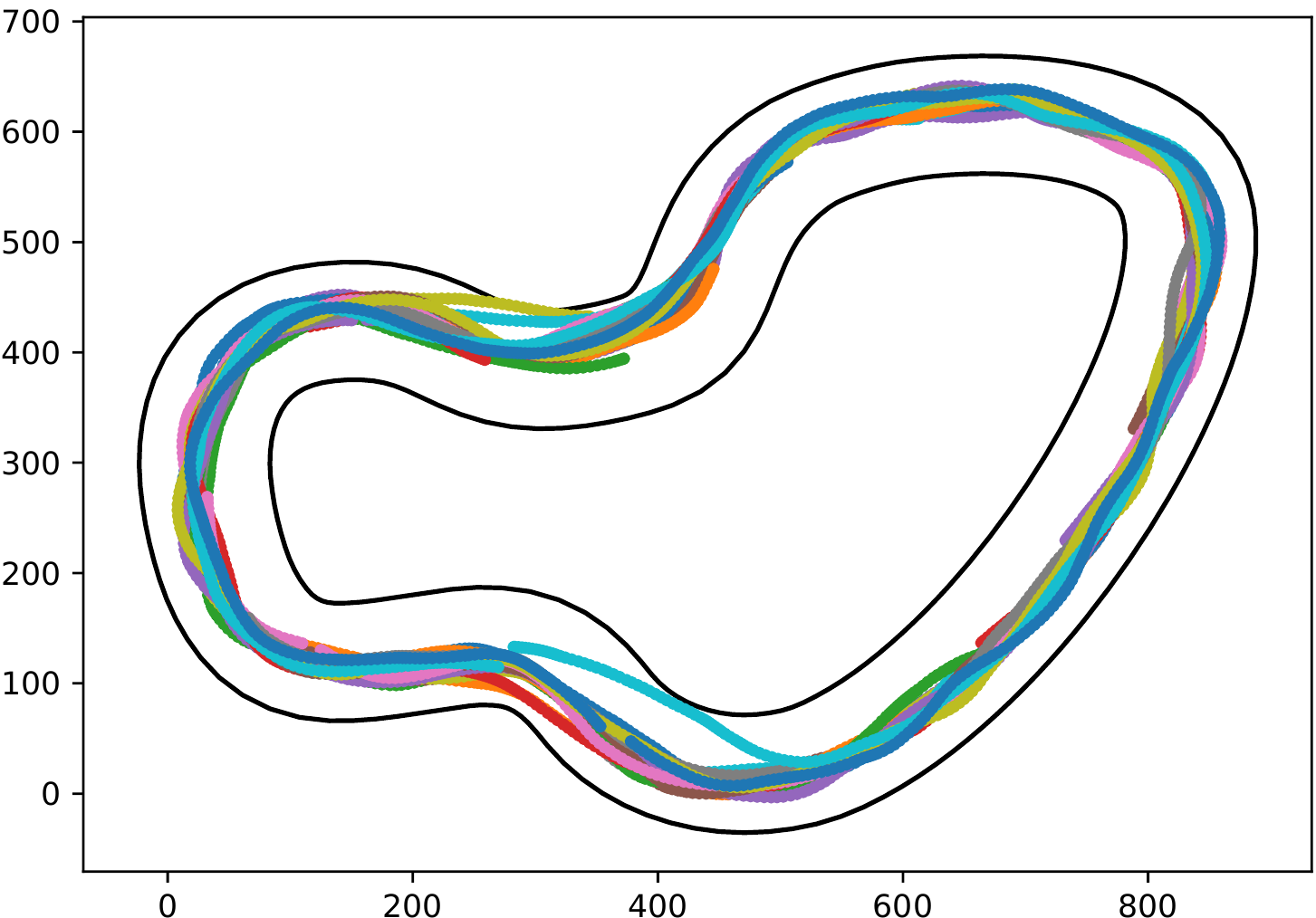}
        \includegraphics[width=.45\linewidth]{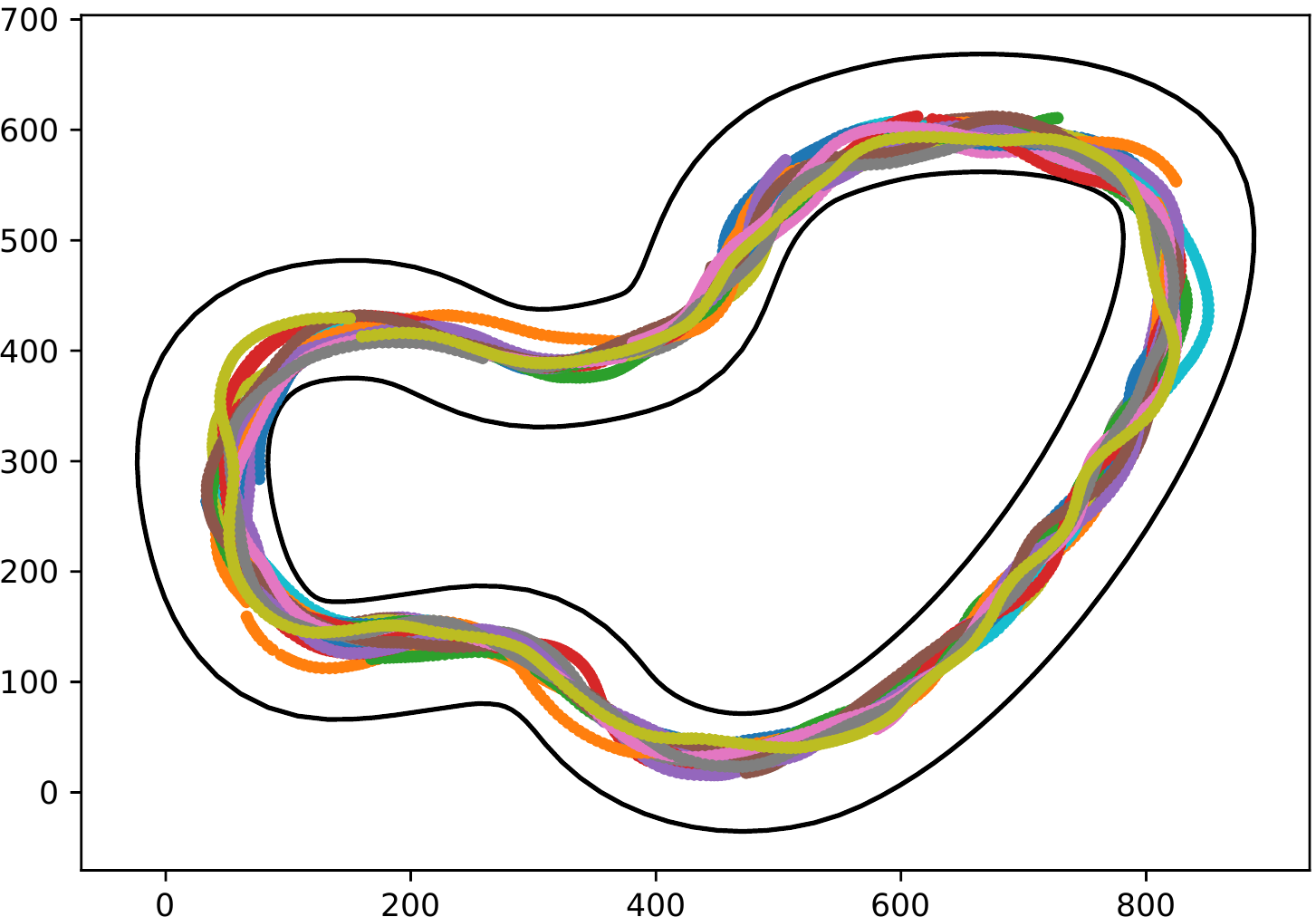}
        \caption{Outer-lane task: REPAINT \textit{vs}. warm-start}
    \end{subfigure}
    \begin{subfigure}{\linewidth}
        \centering
        \includegraphics[width=.45\linewidth]{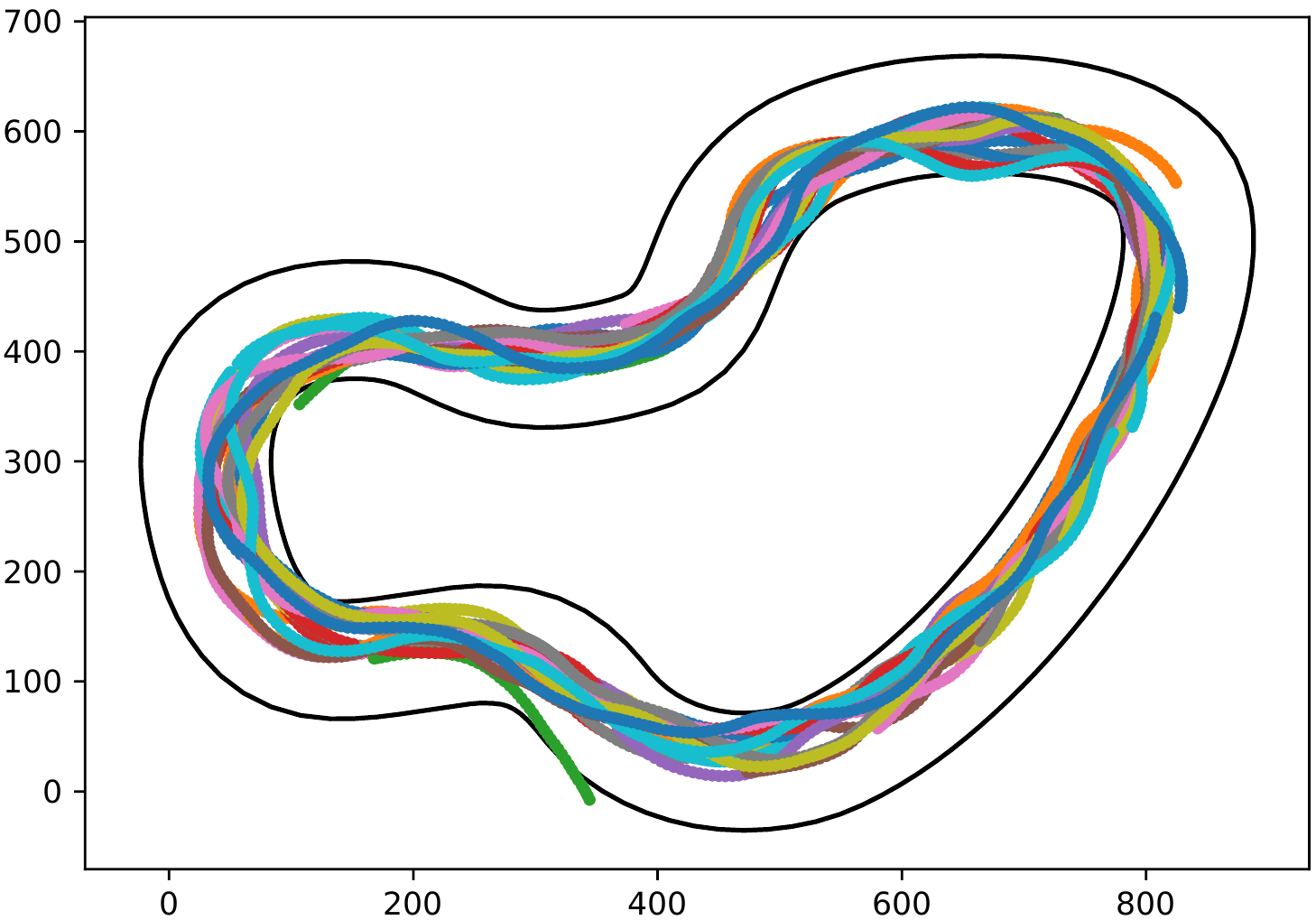}
        \includegraphics[width=.45\linewidth]{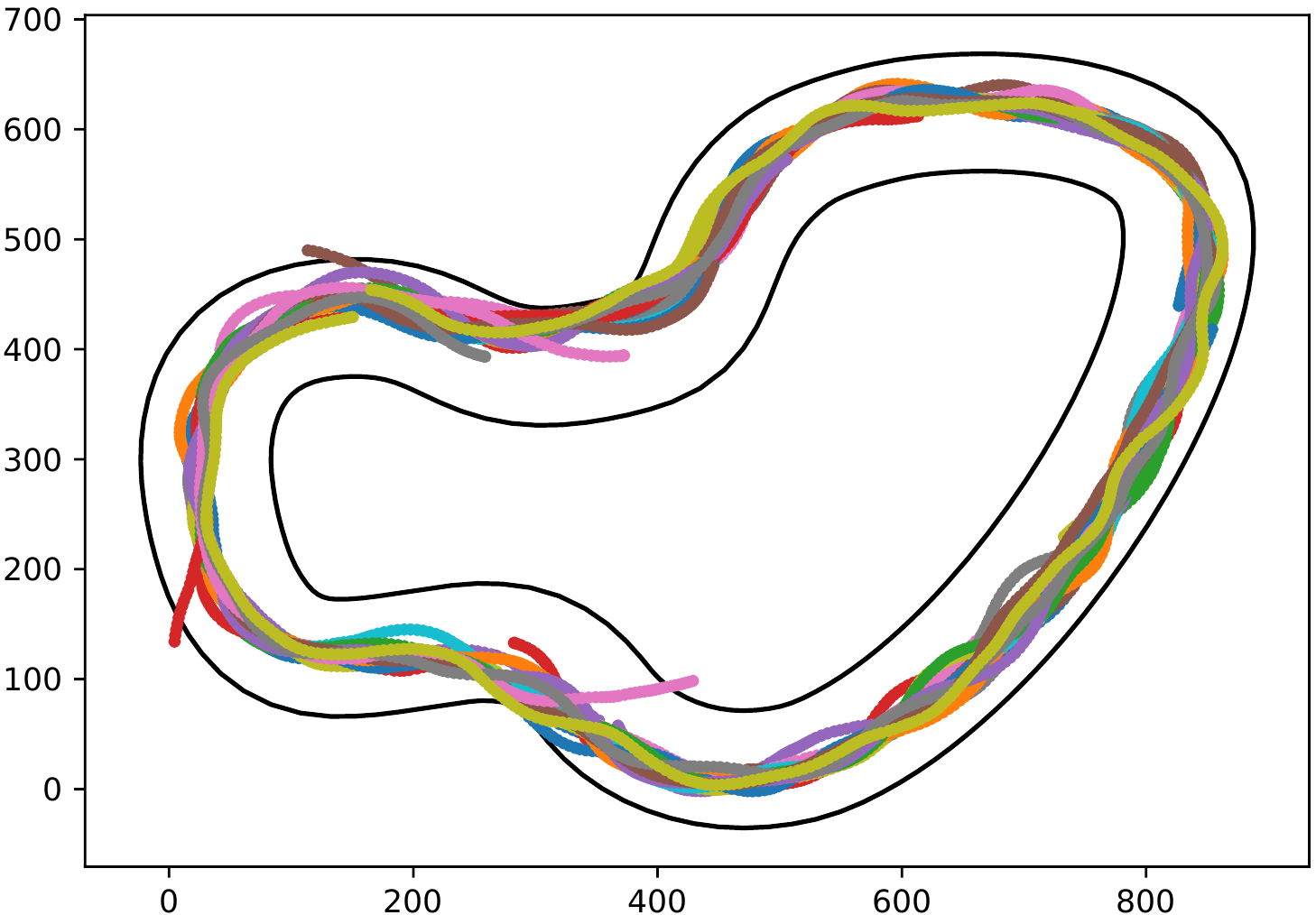}
        \caption{Inner-lane task: REPAINT \textit{vs}. warm-start}
    \end{subfigure}
    \caption{Trajectories of policy evaluations. In each of (a) and (b), evaluation of the models trained from REPAINT is visualized on the left and that trained with warm-start is on the right.}
    \label{fig:trace}
\end{figure}

\vspace{-0.6em}
\paragraph{Racing against bot cars.} The REPAINT algorithm is still helpful when the RL agent needs to learn multiple skills in a task. In the multi-car racing, the agent has to keep on the track while avoiding crashes with bot cars in order to obtain high rewards. 
We first train a teacher policy which is good at object avoidance, namely, the agent is rewarded when it keeps away from all bot cars, and gets a penalty when the agent is too close to a bot car and heads towards to it. 
Then in target tasks, we use two different reward functions to assess the models. First, we use an \textit{advanced reward} where other than keeping on track and object avoidance, it also penalizes the agent when it detects some bot car from the camera and is in the same lane with the bot. The evaluation performance is shown in Figure~\ref{fig:multi-car} (left). Since the environment has high randomness, such as agent and bot car initial locations and bot car lane changing, we only report average results. One can observe that REPAINT outperforms other baselines regarding the training time needed for certain performance level and the asymptotic performance.
Another target task with a \textit{progress-based reward} is also investigated, where the agent is only rewarded based on its completion progress, but gets large penalty when it goes off-track or crashes with bot cars. Since maximizing the completion progress involves bot car avoidance, the teacher task can be seen as either a different task or a sub-task. The results are provided in Figure~\ref{fig:multi-car} (right). When the target task is complex and the reward is simple (sparse) as in this case, it is sometimes difficult for the agent to learn a good policy as it lacks guidance from the reward on its actions. %However, a teacher policy can provide guidance for the agent to achieve high accumulated rewards at the early stage of training. 
From the sub-figure, we can again see that training with REPAINT not only largely reduces the convergence time, but also improves the asymptotic performance compared to other models.

\begin{figure}[!t]
    \centering
    \includegraphics[width=.9\linewidth]{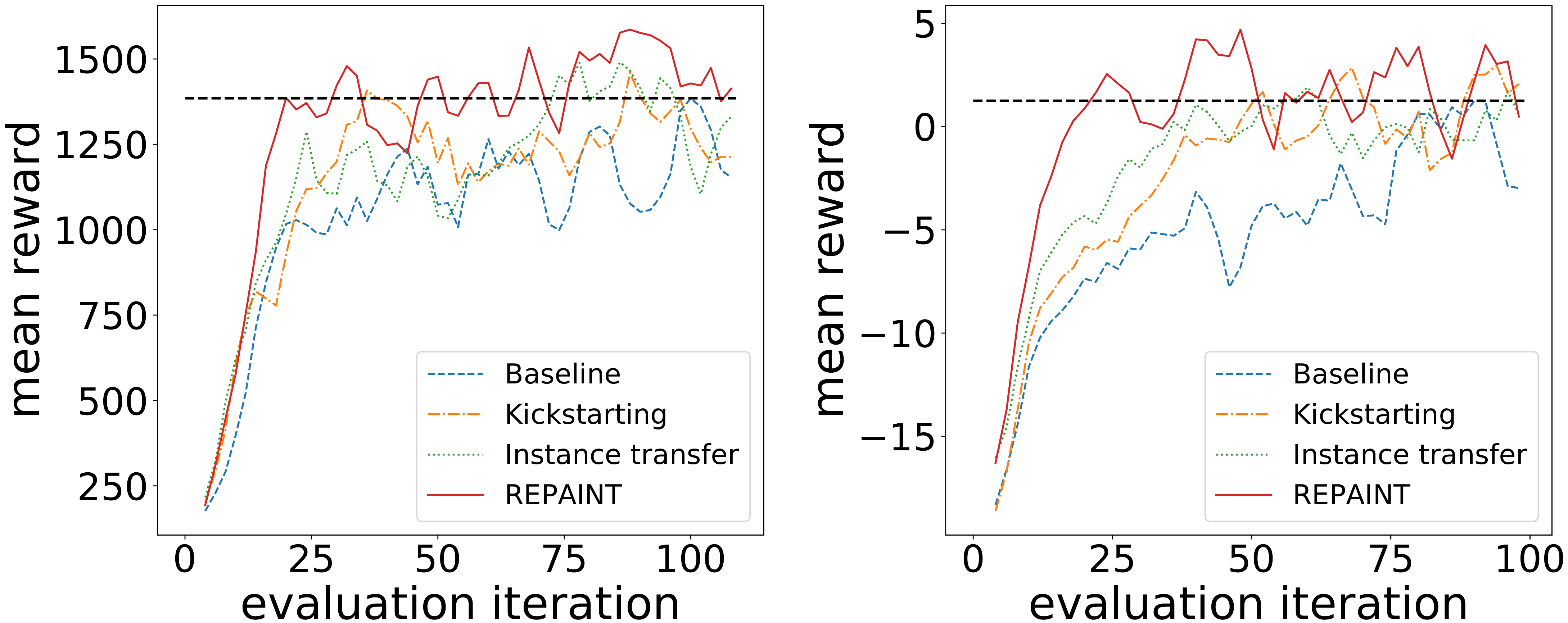}
    \caption{Evaluation performance for DeepRacer multi-car racing against bot cars, averaged across three runs. Left: Task with advanced reward. Right: Task with progress-based reward. The plots are smoothed for visibility.}
    \label{fig:multi-car}
    % \vspace{-0.2em}
\end{figure}

\begin{table*}[t]
\caption{Summary of the experimental results.}
% \vspace{-.3em}
\label{table:summary}
\begin{center}
\begin{footnotesize}
\begin{tabular}{cccccccccc}
\toprule
\multirow{2}{*}{Env.} &  Teacher & Target & \multirow{2}{*}{$K_\text{Baseline}$ } & $K_\text{KS}$ & $K_\text{IT}$ & $K_\text{REPAINT}$ & \multicolumn{3}{c}{Best scores} \\
& type & score & & (pct. reduced) & (pct. reduced) & (pct. reduced) & KS & IT & REPAINT \\
\midrule
 \multirow{2}{*}{Reacher} &  similar & \multirow{2}{*}{-7.4} & \multirow{2}{*}{173} & 51 (71$\%$) & 97 (44$\%$) & 42 (76$\%$) & -5.3 & -5.9 & -5.4 \\
 & different &  &  & 73 (58$\%$) & 127 (27$\%$) & 51 (71$\%$) & -6.9 & -6.4 & -5.2 \\
 \midrule
 Ant &  similar & 3685 & 997 & 363 (64$\%$) & 623 (38$\%$) & 334 (66$\%$) & 5464 & 5172 & 5540 \\
 \midrule
 \multirow{2}{*}{Single-car} & different & 394 & 18 & Not achieved & Not achieved & 13 (28$\%$) & 331 & 388 & 396  \\
 &  different & 345 & 22 & Not achieved & Not achieved & 15 (32$\%$) & 300 & 319 & 354 \\
 \midrule
 \multirow{2}{*}{Multi-car} & sub-task & 1481 & 100 & 34 (66$\%$) & 75 (25$\%$) & 29 (71$\%$) & 1542 & 1610 & 1623 \\
 &  diff/sub-task & 2.7 & 77 & 66 (14$\%$) & 53 (31$\%$) & 25 (68$\%$) & 4.9 & 4.2 & 6.1 \\
 \midrule
 StarCraft II & sub-task & 112 & 95 & 92 (3\%) & 24 (75\%) & 6 (94\%) & 125 & 312 & 276 \\
\bottomrule
\end{tabular}
\end{footnotesize}
\end{center}
% \vspace{-.3em}
\end{table*}

\subsection{More Complex Tasks: StarCraft II Environments}
At last, we also conduct the ablation study on a more complex transfer learning task using StarCraft II Learning Environments \citep{vinyals2017starcraft}. The teacher policy is trained on the \textit{BuildMarines} mini-game, where the agent is given a limited base and is tasked to maximize the number of marines trained. Then the target task builds upon \textit{BuildMarines} to include the \textit{FindAndDefeatZerglings} mini-game, denoted as \textit{BuildMarines+FindAndDefeatZerglings} (\textit{BM+FDZ}).
That is, on top of learning how to build marines, the agent must learn to use the built marines to explore the whole map, and try to find and defeat Zerglings that are randomly scattered across the map. Note that the map of \textit{BM+FDZ} is larger than that of \textit{BuildMarines}, so that although the state and action spaces are the same, the initial feature maps (observations) between the two tasks are not identical. Therefore, the knowledge transfer between the two tasks is not straightforward. Other details of the environments and reward functions can be found in Section~\ref{sec:env}.

Figure~\ref{fig:sc} shows the evaluation results for REPAINT and other methods. We want to first remark that it is a known issue that when \textit{BuildMarines} mini-game is involved, the RL performance has large variance, which can also be observed in Figure 6 of \citet{vinyals2017starcraft}. From the figure, we can again see that when the source task is a sub-task of a complex target task, the kickstarting method cannot transfer well. In contrast, the instance transfer and the REPAINT algorithm quickly transfer the knowledge of building marines to the target task by selecting the samples with high semantic relatedness. Moreover, they have achieved significantly higher rewards than training from scratch or kickstarting.

\begin{figure}[!t]
    \centering
    \includegraphics[width=.57\linewidth]{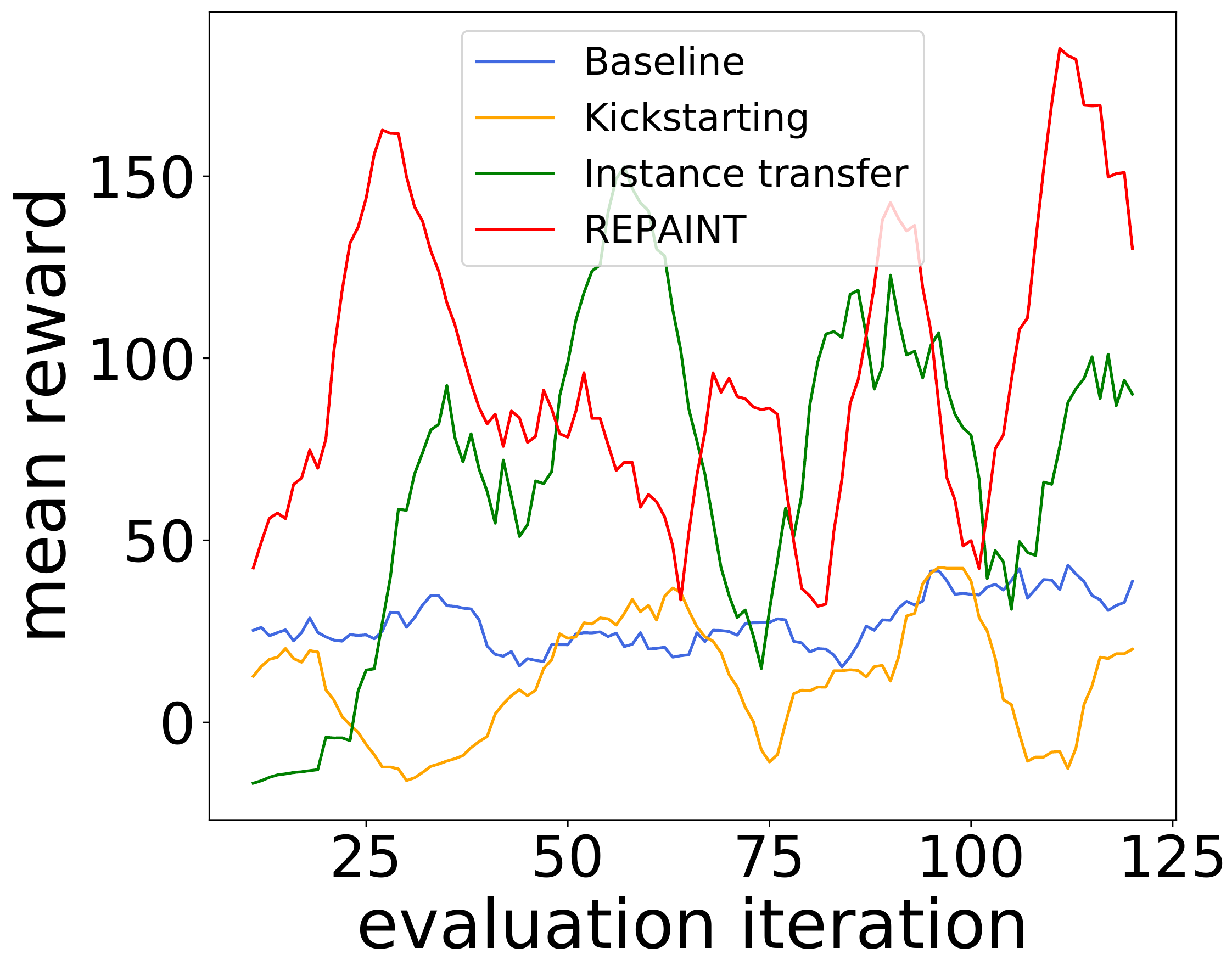}
    \caption{Evaluation performance for StarCraft II environments. The source task is a sub-task of the target task but uses a different map. The plots are smoothed for visibility.}
    \label{fig:sc}
    % \vspace{-0.2em}
\end{figure}

\section{Summary and Future Work}
In this work, we have proposed a knowledge transfer algorithm for RL. The REPAINT algorithm performs an on-policy representation transfer for the pre-trained teacher policies and an off-policy instance transfer for the samples collected following teacher policies. We develop an advantage-based experience selection approach in the instance transfer for selecting the samples that have high semantic relatedness to the target tasks. The idea of experience selection based on sample relatedness is simple but effective, and this is the first time that it is applied to knowledge transfer in RL.

%representation transfer and instance transfer alternately, and uses a simple yet effective method to supplement the kickstarted training, i.e., the advantage-based experience selection. The experiments across several tasks with continuous and discrete state-action spaces have demonstrated that REPAINT significantly reduces the training time needed by an agent to reach a specified performance level, and also improves the asymptotic performance of the agent, when compared to training with no prior knowledge and only kickstarting or instance transfer.

We provide a summary of the experimental results in Table~\ref{table:summary}. The teacher type indicates whether the teacher task is a sub-task of or similar to the target task, based on the cosine similarity \eqref{eq:similarity}. 
The target score is the best performance that the baseline model can achieve. Then we provide the number of training iterations needed by each model to reach the target score. The models include training with baseline, kickstarting (KS), instance transfer (IT), and REPAINT. 
In Section~\ref{sec:moresum}, we also provide the data of wall-clock time. Although we are more interested in the training time reduction from TL, we present the best scores that each knowledge transfer model can achieve in the table. 
The kickstarting model performs well only when the tasks are similar or the target task is simple. While the instance transfer model can transfer related samples, it does not boost the initial performance. In contrast, the superior performance for REPAINT is observed regardless of the task similarity. It significantly reduces the training time for each target task, and also improves the final return scores in most tasks.

In future work, we aim to study how REPAINT can automatically learn the task similarity, and spontaneously determine the best $\beta_k$ and $\zeta$ values in the training based on the similarity. Our preliminary results in Section~\ref{sec:single-car-exp} indicate that when the task similarity is low, larger $\beta_k$ values may reduce the asymptotic performance of the agent. Moreover, we are also interested in the dependency of transfer performance on the neural network architectures. We provide some preliminary experimental results in Section~\ref{sec:more-exp-nn}.

\section*{Acknowledgements}
The authors would like to thank the anonymous reviewers for the valuable comments.

\bibliography{nips2020_refs}
\bibliographystyle{icml2021}

\clearpage
\appendix
% \onecolumn

% ~\\
% \centerline{{\fontsize{13.5}{13.5}\selectfont \textbf{Supplementary Material for ``REPAINT: Knowledge Transfer}}}

% \vspace{6pt}
%  \centerline{\fontsize{13.5}{13.5}\selectfont \textbf{
%  in Deep Actor-Critic Reinforcement Learning''}}
%  \vspace{10pt}

\newtheorem{innercustomthm}{Theorem}
\newtheorem{innercustomass}{Assumption}
\newtheorem{innercustomlem}{Lemma}
\newenvironment{customthm}[1]
  {\renewcommand\theinnercustomthm{#1}\innercustomthm}
  {\endinnercustomthm}
\newenvironment{customass}[1]
  {\renewcommand\theinnercustomass{#1}\innercustomass}
  {\endinnercustomass}
  \newenvironment{customlem}[1]
  {\renewcommand\theinnercustomlem{#1}\innercustomlem}
  {\endinnercustomlem}

% \title{Robust Multi-Agent Reinforcement Learning in \\Mixed Cooperative and Competitive Environments\\ \vspace{3mm} \Large{Supplementary Material}}

%\begin{document}

% \maketitle

\section{Algorithms}\label{sec:alg}
In order to conduct a fair comparison with the baseline algorithms regarding the reduction of number of training iterations, and demonstrate that REPAINT improves the sample efficiency in knowledge transfer, we use an alternating variant of REPAINT with Clipped PPO in the experiments. The algorithm is provided in Algorithm~\ref{alg:transfer-PPO}, where we adopt on-policy representation transfer and off-policy instance transfer alternately, so that the REPAINT performs policy update one time per iteration using less samples. Indeed, Algorithm~\ref{alg:transfer-PPO} can be easily extended with different alternating ratios other than 1:1 alternating. The corresponding results and discussion can be found in Section~\ref{sec:exp-ratio}. %More generically, we can integrate the two transfer steps in one iteration, which is shown in the following integrated REPAINT algorithm with Clipped PPO.

Note that in Algorithm~\ref{alg:repaint} and Algorithm~\ref{alg:transfer-PPO}, we can use different learning rates $\alpha_1$ and $\alpha_2$ to control the update from representation transfer and instance transfer, respectively. 
%So far, we have only used a single teacher policy in the knowledge transfer, and adopt the objective function from only Clipped PPO algorithm. However, 
Moreover, it is straightforward to using multiple and different teacher policies in each transfer step, and our algorithm can be directly applied to any advantage-based policy gradient RL algorithms. Assume there are $m$ previously trained teacher policies $\pi_1,\ldots,\pi_m$. In the instance transfer, we can form the replay buffer $\tilde{\mathcal{S}}$ by collecting samples from all teacher policies. Then in the representation transfer, the objective function can be written in a more general way:
\begin{equation}\label{eq:rl-loss-general}
    L_\text{rep}^k(\theta) = L_\text{clip}(\theta) - \sum_{i=1}^m \beta^k_i H\left(\pi_i(a|s) \| \pi_\theta(a|s)\right)\,,
\end{equation}
where we can impose different weighting parameters for different teacher policies.

In addition, the first term in \eqref{eq:rl-loss-general}, i.e., $L_\text{clip}(\theta)$, can be naturally replaced by the objective of other RL algorithms, e.g., Advantage Actor-Critic (A2C) \cite{sutton2000policy}:
$$
     L_\text{A2C}(\theta) = \hat{\mathbb{E}}_t \left[\log \pi_\theta(a|s) \hat{A}_t\right]\,,
$$
and Trust Region Policy Optimization (TRPO) \citep{schulman2015trust}:
$$
     L_\text{TRPO}(\theta) = \hat{\mathbb{E}}_t \left[\frac{\pi_\theta(a|s)}{\pi_{\theta_\text{old}}(a|s)} \hat{A}_t - \beta \text{KL}[\pi_{\theta_\text{old}}(\cdot|s), \pi_\theta(\cdot|s)]\right]
$$
for some coefficient $\beta$ of the maximum KL divergence computed over states. 

REPAINT can also be adapted to other policy-gradient-based algorithms that are not based on advantage values. To this end, one can define a different metric for relatedness. For example, we can use REPAINT with REINFORCE \cite{williams1992simple}, by defining the relatedness metric to be $\hat{R} - b$, where $\hat{R}$ is the off-policy return and $b$ is the baseline function in REINFORCE, which can be state-dependent. Then the experience selection approach can be built based on the new relatedness metric.

\begin{algorithm}[!t]
  \caption{Alternating REPAINT with Clipped PPO}
  \label{alg:transfer-PPO}
\begin{algorithmic}
  \STATE Initialize parameters $\nu$, $\theta$
  \STATE Load teacher policy $\pi_{\text{teacher}}(\cdot)$
  \STATE Set hyper-parameters $\zeta$, $\alpha_1$, $\alpha_2$, and $\beta_k$'s in \eqref{eq:rl-loss}
  \FOR{iteration $k=1,2,\ldots$}
  \IF[\quad\quad// \emph{representation transfer}]{$k$ is odd}
  \STATE Collect samples $\mathcal{S}=\{(s,a,s',r)\}$ using $\pi_{\theta_\text{old}}(\cdot)$
  \STATE Fit state-value network $V_\nu$ using $\mathcal{S}$ to update $\nu$
  \STATE Compute advantage estimates $\hat{A}_1,\ldots,\hat{A}_T$
  \STATE Compute sample gradient of $L_\text{rep}^k(\theta)$ in \eqref{eq:rl-loss}
%   \STATE Compute sample gradient of $L_\text{ins}(\theta)$ in \eqref{eq:obj-ins} using $\tilde{\mathcal{S}}$
  \STATE Update policy network by $\theta \leftarrow \theta + \alpha_1\nabla_\theta L_\text{rep}^k(\theta)$
%   \STATE Perform gradient optimization on $L_\text{RL}^k(\theta)$ defined in \eqref{eq:rl-loss} \quad // representation transfer
  \ELSE[\quad\quad// \emph{instance transfer}]
  \STATE Collect samples $\tilde{\mathcal{S}}=\{(\tilde{s},\tilde{a},\tilde{s}',\tilde{r})\}$ using $\pi_{\text{teacher}}(\cdot)$ %\quad\quad\quad\quad   // instance transfer
%   \STATE Compute $r$ for each transition using current reward function and add to $\mathcal{S}'$
  \STATE Compute advantage estimates $\hat{A}'_1,\ldots,\hat{A}'_{T'}$
  \FOR[\quad\quad// \emph{experience selection}]{t=1,\ldots,$T'$}
    \IF{$\hat{A}'_t<\zeta$} 
        \STATE Remove $\hat{A}'_t$ and the corresponding transition $(\tilde{s}_t,\tilde{a}_t,\tilde{s}_{t+1},\tilde{r}_t)$ from $\tilde{\mathcal{S}}$
    \ENDIF
  \ENDFOR
  \STATE Compute sample gradient of $L_\text{ins}(\theta)$ in \eqref{eq:obj-ins}
  \STATE Update policy network by $\theta \leftarrow \theta + \alpha_2\nabla_\theta L_\text{ins}(\theta)$ 
  \ENDIF
  \ENDFOR
\end{algorithmic}
\end{algorithm}

\begin{figure*}[!t]
    \centering
    \begin{subfigure}{.23\textwidth}
        \centering
        \includegraphics[width=\linewidth]{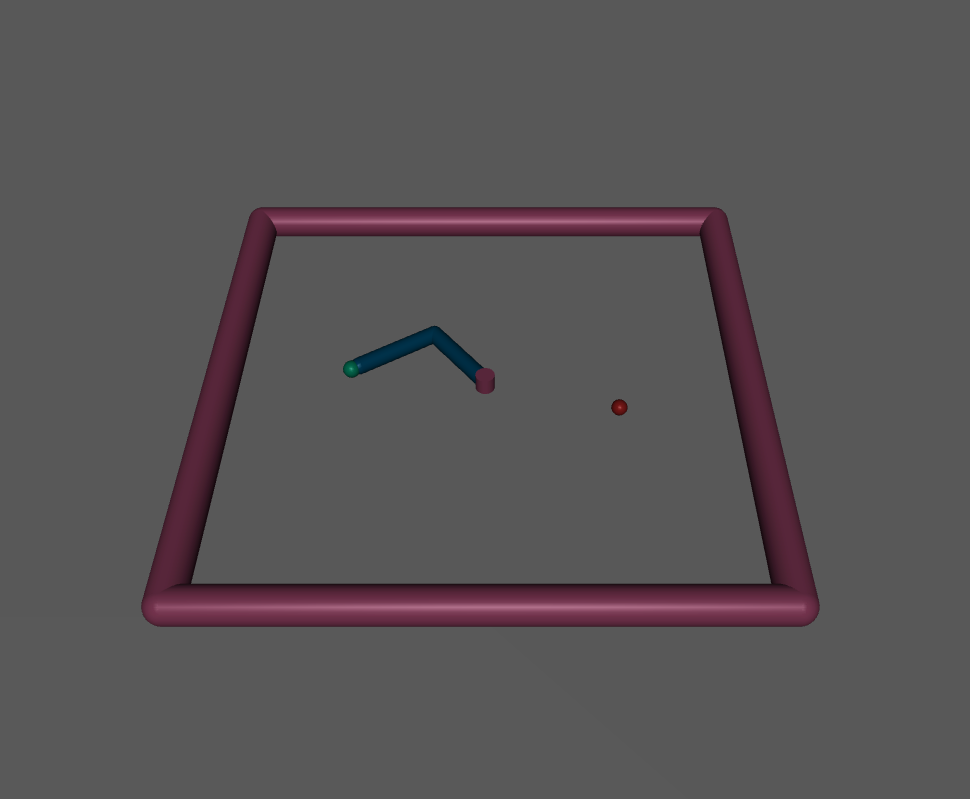}
        \caption{MuJoCo-Reacher}
    \end{subfigure}
    \begin{subfigure}{.232\textwidth}
        \centering
        \includegraphics[width=\linewidth]{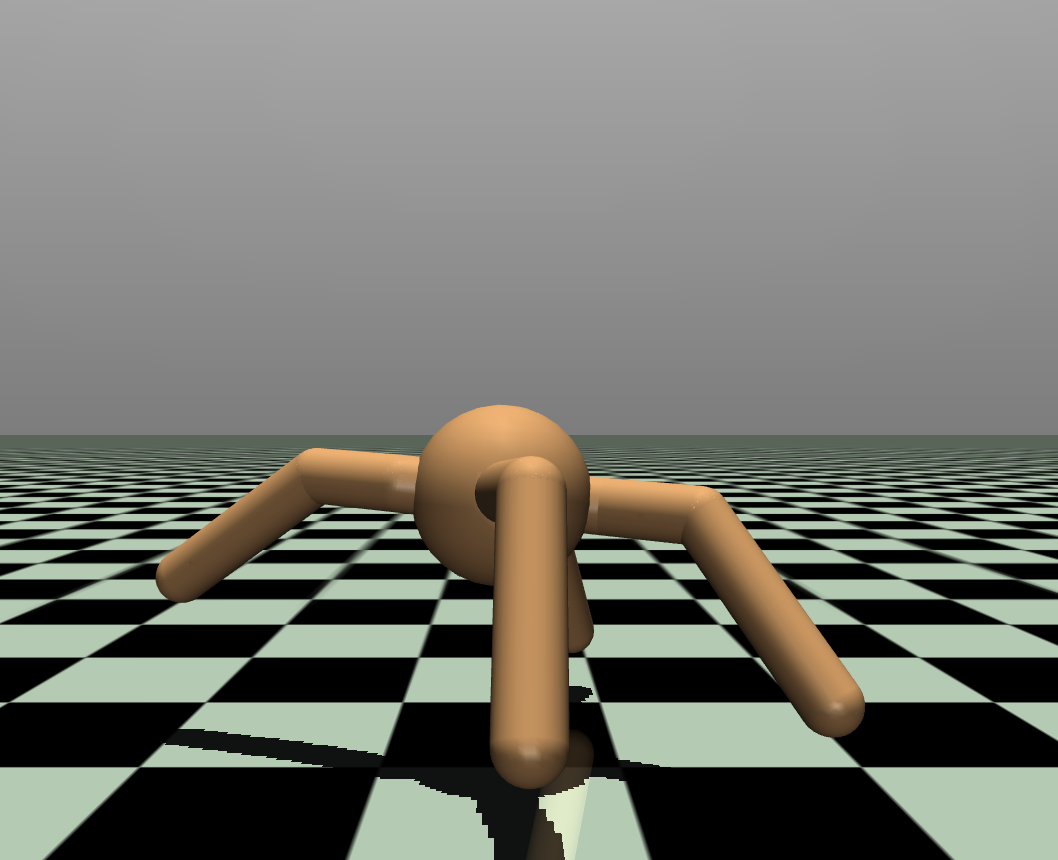}
        \caption{MuJoCo-Ant}
    \end{subfigure}
    \begin{subfigure}{.234\textwidth}
        \centering
        \includegraphics[width=\linewidth]{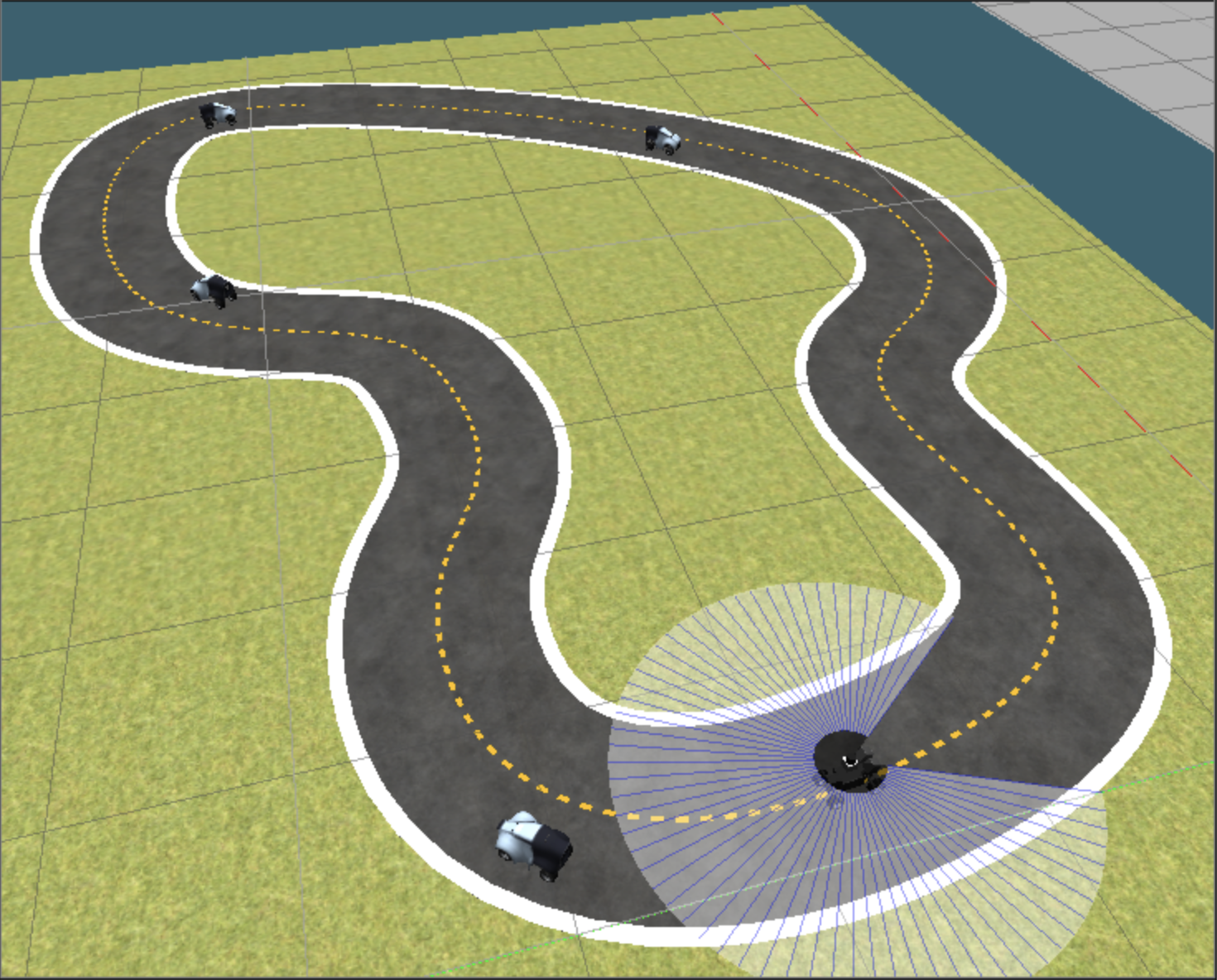}
        \caption{DeepRacer multi-car racing}
    \end{subfigure}
    \begin{subfigure}{.264\textwidth}
        \centering
        \includegraphics[width=\linewidth]{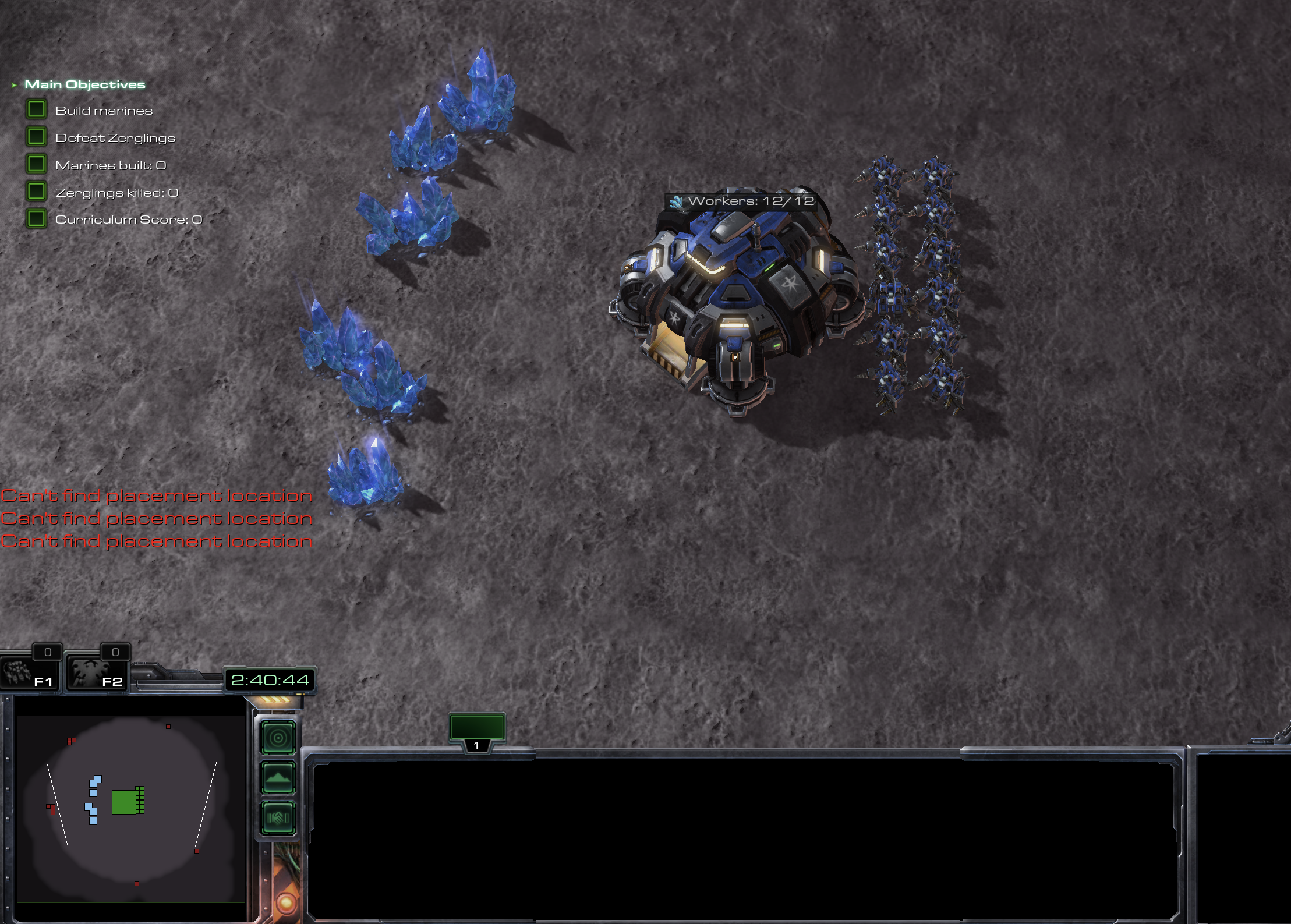}
        \caption{StarCraft II}
    \end{subfigure}
    \caption{The simulation environments used in the experiments. Note that in DeepRacer multi-car racing, the racing car is equipped by a stereo camera and a Lidar, as shown in (c). However, in the single-car time trial, the racing car only has a monocular camera. In addition, for the StarCraft II environment, 40 Zerglings are randomly generated in the unrevealed areas around the map.}
    \label{fig:deepracer}
\end{figure*}

We now discuss how the REPAINT algorithm can be extended to Q-learning. Since Q-learning is an off-policy algorithm, it is easy to notice that the kickstarting approach cannot be directly used. Although some other representation transfer approaches are suitable for Q-learning, e.g., using a neural network for feature abstraction, we skip the discussion as it is not our goal in this paper. Instead, we will focus on how to extend our experience selection approach to the instance transfer of Q-learning.

In the instance transfer for Q-learning, the Q-value network, parameterized by $\phi$, is updated by minimizing the following loss function:
$$
L(\phi) = \frac{1}{2}\sum_i \|Q_\phi(s_i,a_i)-y_i\|^2\,,
$$
where the samples are drawn from the replay buffer that is collected following some teacher policy, and the target values $y_i$'s are defined by
$$
y_i = r(s_i,a_i) +\gamma \max_{a_i'} Q_\phi(s_i',a_i')\,,
$$
with $\gamma$ the discount factor. Now given some threshold $\zeta\ge 0$, we can select the samples that satisfy the following condition to update $\phi$:
$$
y_i - Q_\phi(s_i,a_i) > \zeta\,.
$$
Similar to REPAINT with actor-critic methods, the threshold $\zeta$ here is task specific, but it needs more careful treatment in Q-learning. Since we aim to obtain an optimal Q-function $Q^*(s,a)$, we should use a $\zeta$ such that $Q^*(s,a)\ge y > Q_\phi(s,a)+\zeta$. For actor-critic methods, we can empirically show that REPAINT is robust to the advantage threshold. However, for Q-learning, we usually need to set $\zeta$ to be very small and use the experience selection only in the early training stage. Motivated by \citet{oh2018self}, we can also set $\zeta=0$. Then the convergence of Q-value follows $Q^*(s,a)\ge y > Q_\phi(s,a)$. By filtering out the samples such that $Q_\phi(s_i,a_i) \ge y_i$, one can expect the instance transfer to improve the sample efficiency and reduce the total training time for complex target tasks.

In practice, to trade off the exploitation with exploration in Q-learning, we can collect some samples following the $\epsilon$-greedy policy from the online Q-value network, and add those samples to the replay buffer as well.

\section{Details of Experimental Setup}\label{sec:env}
\subsection{Environments}
We now provide the details of our experimental setup. The graphical illustration of the environments used is presented in Figure~\ref{fig:deepracer}. First of all, MuJoCo is a well-known physics simulator for evaluating agents on continuous motor control tasks with contact dynamics, hence we omit the further description of MuJoCo in this paper.

\vspace{-0.6em}
\paragraph{DeepRacer simulator.}
In AWS DeepRacer simulator\footnote{\url{https://github.com/awslabs/amazon-sagemaker-examples/tree/master/reinforcement_learning/rl_deepracer_robomaker_coach_gazebo}}, the RL agent, i.e., an autonomous car, learns to drive by interacting with its environment, e.g., the track with moving bot cars, by taking an action in a given state to maximize the expected reward. Figure~\ref{fig:deepracer}(c) presents the environmental setting for racing against moving bot cars, where four bot cars are generated randomly on the track and the RL agent learns to finish the lap with overtaking bot cars. Another racing mode we used in this paper is the single-car time-trial race, where the goal is to finish a lap in the shortest time.

In single-car racing, we only install a front-facing camera on the RL agent, which obtains an RGB image with size $120\times160\times3$. The image is then transformed to gray scale and fed into an input embedder. For simplicity, the input embedder is set to be a three-layer convolutional neural network (CNN) \citep{goodfellow2016deep}. For the RL agent in racing against bot cars, we use a stereo camera and a Lidar as sensors. The stereo camera obtains two images simultaneously, transformed to gray scale, and concatenates the two images as the input, which leads to a $120\times160\times2$ input tensor. The input embedder for stereo camera is also a three-layer CNN by default. The stereo camera is used to detect bot cars in the front of learner car, while the Lidar is used to detect any car behind. The backward-facing Lidar has an angle range of 300 degree and a 64 dimensional signal. Each laser can detect a distance from 12cm to 1 meter. The input embedder for Lidar sensor is set to be a two-layer dense network. In both environments, the output has two heads, V head for state value function output and policy head for the policy function output, each of which is set to be a two-layer dense networks but with different output dimensions. The action space consists of a combination of five different steering angles and two different throttle degrees, which forms a 10-action discrete space. In the evaluation of DeepRacer experiments, the generalization around nearby states and actions is also considered \citep{balaji2019deepracer}, where we add small noises to the observations and actions.

\vspace{-0.6em}
\paragraph{StarCraft II learning environments (SC2LE).}
The \textit{BuildMarines} mini-game is shown in Figure~\ref{fig:deepracer}(d), where it limits the possible actions that the agent can take to either of selecting points, building workers, building supply depots, building barracks, and training marines. For \textit{BuildMarines+FindAndDefeatZerglings} (\textit{BM+FDZ}), we extend the action space to allow the agent to select the army and to attack with the army. As mentioned before, we keep the state and action spaces the same between source and target tasks in the experiments. Therefore, we provide the two army-related actions in \textit{BuildMarines} but they are always unavailable.

\begin{table}[!t]
\caption{StarCraft II \textit{BM+FDZ} reward scheme.}
\label{table:SC_reward}
\begin{center}
\begin{tabular}{c|c}
Condition & Reward \\ \hline
Performing an unavailable action & -0.01 \\
A marine dying & -4 \\
Selecting an unavailable point  & -0.01 \\
Selecting an available point & 0.01 \\
Training an SCV & 0.5 \\
Building barrack & 0.2 \\
\end{tabular}
\end{center}
\end{table}

The default observations provided in the SC2LE are used and we follow a similar network architecture to the baseline presented in \citet{vinyals2017starcraft}.
Spatial features, including screen features (size = $84 \times 84 \times 9$) and  mini-map features (size = $84 \times 84 \times 3$), are each fed through input embedders consisting of CNN with two layers. 
Non-spatial features, including the measurements (size = 5; e.g., mineral count, food count, army count) and the one-hot encoded vector of available actions (size = 7, e.g., build worker, select screen) are fed into an input embedder consisting of linear layers with a $tanh$ activation.

The action space consists of one discrete action to determine the command to take (i.e., build supply depot, build barrack, train SCV, train marine, select point on screen, attack point on screen, select army) and two actions to indicate where to commence the action on the screen (spatial action).
For example, with a command to build a barrack, the spatial action determines where the barrack will be built, and to attack the spatial action determines where the marines will attack.
It is important to note that not all commands rely on a corresponding spatial action.
For example, when issuing a command to train a marine, the spatial action is ignored.

In addition, several rules are implemented to ensure that the mini-game progresses as expected.
Firstly, workers (SCVs) cannot attack so that the agent will not find and attempt to defeat Zerglings with the workers.
Secondly, Zerglings cannot enter the base so that Zerglings do not overrun the base before marines are built. 

The \textit{BM+FDZ} agent is rewarded for each marine built and each Zergling killed.
Specifically, a +5 reward is imposed when a marine is built and +10 reward when a Zergling is killed.
Table \ref{table:SC_reward} shows small rewards and penalties that are given to facilitate the agent to achieve these goals.

\begin{table}[!t]
\caption{Hyper-parameters used in the MuJoCo simulations.}
\label{table:hyp1}
\begin{center}
\begin{tabular}{c|c}
Hyperparameter & Value \\ \hline
Num. of rollout steps & 2048 \\
Num. training epochs & 10 \\
Discount ($\gamma$) & 0.99 \\
Learning rate & 3e-4 \\
GAE parameter ($\lambda$) & 0.95 \\
Beta entropy & 0.0001 \\
Cross-entropy weight ($\beta_0$) & 0.2 \\
Reacher - Advantage Threshold ($\zeta$) & 0.8 \\
Reacher - Num. REPAINT iterations & 15 \\
Ant - Num. REPAINT iterations & 50 \\
\end{tabular}
\end{center}
\end{table}

\begin{table}[!t]
\caption{Hyper-parameters used in the DeepRacer simulations.}
\label{table:hyp2}
\begin{center}
\begin{tabular}{c|c}
Hyperparameter & Value \\ \hline
Num. of rollout episodes & 20 \\
Num. of rollout episodes when using $\pi_\text{teacher}$ & 2 \\
Num. training epochs & 8 \\
Discount ($\gamma$) & 0.999 \\
Learning rate & 3e-4 \\
GAE parameter ($\lambda$) & 0.95 \\
Beta entropy & 0.001 \\
Cross-entropy weight ($\beta_0$) & 0.2 \\
Advantage Threshold ($\zeta$) & 0.2 \\
Single-car - Num. REPAINT iterations & 4 \\
Multi-car - Num. REPAINT iterations & 20 \\
\end{tabular}
\end{center}
\end{table}

\begin{table}[!t]
\caption{Hyper-parameters used in the StarCraft II simulations.}
\label{table:hyp3}
\begin{center}
\begin{tabular}{c|c}
Hyperparameter & Value \\ \hline
Num. of rollout episodes & 2 \\
Num. of rollout episodes when using $\pi_\text{teacher}$ & 2 \\
Num. training epochs & 6 \\
Discount ($\gamma$) & 0.99 \\
Learning rate & 3e-5 \\
GAE parameter ($\lambda$) & 0.95 \\
Beta entropy & 0.01 \\
Cross-entropy weight ($\beta_0$) & 0.1 \\
Advantage Threshold ($\zeta$) & 0.2 \\
Num. REPAINT iterations & 25 \\
\end{tabular}
\end{center}
\end{table}

\subsection{Hyper-parameters}
We have implemented our algorithms based on Intel Coach\footnote{\url{https://github.com/NervanaSystems/coach}}. The MuJoCo environments are from OpenAI Gym\footnote{\url{https://gym.openai.com/envs/\#mujoco}}. The StarCraft II learning environments are from DeepMind's PySC2\footnote{\url{https://github.com/deepmind/pysc2}}. Regarding the advantage estimates, we use the generalized advantage estimator (GAE) \cite{schulman2015high}. If not specified explicitly in the paper, we always use Adam as the optimizer with minibatch size as 64, clipping parameter $\epsilon$ as 0.2, and $\beta_{k+1}=0.95\beta_k$ throughout the experiments. The other hyper-parameters are presented in Tables \ref{table:hyp1}-\ref{table:hyp3}.

\section{Extensive Experimental Results}\label{sec:more-exp}
\subsection{Discussion on Alternating Ratios}\label{sec:exp-ratio}
In Algorithm~\ref{alg:transfer-PPO}, we alternate representation transfer and instance transfer after each iteration. Here, we aim to illustrate the effect of using different alternating ratios by the MuJoCo-Reacher environment. We compare the 1:1 alternating with a 2:1 ratio, namely, two on-policy representation transfer (kickstarting) iterations before and after an off-policy instance transfer iteration. The evaluation performance is shown in Figure~\ref{fig:reacher_append}. When the teacher task is similar to the target task, adopting more kickstarted training iterations leads to faster convergence, due to the policy distillation term in the loss function. On the other hand, when the task similarity is low, instance transfer contributes more to the knowledge transfer due to the advantage-based experience selection. Therefore, we suggest to set the alternating ratio in Algorithm~\ref{alg:transfer-PPO}, or the $\alpha_1$ and $\alpha_2$ parameters in Algorithm~\ref{alg:repaint} and Algorithm~\ref{alg:transfer-PPO}, according to the task similarity between source and target tasks. However, the task similarity is usually unknown in most of the real-world applications, or the similarities are mixed when using multiple teacher policies. It is interesting to automatically learn the task similarity and determine the best ratio/parameters before actually starting the transfer learning. We leave the investigation of this topic as a future work.

\begin{figure}[!t]
    \centering
    \begin{subfigure}{.45\linewidth}
        \centering
        \includegraphics[width=\linewidth]{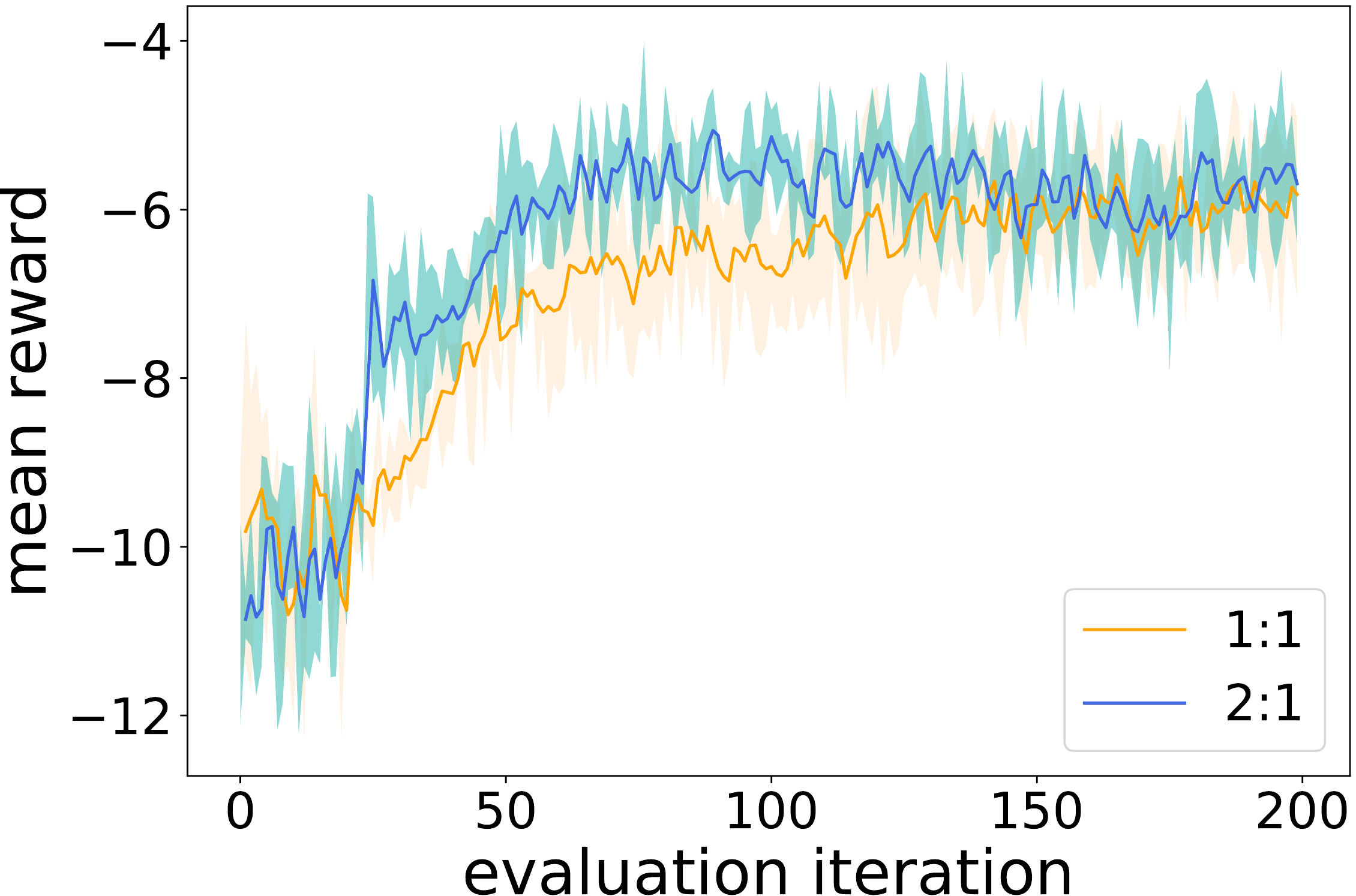}
    \end{subfigure}
    \hspace{.3em}
    \begin{subfigure}{.45\linewidth}
        \centering
        \includegraphics[width=\linewidth]{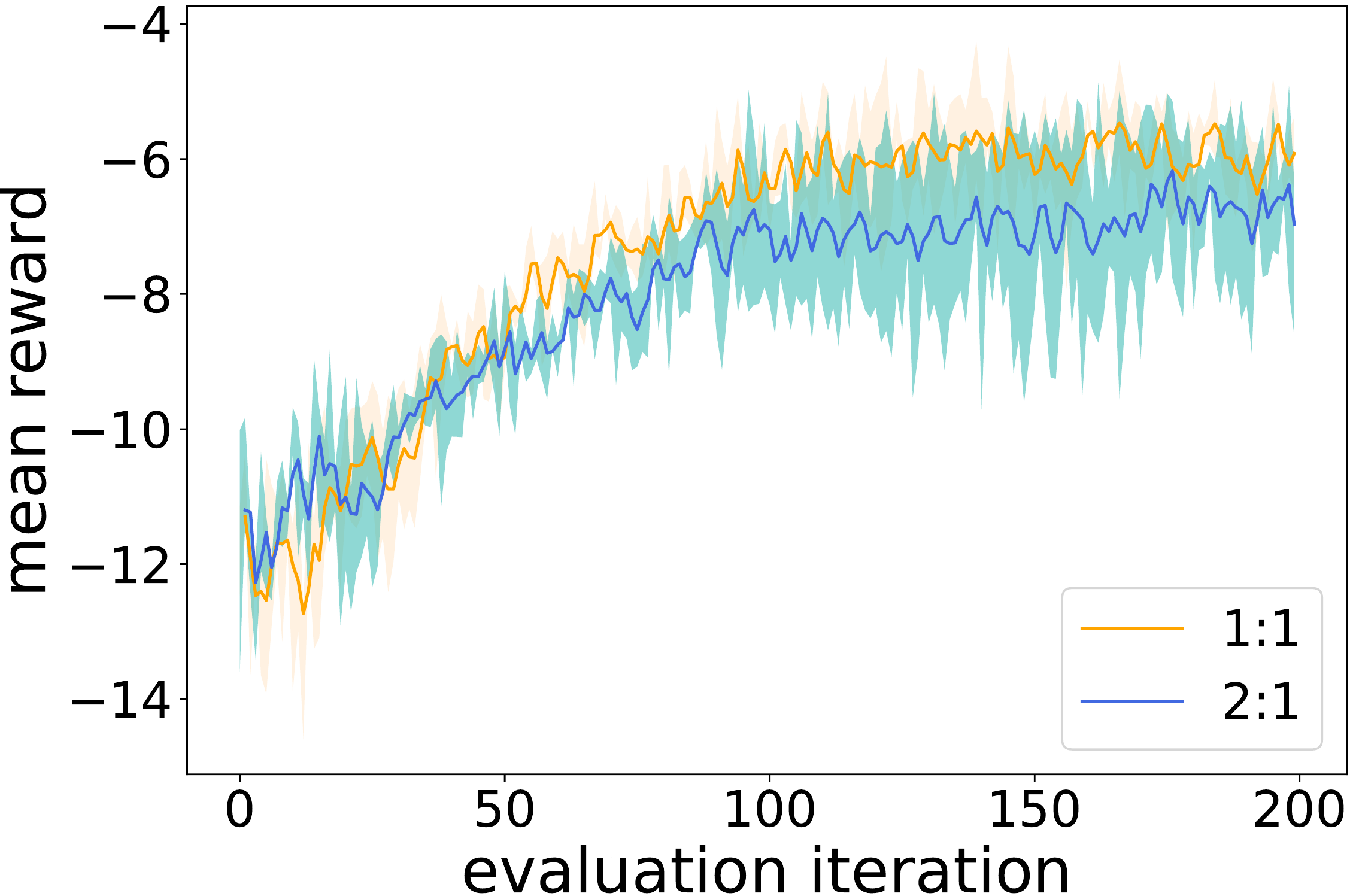}
    \end{subfigure}
    \caption{Evaluation performance for MuJoCo-Reacher, averaged across five runs. Left: Teacher task is similar to the target task. Right: Teacher task is dissimilar to the target task.}
    \label{fig:reacher_append}
\end{figure}

\begin{figure}[!t]
    \centering
    \includegraphics[width=.6\linewidth]{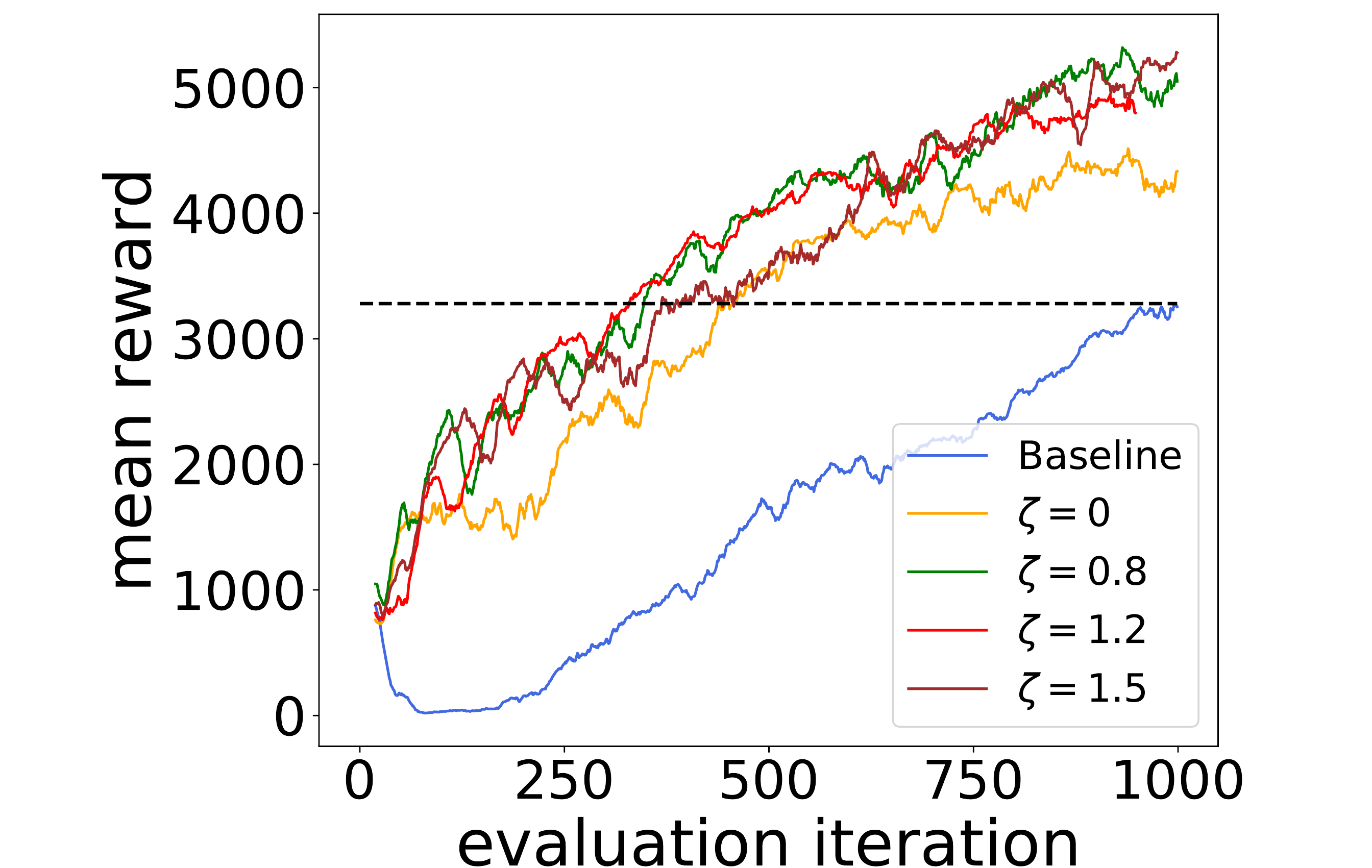}
    \caption{REPAINT performance for MuJoCo-Ant with different advantage thresholds in experience selection, averaged across three runs. Same teacher policy is used for all thresholds.}
    \label{fig:ant-zeta}
    % \vspace{-0.2em}
\end{figure}

\subsection{Advantage Threshold Robustness in MuJoCo-Ant}\label{sec:ant-threshold}
Figure~\ref{fig:ant-zeta} indicates that our REPAINT algorithm is robust to the threshold parameter when $\zeta>0$. Similar learning progresses are observed from training with different $\zeta$ values.

\subsection{More Results on DeepRacer Single-car Time Trial}\label{sec:single-car-exp}
In the DeepRacer single-car time-trial task, we also study the effect of different cross-entropy weights $\beta_k$ and instance filtering thresholds $\zeta$, as mentioned in the paper. We first present the results of instance transfer learning with different $\zeta$ values in Figure~\ref{fig:single-car-zeta}, where we can again see that our proposed advantage-based experience replay is robust to the threshold parameter.

\begin{figure}[!t]
    \centering
    \begin{subfigure}{.45\linewidth}
        \centering
        \includegraphics[width=\linewidth]{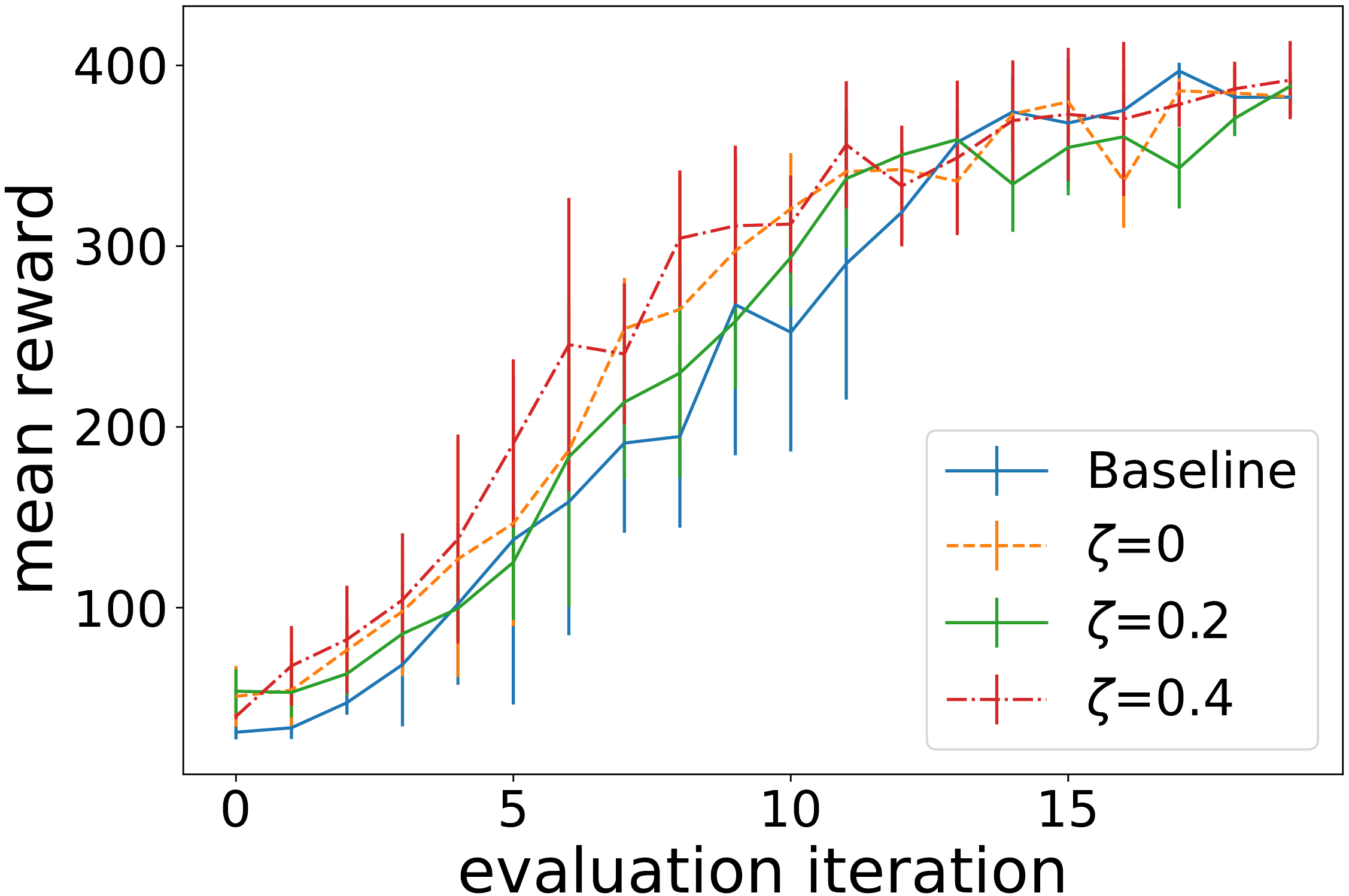}
        \caption{Outer-lane task with inner-lane teacher}
    \end{subfigure}
    \hspace{.3em}
    \begin{subfigure}{.45\linewidth}
        \centering
        \includegraphics[width=\linewidth]{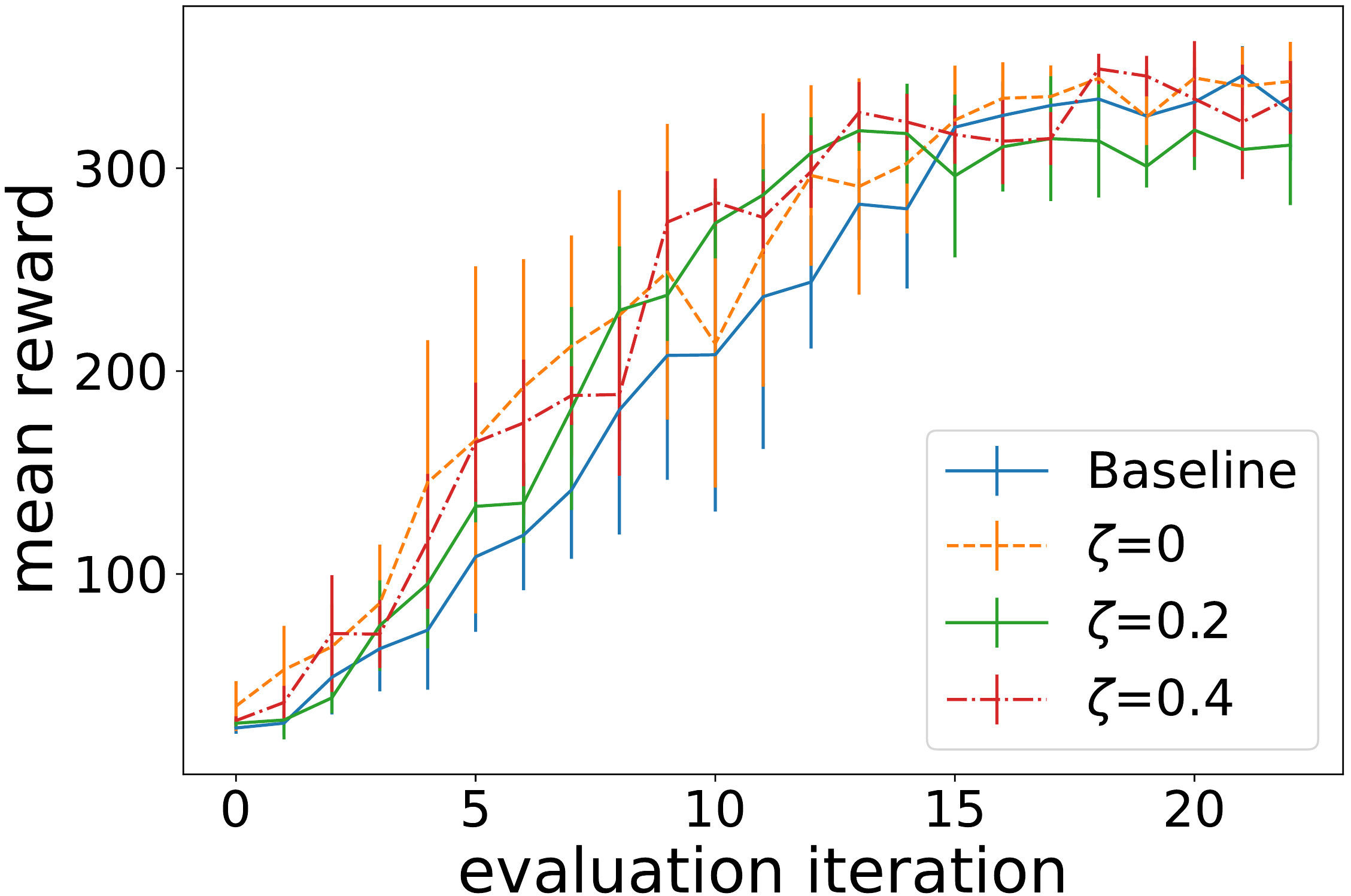}
        \caption{Inner-lane task with outer-lane teacher}
    \end{subfigure}
    \caption{Evaluation performance with respect to different $\zeta$'s, averaged across five runs.}
    \label{fig:single-car-zeta}
\end{figure}

\begin{figure}[!t]
    \centering
    \begin{subfigure}{.45\linewidth}
        \centering
        \includegraphics[width=\linewidth]{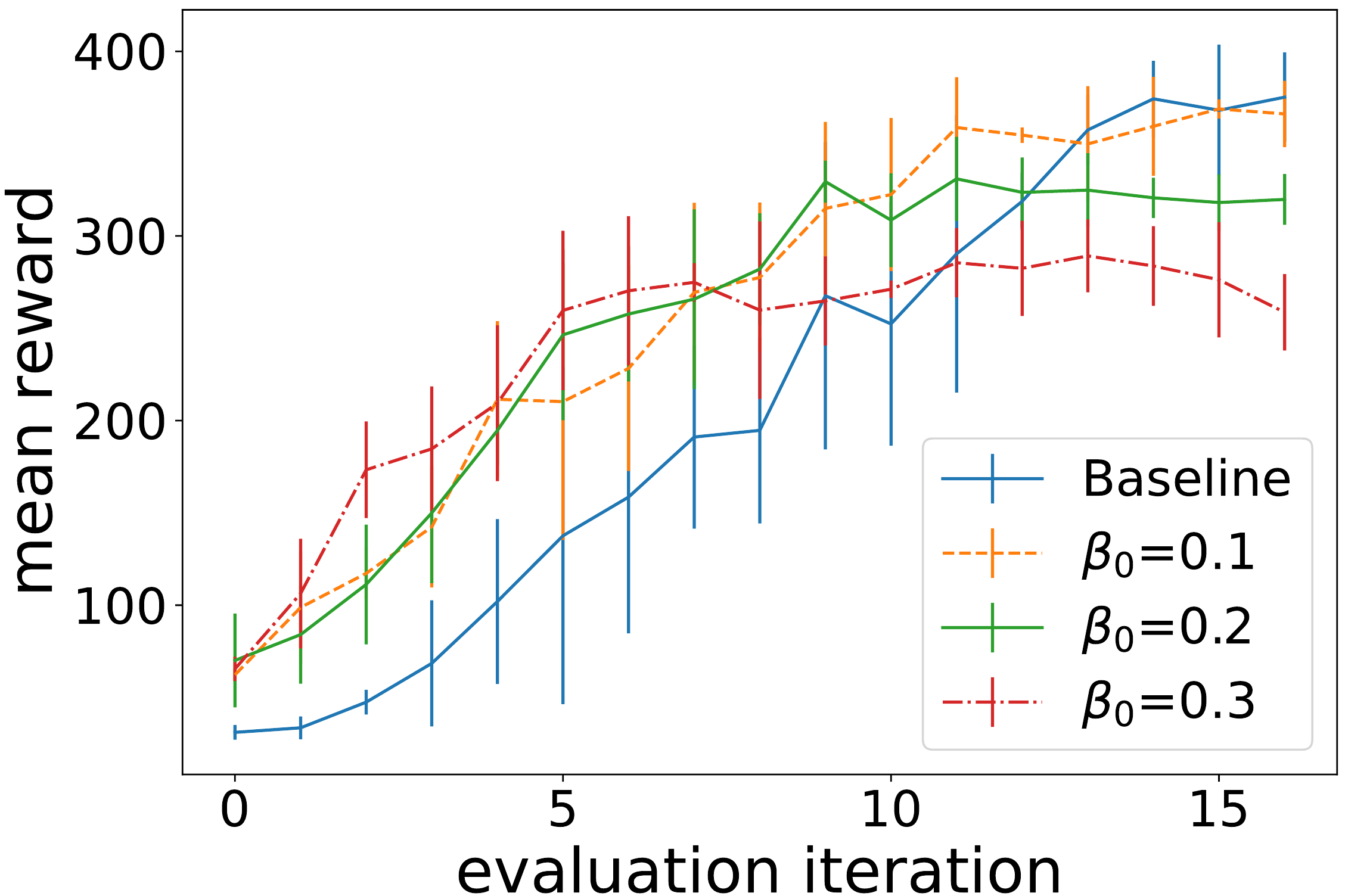}
        \caption{Outer-lane task with inner-lane teacher}
    \end{subfigure}
    \hspace{.3em}
    \begin{subfigure}{.45\linewidth}
        \centering
        \includegraphics[width=\linewidth]{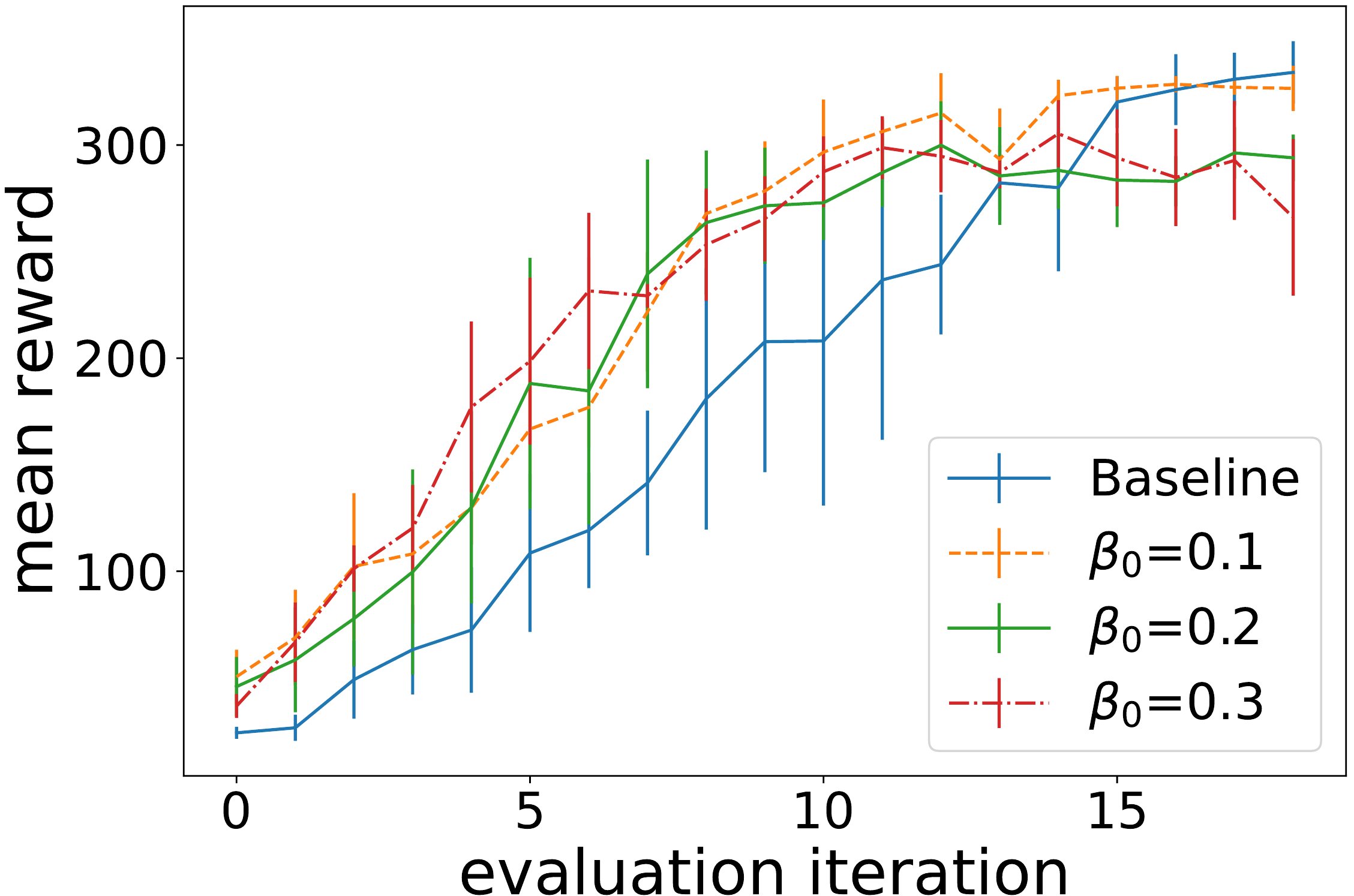}
        \caption{Inner-lane task with outer-lane teacher}
    \end{subfigure}
    \caption{Evaluation performance with respect to different initial $\beta_0$'s, averaged across five runs. Here we fix the $\beta$ update to be $\beta_{k+1}=0.95\beta_{k}$.}
    \label{fig:single-car-beta}
\end{figure}

\begin{figure}[!t]
    \centering
    \begin{subfigure}{.45\linewidth}
        \centering
        \includegraphics[width=\linewidth]{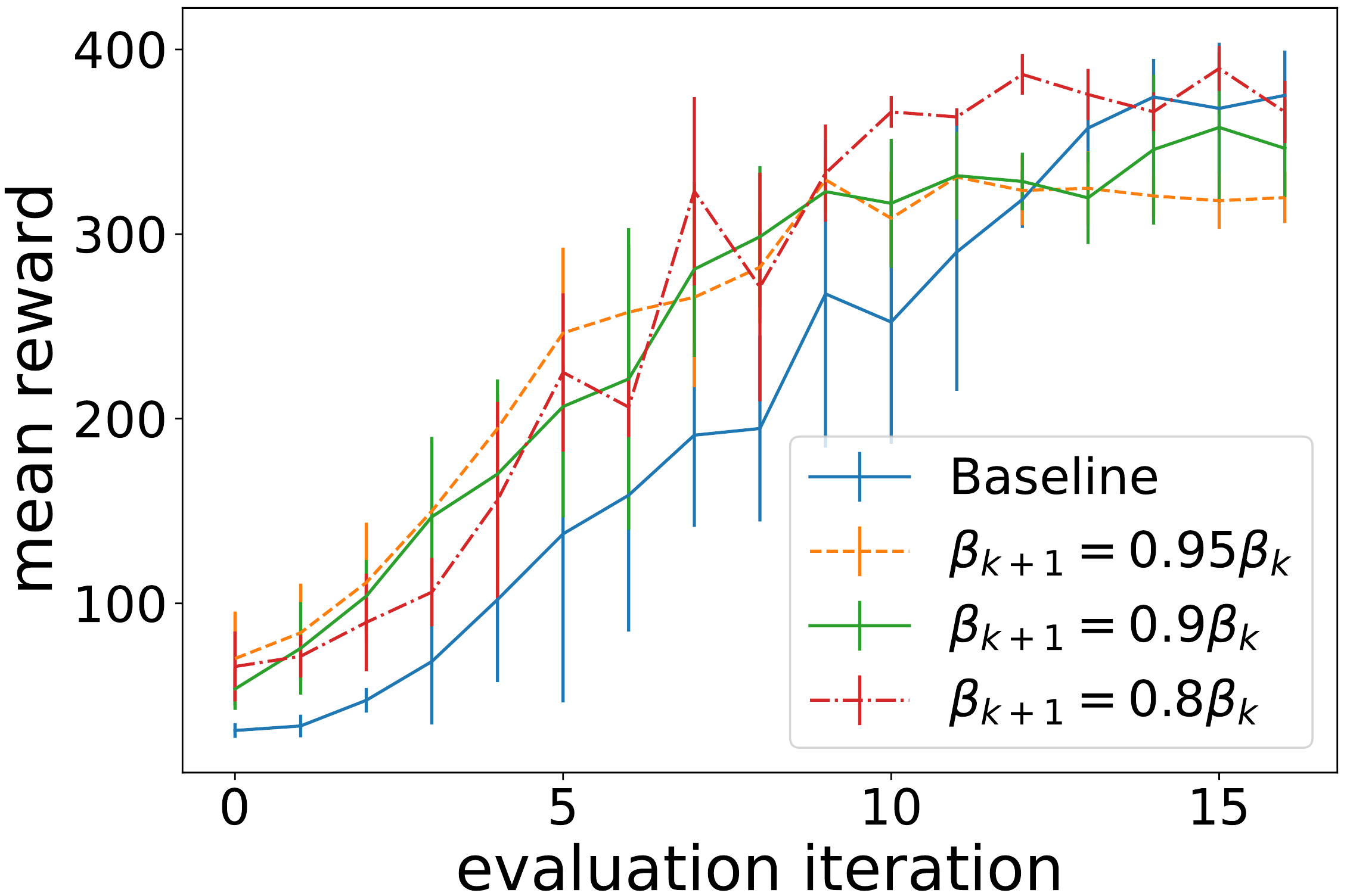}
        \caption{Outer-lane task with inner-lane teacher}
    \end{subfigure}
    \hspace{.3em}
    \begin{subfigure}{.45\linewidth}
        \centering
        \includegraphics[width=\linewidth]{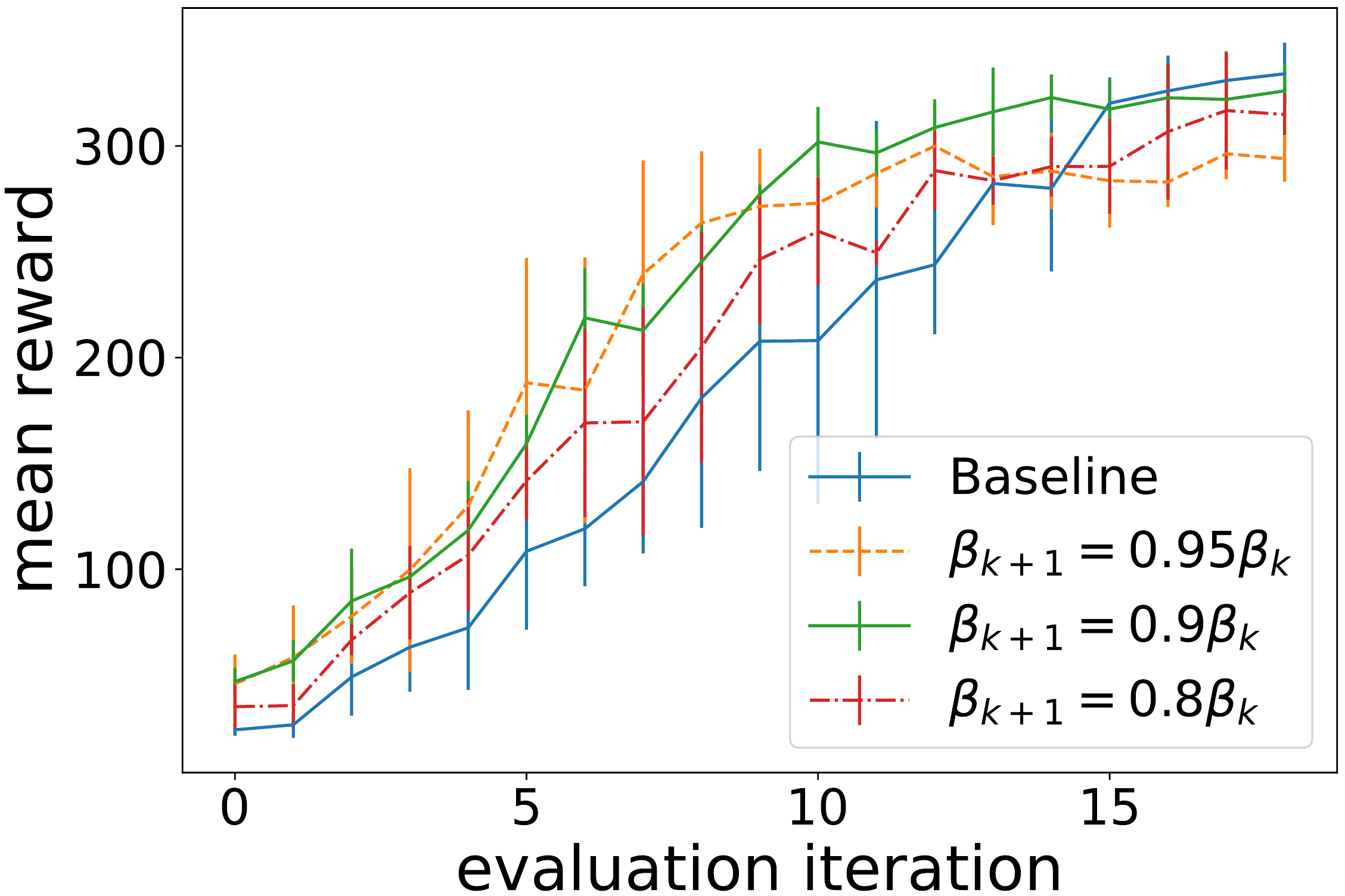}
        \caption{Inner-lane task with outer-lane teacher}
    \end{subfigure}
    \caption{Evaluation performance with respect to different $\beta$ schedules, averaged across five runs.}
    \label{fig:single-car-beta2}
\end{figure}

\begin{table*}[!t]
\caption{Summary of wall-clock time of experiments.}
\label{table:moresummary}
\begin{center}
\begin{small}
\begin{tabular}{cccccccc}
\toprule
\multirow{2}{*}{Env.} & Training & Teacher & Target & $\text{T}_\text{Baseline}$ & $\text{T}_\text{KS}$ & $\text{T}_\text{IT}$ & $\text{T}_\text{REPAINT}$ \\
& hardware & type & score & (hrs) & (pct. reduced) & (pct. reduced) & (pct. reduced) \\
\midrule
 \multirow{2}{*}{Reacher} & \multirow{2}{*}{laptop} & similar & \multirow{2}{*}{-7.4} & \multirow{2}{*}{$2.1$} & $0.6\ (71.4\%)$ & $1.1\ (47.6\%)$ & $0.4\ (81.0\%)$ \\
 &  & different & & & $0.9\ (57.1\%)$ & $1.4\ (33.3\%)$ & $0.6\ (71.4\%)$ \\
 \midrule
 Ant & laptop & similar & 3685 & $19.1$ & $8.0\ (58.1\%)$ & $12.8\ (33.0\%)$ & $7.5\ (60.7\%)$ \\
 \midrule
 \multirow{2}{*}{Single-car} & AWS, p2 & different & 394 & $2.2$ & Not achieved & Not achieved &$1.5\ (31.8\%)$ \\
 & AWS, p2 & different & 345 & $2.3$ & Not achieved & Not achieved &$1.5\ (34.8\%)$ \\
 \midrule
 \multirow{2}{*}{Multi-car} & AWS, p2 & sub-task & 1481 & $16.4$ & $4.8\ (70.7\%)$ & $12.6\ (23.2\%)$ & $4.5\ (72.6\%)$ \\
 & AWS, p2 & diff/sub-task & 2.7 & $9.6$ & $9.3\ (3.1\%)$ & $8.3\ (13.5\%)$ & $3.7\ (61.5\%)$ \\
\bottomrule
\end{tabular}
\end{small}
\end{center}
\end{table*}

We then study the performance of training with different cross-entropy loss weights $\beta_k$. First, we fix the diminishing factor to be 0.95, namely, $\beta_{k+1}=0.95\beta_{k}$, and test different $\beta_0$'s. From Figure~\ref{fig:single-car-beta}, we can see that training with all $\beta_0$ values can improve the initial performance compared to the baseline. However, when the teacher task is different from the target task, larger $\beta_0$ values, like 0.3, may reduce the agent's asymptotic performance since the agent overshoots learning from teacher policy. In addition, we then fix $\beta_0=0.2$ and test different $\beta_k$ schedules. The results are shown in Figure~\ref{fig:single-car-beta2}. We can observe some trade-offs between training convergence time and final performance. By reducing the $\beta$ values faster, one can improve the final performance but increase the training time that needed to achieve some certain performance level. It is of interest to automatically determine the best $\beta_k$ values during training, which needs further investigation. We leave it as another future work.

\begin{figure}[!t]
    \begin{subfigure}{.9\linewidth}
        \centering
        \includegraphics[width=\linewidth]{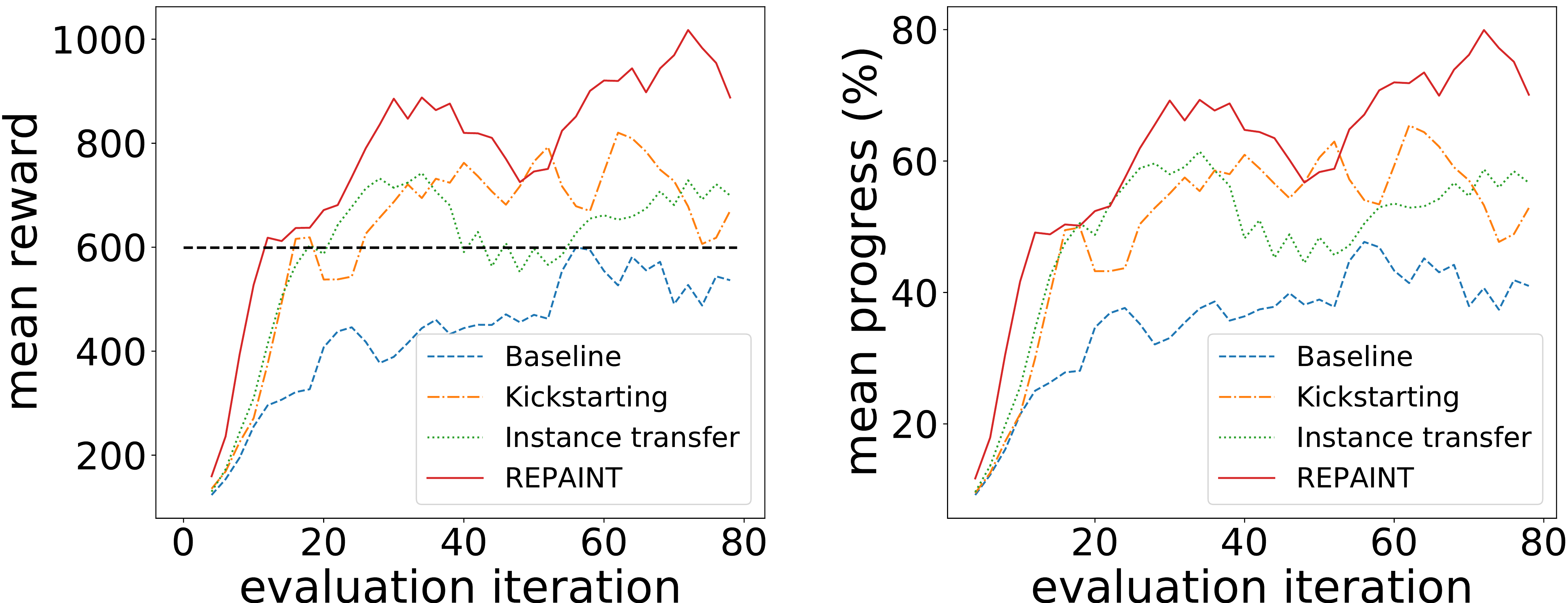}
        \caption{Task with advanced reward}
    \end{subfigure}
    \begin{subfigure}{.9\linewidth}
        \centering
        \includegraphics[width=\linewidth]{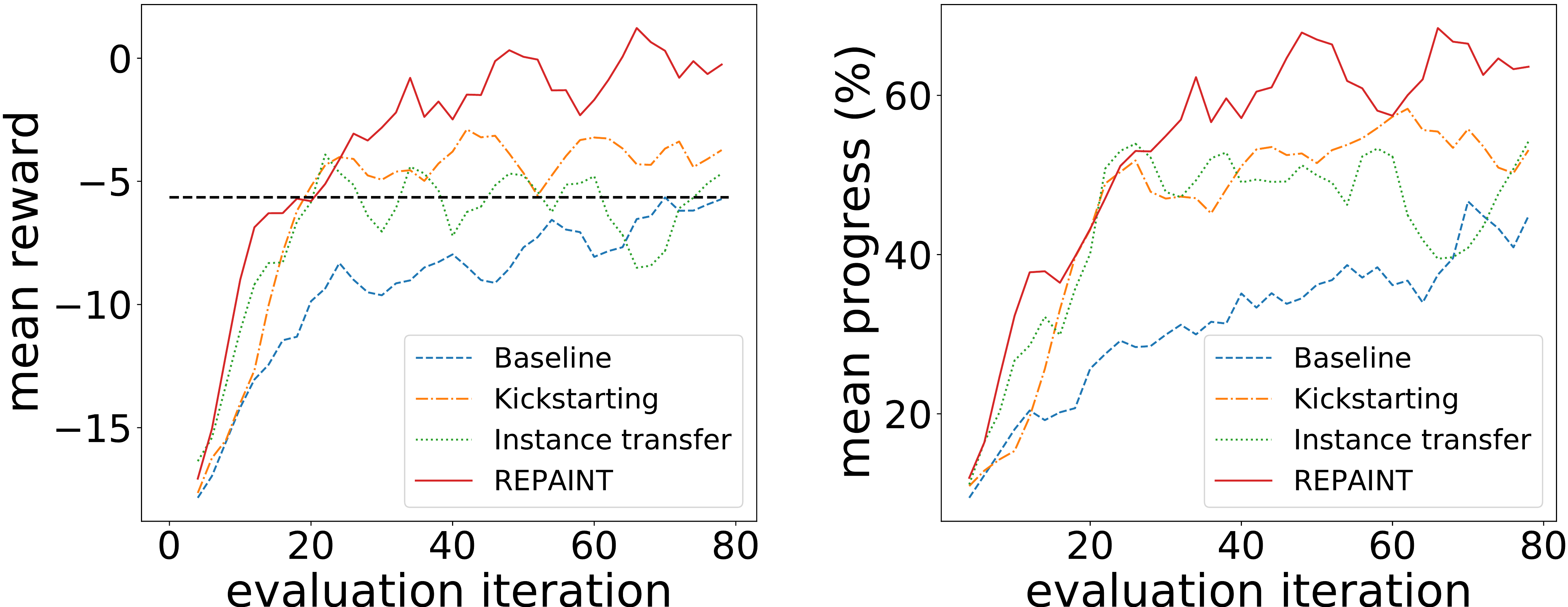}
        \caption{Task with progress-based reward}
    \end{subfigure}
    \caption{Evaluation performance for DeepRacer multi-car racing against bot cars, using 4-layer CNN. The plots are smoothed for visibility.}
    \label{fig:multi-car-normal}
\end{figure}

\subsection{Neural Network Architectures}\label{sec:more-exp-nn}
For completeness of the experiments, we also provide some results regarding different neural network architectures in this section. Take the DeepRacer task of multi-car racing against bot cars as an example, we have used three-layer CNN as the default architecture in experiments. Here, we present the comparison of REPAINT against other baselines with the evaluation performance using four-layer CNN (Figure~\ref{fig:multi-car-normal}) and five-layer CNN (Figure~\ref{fig:multi-car-deep}).

\begin{figure}[!t]
    \begin{subfigure}{.9\linewidth}
        \centering
        \includegraphics[width=\linewidth]{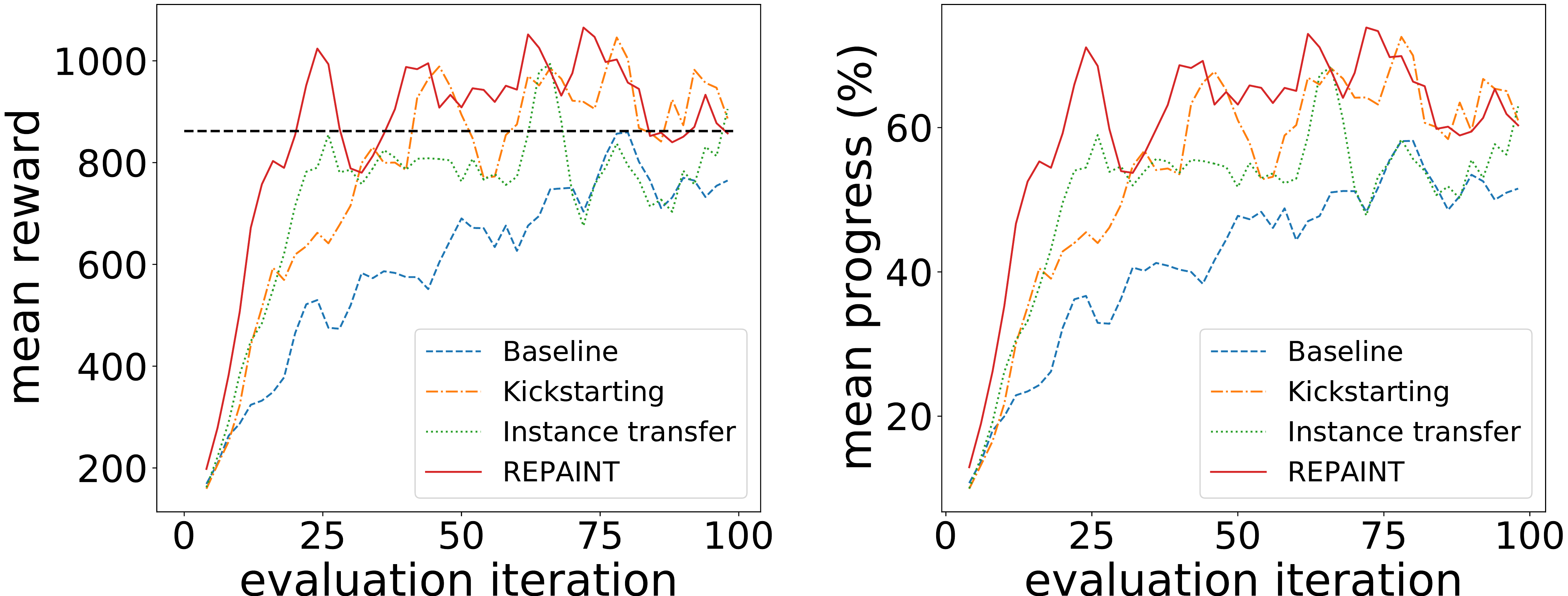}
        \caption{Task with advanced reward}
    \end{subfigure}
    \begin{subfigure}{.9\linewidth}
        \centering
        \includegraphics[width=\linewidth]{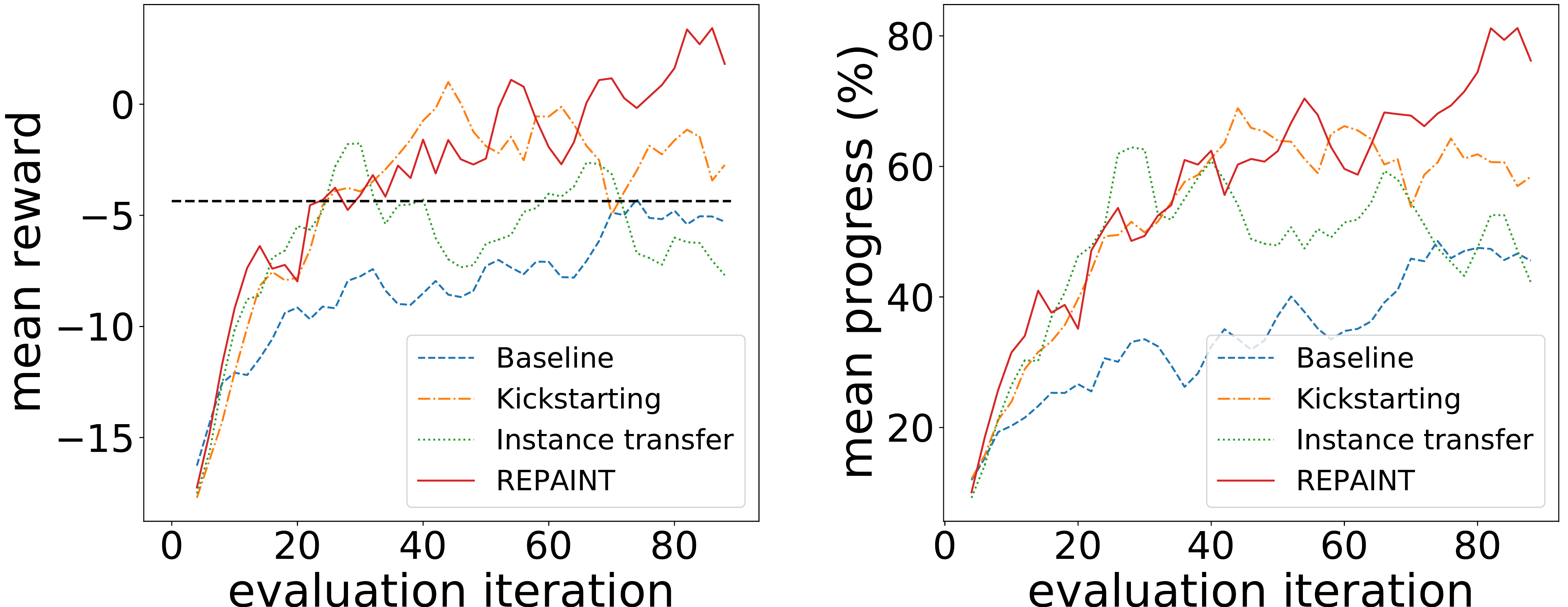}
        \caption{Task with progress-based reward}
    \end{subfigure}
    \caption{Evaluation performance for DeepRacer multi-car racing against bot cars, using 5-layer CNN. The plots are smoothed for visibility.}
    \label{fig:multi-car-deep}
\end{figure}

\subsection{Summary of Wall-Clock Training Time}\label{sec:moresum}
In addition to the summary of reduction performance with respect to number of training iterations presented in Table~\ref{table:summary}, we also provide the data of wall-clock time in Table~\ref{table:moresummary}. Note that we run StarCraft II experiments using different laptops, the comparison might not be convincing, and hence is omitted here. Again, we can see a significant reduction by training with REPAINT, which reaches at least 60\% besides the DeepRacer single-car time trial. The kickstarted training performs well when a similar teacher policy is used. Although training with only instance transfer cannot boost the initial performance, it still reduces the training cost to achieve some specific performance level.

\section{Convergence of Off-policy Instance Transfer}\label{sec:conv-appendix}
In order to apply the two time-scale stochastic approximation theory \cite{bhatnagar2009natural,karmakar2018two} for the convergence proof, the off-policy instance transfer learning is required to satisfy Assumptions (A1)-(A7) in \citet{holzleitner2020convergence}. We now discuss what assumptions we need to impose and how our instance transfer meets those properties.

First of all, regarding Assumptions (A1) and (A7), we can add some regularization terms in practice. For example, in our experiments for this paper, we have added weight decay, entropy regularization, and KL divergence terms.

Similar to \citet{holzleitner2020convergence}, in order to satisfy Assumptions (A2) and (A6), we need to make assumptions on the loss functions for actor and critic, i.e., Assumptions (L1)-(L3) in \citet{holzleitner2020convergence}. As mentioned before, the actor loss is denoted by $J_\text{ins}(\theta)$. Since the Q-function estimates in $J_\text{ins}$ involves the critic function, we denote the loss by $J_\text{ins}(\theta,\nu)$. We also denote the critic loss by $J_\text{critic}(\theta, \nu)$. Since the actor $\pi_\theta$ and the critic $V_\nu$ are approximated by deep neural networks, they are considered to be sufficiently smooth. Moreover, we should also assume sufficient smoothness for the two loss functions.
\begin{assumption}\label{ass:smooth1}
The loss functions $J_\text{ins}(\theta,\nu)$ and $J_\text{critic}(\theta,\nu)$ have compact support and are at least three times continuously differentiable with respect to $\theta$ and $\nu$. %In addition, the parameter spaces for neural networks $\pi_\theta$ and $V_\nu$ are assumed to be bounded.
\end{assumption}
Next, for each starting point $(\theta_0, \nu_0)$, we want to find a neighborhood such that it contains only one critical point. Therefore, we further make the following two assumptions.
\begin{assumption}\label{ass:smooth2}
For each $\theta$, all critical points of $J_\text{critic}(\theta,\nu)$ are isolated local minima and there are only finitely many. The local minima $\{\lambda_i(\theta)\}_{i=1}^{k(\theta)}$ can be expressed locally as at least twice continuously differentiable functions with associated domains of definitions $\{W_{\lambda_i(\theta)}\}_{i=1}^{k(\theta)}$. 
\end{assumption}

\begin{assumption}\label{ass:smooth3}
Locally in $W_{\lambda_i(\theta)}$, $J_\text{ins}(\theta,\lambda_i(\theta))$ has only one local minimum.
\end{assumption}

Based on the above assumptions, for a fixed starting point $(\theta_0, \nu_0)$, we can construct a neighborhood $W_0\times U_0$, which contains unique local minimum. Assumption (A3) is not related to the instance transfer developed here, hence is omitted. We can either make the assumption explicitly for the update process, or follow the treatment mentioned in \citet{holzleitner2020convergence}, e.g., using online stochastic gradient descent (SGD) for update.

The next assumption we need to make is on the learning rates, i.e., Assumption (A4) in \citet{holzleitner2020convergence}. Denote the learning rates for actor and critic by $a_k$ and $b_k$, respectively. 
\begin{assumption}\label{ass:lr}
The learning rates $a_k$ and $b_k$ should satisfy:
\begin{align*}
\sum_k a_k = \infty,\quad \sum_k a_k^2 <\infty, \\
\sum_k b_k = \infty,\quad \sum_k b_k^2 <\infty, 
\end{align*}
and $\lim_{k\to\infty}a_k/b_k = 0$. Moreover, $a_k$ and $b_k$ are non-increasing for all $k\ge 0$.
\end{assumption}

At last, Assumption (A5) is satisfied as long as the transition kernels for the MDPs are continuous with respect to the weak topology in the space of probability measures. Therefore, after imposing Assumptions \ref{ass:smooth1}-\ref{ass:lr}, we can directly follow the two time-scale stochastic approximation theory \citep{karmakar2018two} and get that our proposed off-policy instance transfer can converge to some local optimum almost surely under the assumptions.

\section{Convergence Rate and Sample Complexity for REPAINT}\label{sec:rate-appendix}
The analysis and proof in this section is adapted from \citet{kumar2019sample}. Without loss of generality, we first assume that the teacher (source) task and student (target) task share the same state and action spaces $\mathcal{S}\times\mathcal{A}$. Then we make the following assumptions on the regularity of the student task and the parameterized student policy $\pi_\theta$.

% We first make the standard assumption on the regularity of the MDP problem for the teacher task. We relax this condition for the student task to the state-action pairs sampled from the teacher task because in Algorithm 1, we sample from the teacher policy to update the critic and actor.

\begin{assumption}
The reward function for student task is uniformly bounded. Namely, denote the reward by $R_\text{student}$. Then there exists a positive constant $U_\text{student}$, such that $R_\text{student}(s, a) \in [0, U_\text{student}]$ for any $(s, a) \in {\mathcal S} \times {\mathcal A}$. %For all $(s, a) \in S_\text{student} \times {\mathcal A_\text{student}}$, the absolute value of the student rewards $R_\text{student}$ are bounded by $U_\text{student}$, i.e., $R_\text{student}(s, a) \in [0, U_\text{teacher}]$
\end{assumption}

% For simplicity, we will assume that the student and teacher policies have the same parameter space $\Theta$.
Since the policy (actor) is parameterized by neural networks, it is easy to see that $\pi_\theta$ is differentiable. In addition, we make an assumption on the corresponding score function.

\begin{assumption}
The score function $\nabla \log \pi_\theta (a|s)$ is Lipschitz continuous and has bounded norm, namely, for any $(s,a)\in\mathcal{S}\times\mathcal{A}$, there exist positive constants $L_\Theta$ and $B_\Theta$, such that
\begin{equation}
    \norm{\nabla \log \pi_{\theta_1} (a|s) - \nabla \log \pi_{\theta_2} (a|s)} \leq L_\Theta \|\theta_1-\theta_2\|, \forall \theta_1, \theta_2 ,
\end{equation}
and
\begin{equation}
    \norm{\nabla \log \pi_\theta (a|s)} \leq B_\Theta, \forall \theta.
\end{equation}
% The student score function $\nabla \log \pi_{\theta} (a|s)$ is $L_{\Theta}$-Lipschitz and has bounded norm for any $(s, a) \in {\mathcal S_\text{student}} \times {\mathcal A_\text{student}} $, i.e., 
% \begin{equation}
%     \norm{\nabla \log \pi_{\theta} (a|s)} \leq B_{\Theta}, \forall \theta \in \Theta.
% \end{equation}
\end{assumption}

Note that by the above two assumptions, one can also obtain that the corresponding Q-function and objective function are also absolutely upper bounded. In order to prove our theorem, we also need the following i.i.d. assumption.

\begin{assumption}
In both teacher and student tasks, the random tuples $(s_t, a_t, s_t', a_t'), t=0, 1, \dots$ are drawn from the stationary distribution of the Markov reward process independently across time.
\end{assumption}

In practice, the i.i.d assumption does not hold \cite{dalal2017concentration}. But it is common when dealing with the convergence bounds in RL.

To ensure the Q-function evaluation and the stochastic estimate of the gradient unbiased, we consider the case where the Q-function admits a linear parameterization of the form $\hat{Q}^{\pi_{\theta}} (s, a) = \xi^T \varphi (s, a)$ where $\xi$ is a finite vector of real numbers of size $p$ and $\varphi: {\mathcal S} \times {\mathcal A} \rightarrow \mathbb{R}^p$ is a nonlinear feature map. In practice, we normalize the feature representation to guarantee the feature norm is bounded. Therefore, we can assume the norm boundedness of the feature map.

\begin{assumption}
For any state-action pair $(s, a) \in \mathcal{S} \times \mathcal{A}$, the norm of the student's feature representation $\varphi (s, a)$ is bounded by a constant $C_\text{student}$.
\end{assumption}

To simplify the proofs, we will consider that the experiences are not filtered% based on teacher advantages as described in Algorithm~\ref{alg:repaint}
. The experience filtering results in possibly biased estimate of the gradient and impacts the variance bounds \cite{Greensmith2004variance}.
% \begin{assumption}
% Suppose that there is no experience filtering based on Q-values (or advantages).
% \end{assumption}
Next, we will assume that the update of the critic (Q-function) converges by some rate.
\begin{assumption}\label{assm:criticrate}
The expected error of the critic parameter for the student task is bounded by $O(k^{-b})$ for some $b\in (0, 1]$, i.e., there exists a positive constant $L_1$, such that
\begin{equation}
    \mathbb{E}(\norm{\xi_k - \xi_{\star}}) \leq L_1 k ^{-b}.
\end{equation}
\end{assumption}

Now we consider the update for actor in REPAINT. Assume that the learning rates $\alpha_1$ and $\alpha_2$ are also iteration dependent. Then we rewrite the actor update in Algorithm~\ref{alg:repaint} as
%For simplicity, we will consider the actor update in terms of values (maximize $J$) instead of advantages (maximize $L$), i.e., 
\begin{equation}
    \theta_{k+1} = \theta_k + \alpha_{1,k}\nabla_\theta J_\text{rep}(\theta_k) + \alpha_{2,k}\nabla_\theta J_\text{ins}(\theta_k). 
\end{equation}
% where $J_\text{rep}$ and $J_\text{ins}$ is the long-term return from representation transfer and instance transfer, respectively.

% Next, we derive the estimates of the gradient of the objectives for both transfer methods based on the assumptions above. We modify the representation transfer objective function given in [[Equation 4.1]] to use the standard RL objective and write the gradient as follows:
For on-policy representation transfer, the gradient is defined by
\begin{equation}
    \nabla J_\text{rep}(\theta) = \nabla J_\text{RL}(\theta) - \beta_k \nabla J_\text{aux}(\theta).
\end{equation}
%where $\nabla$ is with respect to the student task's policy parameter $\theta$. Using the Policy Gradient Theorem [[cite]]
More specifically,
\begin{equation}
    \nabla J_\text{RL}(\theta) = \mathbb{E}_{\substack{s\sim d_\text{student}\\a\sim\pi_\text{student}}} [ \nabla \log \pi_{\theta} (a|s) Q^{\pi_{\theta}} (s, a) ],
\end{equation}
where $d_\text{student}$ is the limiting distribution of states under $\pi_\theta$, and
% For the auxiliary objective, we use the definition of the cross entropy parameterized by the student task and write the gradient as
\begin{equation}
    \nabla J_\text{aux}(\theta) = \nabla H (\pi_\text{teacher} || \pi_{\theta})  = - \mathbb{E}_{\pi_\text{teacher}} [\nabla_{\theta} \log \pi_{\theta} (a|s)].
\end{equation}

% Combining the two objectives, we have
% \begin{multline} \label{eq:gradientJrep_entropy}
%     \nabla J_\text{rep}(\theta) = \mathbb{E}_{\substack{s\sim d_\text{student}\\a\sim\pi_\text{student}}} [ \nabla \log \pi_{\theta} (a|s) Q_{\pi_{\theta}} (s, a) ] \\ + \beta_k \mathbb{E}_{\substack{s\sim d_\text{teacher}\\a\sim\pi_\text{teacher}}} [\nabla_{\theta} \log \pi_{\theta} (a|s)] 
% \end{multline}

% Unbiased samples of the gradient $\nabla J_\text{rep}(\theta)$ are required to perform stochastic gradient ascent. In order to obtain an estimate to perform a rollout whose length is drawn from a geometric distribution. But the gradient estimate is $\nabla J_\text{rep}(\theta)$ is biased since the cross entropy term requires sampling from the teacher policy. One way to remedy this is to use importance sampling and rewrite the cross-entropy term with respect to the student distribution. To simplify, we will assume that $\beta_k$ is zero and address the addition of the non-zero cross entropy term separately.

For simplicity, we assume $\beta_k=0$ for all $k\ge 0$. Namely, we ignore the cross-entropy term in the proof of our theorem. However, we will present the extension of $\beta_k>0$ cases later. Then the stochastic estimate of the gradient is unbiased when the Q-function evaluation is unbiased, and is given by
\begin{equation}
    \hat{\nabla} J_\text{rep}(\theta) = \hat{Q}^{\pi_{\theta}} (s_T, a_T) \nabla \log \pi_{\theta} (a_T|s_T),
\end{equation}
where $s_T, a_T$ is the state-action pair collected following the student policy $\pi_\theta$ with some time step $T$.%from the student rollout for length $T$.

The derivation of the instance transfer gradient estimate is similar to the off-policy actor-critic \cite{degris12:offpolicyAC}, %We modify it to use value instead of advantage for simplicity in the proofs. Also, we remove the experience-based filtering to be able to use the unbiased estimate of the critic. We will address the case for experience-based filtering later.
% To estimate the gradient we use the teacher rollout of size T, i.e.,
which is defined as
\begin{equation}
    \hat{\nabla} J_\text{ins}(\theta) = \frac{\pi_{\theta} (\tilde{a}_{T}|\tilde{s}_{T})}{\pi_\text{teacher}} \hat{Q}^{\pi_{\theta}} (\tilde{s}_{T}, \tilde{a}_{T}) \nabla \log \pi_{\theta} (\tilde{a}_{T}|\tilde{s}_{T}),
\end{equation}
with some sample $\tilde{s}_T, \tilde{a}_T$ collected following the teacher policy $\pi_\text{teacher}$.

% Suppose we run a critic estimator for $T_C(k)$ steps for the student task and obtain a feature weight vector $\xi_{T_C(k)}$. Then, simulate a trajectory of length $T_k$, and update the actor policy parameters of the student task $\theta$ as:
% \begin{equation}
%     \theta_{k+1}  =  \theta_k + \alpha_1 \hat{\nabla} J_\text{rep}(\theta_k) + \alpha_2\hat{\nabla} J_\text{ins}(\theta_k) 
% \end{equation}

% In general, teacher and student share the same transition function and any state-action pairs will result in the same next state but with different value. On the other hand, for noisy environments, the next state may differ slightly, hence, we consider the more general case of having different state-action pairs being used to estimate the gradients for representation and instance transfer.

% Assume that we simulate both teacher and student trajectories at length $T_k$, after substituting for the gradients

In summary, when updating the actor network. We collect rollouts following both teacher policy and student policy, and randomly select two samples for the following online update:
\begin{equation}
\begin{aligned}
&\theta_{k+1} - \theta_k = \alpha_{1, k} \hat{Q}^{\pi_{\theta}} (s_{T_k}, a_{T_k}) \nabla \log \pi_{\theta} (a_{T_k}|s_{T_k})  \\
& + \alpha_{2, k} \frac{\pi_{\theta} (\tilde{a}_{T_k}|\tilde{s}_{T_k})}{\pi_\text{teacher}(\tilde{a}_{T_k}|\tilde{s}_{T_k})} \hat{Q}^{\pi_{\theta}} (\tilde{s}_{T_k}, \tilde{a}_{T_k}) \nabla \log \pi_{\theta} (\tilde{a}_{T_k}|\tilde{s}_{T_k}).  
\end{aligned}
\end{equation}

Next, we assume that the estimate of the objectives' gradient conditioned on some filtration is bounded by some finite variance.
\begin{assumption}
Let $\hat{\nabla} L_\text{rep} (\theta)$ and $\hat{\nabla} L_\text{ins} (\theta)$ be the estimators of $\nabla L_\text{rep} (\theta)$ and $\nabla L_\text{ins} (\theta)$, respectively. Then, there exist finite $\sigma_\text{rep}$ and $\sigma_\text{ins}$ such that
\begin{eqnarray}
    &\mathbb{E}(\norm{\hat{\nabla} L_\text{rep} (\theta) }^2| {\mathcal F}_k) \leq \frac{\sigma_\text{rep}^2}{4},&\\ 
    &\mathbb{E}(\norm{\hat{\nabla} L_\text{ins} (\theta) }^2| {\mathcal F}_k) \leq \frac{\sigma_\text{ins}^2}{4}.&
\end{eqnarray}
\end{assumption}

Since the teacher policy is a deterministic policy distribution. It is also common to assume some boundedness for $\pi_\text{teacher}$ \cite{degris12:offpolicyAC}.
\begin{assumption}
The teacher policy has a minimum positive value $b_\text{min} \in (0, 1]$, such that $\pi_\text{teacher} (a|s) \geq b_\text{min}$ for all $(s, a) \in \mathcal{S} \times \mathcal{A}$.
\end{assumption}

% Then, we have an upper bound on the step size
% \begin{equation}
%     \alpha_{\theta} = \alpha_1 + \alpha_2  \frac{\pi_{\theta} (a_{T_k}|s_{T_k})}{\pi_{teacher}} \leq \alpha_1 + \alpha_2 /b_{min}
% \end{equation}

Now we have stated all assumptions that are needed for deriving the convergence rate and sample complexity. Next, we introduce the proofs of two lemmas. The first lemma is on the Lipschitz continuity of the objective gradients. The proof can be found in, e.g., \citet{zhang2020global}.

\begin{lemma}
The objective gradients $\nabla J_\text{rep}$ and $\nabla J_\text{ins}$ are Lipschitz continuous, namely, there exist constants $L_\text{rep}$ and $L_\text{ins}$, such that for any $\theta_1$ and $\theta_2$,
\begin{eqnarray}
    &\norm{\nabla J_\text{rep} (\theta_1) - \nabla J_\text{rep} (\theta_2)} \leq  L_\text{rep} \norm{\theta_1 - \theta_2},&\\
    &\norm{\nabla J_\text{ins} (\theta_1) - \nabla J_\text{ins} (\theta_2)} \leq  L_\text{ins} \norm{\theta_1 - \theta_2}.&
\end{eqnarray}
\end{lemma}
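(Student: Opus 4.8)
The plan is to derive the claim from two standard ``MDP smoothness'' facts together with the uniform bounds that the standing assumptions already provide, and then to assemble everything by a triangle-inequality expansion. From the bounded-reward assumption one gets $\norm{Q^{\pi_\theta}}\le U_\text{student}/(1-\gamma)$ uniformly in $\theta$ and $(s,a)$; from the score-function assumption one gets $\norm{\nabla\log\pi_\theta(a|s)}\le B_\Theta$, $\norm{\nabla\log\pi_{\theta_1}(a|s)-\nabla\log\pi_{\theta_2}(a|s)}\le L_\Theta\norm{\theta_1-\theta_2}$, and, as a consequence (integrating the score), $\pi_\theta(a|s)$ and $s\mapsto\norm{\pi_{\theta_1}(\cdot|s)-\pi_{\theta_2}(\cdot|s)}_1$ are Lipschitz in $\theta$ with constants depending only on $B_\Theta$. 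These are the only ingredients the argument will use.

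For $\nabla J_\text{rep}$, recall that with $\beta_k=0$ we have $\nabla J_\text{rep}(\theta)=\nabla J_\text{RL}(\theta)=\mathbb{E}_{s\sim d_\text{student},\,a\sim\pi_\theta}[\nabla\log\pi_\theta(a|s)\,Q^{\pi_\theta}(s,a)]$, i.e.\ the ordinary policy gradient. I would first record (i) that $\theta\mapsto Q^{\pi_\theta}(s,a)$ is Lipschitz uniformly in $(s,a)$ and (ii) that $\theta\mapsto d_\text{student}$ is Lipschitz in total variation. Given (i)--(ii), write the difference $\nabla J_\text{RL}(\theta_1)-\nabla J_\text{RL}(\theta_2)$ and insert intermediate terms so as to isolate, one at a time, the change in the visitation distribution, the change in $\pi_\theta$ in the action integral, the change in $\nabla\log\pi_\theta$, and the change in $Q^{\pi_\theta}$; each resulting term is a product of exactly one ``difference'' factor (controlled by (i), (ii), or the score-function assumption) with ``bounded'' factors (the uniform bounds on $Q^{\pi_\theta}$, on $\nabla\log\pi_\theta$, and $\pi_\theta\le 1$). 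Adding the four bounds yields $L_\text{rep}$, depending on $U_\text{student},\gamma,B_\Theta,L_\Theta$. (If one keeps $\beta_k>0$, the extra term $\nabla J_\text{aux}(\theta)=-\mathbb{E}_{\pi_\text{teacher}}[\nabla_\theta\log\pi_\theta(a|s)]$ is the expectation of an $L_\Theta$-Lipschitz integrand over a $\theta$-independent distribution, hence Lipschitz, so the extension is immediate.)

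For $\nabla J_\text{ins}$ the key simplification is that the expectation defining $\nabla J_\text{ins}(\theta)=\mathbb{E}\bigl[\tfrac{\pi_\theta(\tilde a|\tilde s)}{\pi_\text{teacher}(\tilde a|\tilde s)}\,Q^{\pi_\theta}(\tilde s,\tilde a)\,\nabla\log\pi_\theta(\tilde a|\tilde s)\bigr]$ is taken over the distribution of samples collected under $\pi_\text{teacher}$, which does not depend on $\theta$; hence it suffices to show the integrand is Lipschitz in $\theta$ uniformly in $(\tilde s,\tilde a)$, and the expectation inherits the constant. Setting $w_\theta(s,a)=\pi_\theta(a|s)/\pi_\text{teacher}(a|s)$, the lower bound $\pi_\text{teacher}\ge b_\text{min}$ gives $w_\theta\le 1/b_\text{min}$, and since $\nabla_\theta w_\theta=w_\theta\nabla\log\pi_\theta$ we get $\norm{\nabla_\theta w_\theta}\le B_\Theta/b_\text{min}$, so $w_\theta$ is Lipschitz. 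Combining the boundedness and Lipschitzness of $w_\theta$, of $Q^{\pi_\theta}$ (from fact (i)), and of $\nabla\log\pi_\theta$ via a three-term product expansion gives $L_\text{ins}$, which additionally depends on $b_\text{min}$.

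The main obstacle is establishing the two background facts (i) and (ii); everything else is product-rule bookkeeping. For (i), I would iterate the Bellman equation or apply the performance-difference lemma: $V^{\pi_{\theta_1}}(s)-V^{\pi_{\theta_2}}(s)$ becomes a $\gamma$-discounted sum, over states visited under one of the policies, of a per-state gap bounded by $\norm{Q^{\pi_\theta}}\cdot\norm{\pi_{\theta_1}(\cdot|s)-\pi_{\theta_2}(\cdot|s)}_1$, and the geometric series contributes a $1/(1-\gamma)^2$ factor; then $Q^{\pi_\theta}(s,a)=r(s,a)+\gamma\mathbb{E}_{s'}V^{\pi_\theta}(s')$ transfers the bound from $V$ to $Q$. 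For (ii), write $d_\text{student}$ as a $\gamma$-discounted (or, under a uniform geometric-mixing condition, stationary) mixture of powers of the transition operator $P_\theta$, bound $P_{\theta_1}-P_{\theta_2}$ in the relevant operator norm by $\sup_s\norm{\pi_{\theta_1}(\cdot|s)-\pi_{\theta_2}(\cdot|s)}_1$, and sum the geometric series. Both are routine but tedious, so I would invoke the detailed estimates of \citet{zhang2020global} and only check that our standing assumptions (bounded reward, bounded and Lipschitz score function, and $\pi_\text{teacher}\ge b_\text{min}$ for the off-policy ratio) supply exactly the inputs their argument requires.
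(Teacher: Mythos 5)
Your proposal is correct and matches the paper's treatment: the paper itself gives no proof of this lemma, deferring entirely to \citet{zhang2020global}, and the standard argument you sketch (uniform bounds on $Q^{\pi_\theta}$ and the score function, Lipschitzness of $Q^{\pi_\theta}$ and of the visitation distribution in $\theta$, the fixed teacher sampling distribution plus $\pi_\text{teacher}\ge b_\text{min}$ for the off-policy ratio, then product-rule bookkeeping) is exactly the one that citation supplies. Since you also end by invoking the same reference for the routine estimates, there is nothing to flag.
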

For simplicity, we can let $L:=\max(L_\text{rep},L_\text{ins})$, so $L$ is the Lipschitz constant for both inequalities above. Next we will derive an approximate ascent lemma for a random variable $W_k$ defined by
\begin{equation}
\begin{aligned}
    W_k= &J_\text{rep}(\theta_k) + J_\text{ins}(\theta_k)  \\
    &- L \left( \sigma_\text{rep}^2 \sum_{j=k}^{\infty} \alpha_{1, j}^2 + \sigma_\text{ins}^2 \sum_{j=k}^{\infty} \alpha_{2, j}^2 \right).
\end{aligned}
\end{equation}

Since the rewards and score functions are bounded above (see Assumptions E.1 and E.2), then we can also get there exist constants $C_\text{rep}$ and $C_\text{ins}$, such that 
\begin{equation}
    \norm{\nabla J_\text{rep}} \leq C_\text{rep} \quad \text{and}\quad \norm{\nabla J_\text{ins}} \leq C_\text{ins}.
\end{equation}

\begin{lemma} \label{lemma:Wk}
The sequence ${W_k}$ defined above satisfies the inequality
\begin{multline}
\mathbb{E} [W_{k+1}|{\mathcal F}_k] \geq W_k \\
- (C_\text{rep}+C_\text{ins}) C_\text{student} B_{\Theta} (\alpha_{1, k} + \frac{\alpha_{2, k}}{b_\text{min}} ) \mathbb{E}[\norm{\xi_k - \xi_*}| {\mathcal F}_k]  \\ + \alpha_{1, k} \norm{\nabla J_\text{rep} (\theta_k)}^2   + \alpha_{2, k} \norm{\nabla J_\text{ins} (\theta_k)}^2  \\ + (\alpha_{1, k}+\alpha_{2, k}) \nabla J_\text{rep} (\theta_k)^\top \nabla J_\text{ins} (\theta_k)  
\end{multline}
%where $C = C_\text{rep} + C_\text{ins}$, $B_s:=B_{\Theta} C_\text{student}$ and $B_t:=B_\text{teacher} C_\text{teacher}$. 
\end{lemma}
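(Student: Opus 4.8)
The plan is to run the standard stochastic approximate-ascent argument tailored to the two-source update. First I would invoke the preceding lemma: since $\nabla J_\text{rep}$ and $\nabla J_\text{ins}$ are $L$-Lipschitz, both $J_i$, $i\in\{\mathrm{rep},\mathrm{ins}\}$, obey $J_i(\theta_{k+1})\ge J_i(\theta_k)+\nabla J_i(\theta_k)^\top(\theta_{k+1}-\theta_k)-\tfrac{L}{2}\|\theta_{k+1}-\theta_k\|^2$; summing and writing $J:=J_\text{rep}+J_\text{ins}$ and $\Delta_k:=\theta_{k+1}-\theta_k=\alpha_{1,k}\hat\nabla J_\text{rep}(\theta_k)+\alpha_{2,k}\hat\nabla J_\text{ins}(\theta_k)$ gives $J(\theta_{k+1})\ge J(\theta_k)+\nabla J(\theta_k)^\top\Delta_k-L\|\Delta_k\|^2$. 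Because $W_k$ differs from $J(\theta_k)$ only by the tail series $L(\sigma_\text{rep}^2\sum_{j\ge k}\alpha_{1,j}^2+\sigma_\text{ins}^2\sum_{j\ge k}\alpha_{2,j}^2)$, one has $W_{k+1}-W_k=J(\theta_{k+1})-J(\theta_k)+L(\sigma_\text{rep}^2\alpha_{1,k}^2+\sigma_\text{ins}^2\alpha_{2,k}^2)$; this is precisely the book-keeping that lets the second-order noise be absorbed, so after taking $\mathbb{E}[\,\cdot\mid\mathcal F_k]$ the whole proof reduces to controlling $\mathbb{E}[\nabla J(\theta_k)^\top\Delta_k\mid\mathcal F_k]$ and $\mathbb{E}[\|\Delta_k\|^2\mid\mathcal F_k]$.

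For the first-order term I would write $\mathbb{E}[\hat\nabla J_\text{rep}(\theta_k)\mid\mathcal F_k]=\nabla J_\text{rep}(\theta_k)+e_{\text{rep},k}$ and $\mathbb{E}[\hat\nabla J_\text{ins}(\theta_k)\mid\mathcal F_k]=\nabla J_\text{ins}(\theta_k)+e_{\text{ins},k}$, where the biases $e_{\cdot,k}$ stem only from using the critic parameter $\xi_k$ in the linear estimate $\hat Q^{\pi_\theta}=\xi_k^\top\varphi$ in place of $\xi_\star$ — the estimators being unbiased given an exact critic, by the policy-gradient identity for $\hat\nabla J_\text{rep}$ and by the importance-weighting change of measure for $\hat\nabla J_\text{ins}$. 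Cauchy--Schwarz with $\|\varphi(s,a)\|\le C_\text{student}$ and $\|\nabla\log\pi_\theta(a|s)\|\le B_\Theta$ yields $\|e_{\text{rep},k}\|\le B_\Theta C_\text{student}\,\mathbb{E}[\|\xi_k-\xi_\star\|\mid\mathcal F_k]$ and the same bound for $e_{\text{ins},k}$ carrying the extra importance-weight factor $\pi_\theta/\pi_\text{teacher}\le 1/b_\text{min}$, i.e. $\|e_{\text{ins},k}\|\le (B_\Theta C_\text{student}/b_\text{min})\,\mathbb{E}[\|\xi_k-\xi_\star\|\mid\mathcal F_k]$. Expanding $\nabla J(\theta_k)^\top\mathbb{E}[\Delta_k\mid\mathcal F_k]$ with $\nabla J=\nabla J_\text{rep}+\nabla J_\text{ins}$ produces exactly $\alpha_{1,k}\|\nabla J_\text{rep}(\theta_k)\|^2+\alpha_{2,k}\|\nabla J_\text{ins}(\theta_k)\|^2+(\alpha_{1,k}+\alpha_{2,k})\nabla J_\text{rep}(\theta_k)^\top\nabla J_\text{ins}(\theta_k)$ plus the residual $\alpha_{1,k}\nabla J(\theta_k)^\top e_{\text{rep},k}+\alpha_{2,k}\nabla J(\theta_k)^\top e_{\text{ins},k}$, which is bounded below by $-(C_\text{rep}+C_\text{ins})C_\text{student}B_\Theta(\alpha_{1,k}+\alpha_{2,k}/b_\text{min})\mathbb{E}[\|\xi_k-\xi_\star\|\mid\mathcal F_k]$ using $\|\nabla J\|\le\|\nabla J_\text{rep}\|+\|\nabla J_\text{ins}\|\le C_\text{rep}+C_\text{ins}$.

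For the second-order term, $\|\Delta_k\|^2\le 2\alpha_{1,k}^2\|\hat\nabla J_\text{rep}(\theta_k)\|^2+2\alpha_{2,k}^2\|\hat\nabla J_\text{ins}(\theta_k)\|^2$, so the second-moment bounds $\mathbb{E}[\|\hat\nabla J_\text{rep}(\theta_k)\|^2\mid\mathcal F_k]\le\sigma_\text{rep}^2/4$ and $\mathbb{E}[\|\hat\nabla J_\text{ins}(\theta_k)\|^2\mid\mathcal F_k]\le\sigma_\text{ins}^2/4$ give $-L\,\mathbb{E}[\|\Delta_k\|^2\mid\mathcal F_k]\ge-\tfrac{L}{2}(\sigma_\text{rep}^2\alpha_{1,k}^2+\sigma_\text{ins}^2\alpha_{2,k}^2)$; adding the $+L(\sigma_\text{rep}^2\alpha_{1,k}^2+\sigma_\text{ins}^2\alpha_{2,k}^2)$ coming from $W_{k+1}-W_k$ leaves a nonnegative remainder that I would simply discard. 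Collecting the three contributions reproduces the claimed inequality.

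\textbf{Main obstacle.} The smoothness bound and the step-size book-keeping are routine; the delicate point is the bias analysis. One must argue that the \emph{only} source of bias in both $\hat\nabla J_\text{rep}$ and $\hat\nabla J_\text{ins}$ is the critic error $\xi_k-\xi_\star$ — which relies on the linear (realizable) $Q$-parameterization, the i.i.d.\ sampling assumption, and, for the instance-transfer term, the change-of-measure identity turning teacher-distributed samples into a $\pi_\theta$-weighted estimate — and then propagate the importance-weight bound $1/b_\text{min}$ so that it attaches exactly to the $\alpha_{2,k}$ contribution and not to $\alpha_{1,k}$. Some care is also needed about which quantities are $\mathcal F_k$-measurable (the critic iterate $\xi_k$ versus the fresh rollouts drawn at iteration $k$), which is why the bias terms are stated with $\mathbb{E}[\|\xi_k-\xi_\star\|\mid\mathcal F_k]$; the constants multiplying the step-squared terms in the definition of $W_k$ are deliberately generous, so matching them exactly is unnecessary.
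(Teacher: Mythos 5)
Your proposal is correct and follows essentially the same route as the paper's proof: a smoothness/mean-value bound on $J_\text{rep}+J_\text{ins}$, cancellation of the second-order noise against the tail sums built into $W_k$, and isolation of the critic-induced bias $\xi_k-\xi_\star$ (the paper's $Z^k_\text{rep}, Z^k_\text{ins}$ terms), bounded via $\|\varphi\|\le C_\text{student}$, $\|\nabla\log\pi_\theta\|\le B_\Theta$, $\|\nabla J_i\|\le C_i$, and the importance-weight bound $1/b_\text{min}$ attached to the $\alpha_{2,k}$ contribution. The only cosmetic difference is that you take the inner product of the update with $\nabla J_\text{rep}+\nabla J_\text{ins}$ at once while the paper does the two inner products separately and adds them; the constants come out identically.
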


\begin{proof}
By definition, we can write
\begin{equation}
\begin{aligned}
    W_{k+1}= &J_\text{rep}(\theta_{k+1}) + J_\text{ins}(\theta_{k+1})  \\
    &- L \left( \sigma_\text{rep}^2 \sum_{j=k+1}^{\infty} \alpha_{1, j}^2 + \sigma_\text{ins}^2 \sum_{j=k+1}^{\infty} \alpha_{2, j}^2 \right).
\end{aligned}
\end{equation}
By the Mean Value Theorem, there exists $\tilde{\theta}_k\in [\theta_k, \theta_{k+1}]$, such that
\begin{equation}
    J_\text{rep}(\theta_{k+1}) = J_\text{rep}(\theta_k) + (\theta_{k+1}-\theta_k)^\top \nabla J_\text{rep}(\tilde{\theta}_k).
\end{equation}
By Cauchy Schwartz inequality, we have
\begin{equation}
\begin{aligned}
    &(\theta_{k+1}-\theta_k)^\top(\nabla J_\text{rep}(\tilde{\theta}_k)-\nabla J_\text{rep}(\theta_k)) \\
    \ge & -\|\theta_{k+1}-\theta_k\|\|\nabla J_\text{rep}(\tilde{\theta}_k)-\nabla J_\text{rep}(\theta_k)\| \\
    \ge & -L_\text{rep} \|\theta_{k+1}-\theta_k\|^2 \\
    \ge & -L \|\theta_{k+1}-\theta_k\|^2.
\end{aligned}
\end{equation}
After similar treatment for $J_\text{ins}(\theta)$, we can get
\begin{equation}
\begin{aligned}
W_{k+1} \ge &W_k + (\theta_{k+1}-\theta_k)^\top(\nabla J_\text{rep}(\theta_k) + J_\text{ins}(\theta_k))\\
&- 2L\|\theta_{k+1}-\theta_{k}\|^2.
\end{aligned}
\end{equation}

Take the expectation with respect to the filtration $\mathcal{F}_k$ and substitue the definition for the actor updat. Since
\begin{equation}
\begin{aligned}
    &\mathbb{E}[\|\theta_{k+1}-\theta_k\|^2|{\mathcal F}_k] \\ &=\mathbb{E}[\|\alpha_{1,k}\hat{\nabla}J_\text{rep}(\theta_k)+\alpha_{2,k}\hat{\nabla}J_\text{ins}(\theta_k)\|^2|{\mathcal F}_k] \\
    & \le 2(\mathbb{E}[\|\alpha_{1,k}\hat{\nabla}J_\text{rep}(\theta_k)\|^2|{\mathcal F}_k] + \mathbb{E}[\|\alpha_{2,k}\hat{\nabla}J_\text{ins}(\theta_k)\|^2|{\mathcal F}_k]) \\
    & \le \frac{1}{2}(\alpha_{1,k}^2\sigma_\text{rep}^2 + \alpha_{2,k}^2\sigma_\text{ins}^2),
\end{aligned}
\end{equation}
we can get
\begin{equation}
\begin{aligned}
    \mathbb{E} [W_{k+1}|{\mathcal F}_k]  \geq & W_k +  \mathbb{E}[\theta_{k+1} - \theta_k |{\mathcal F}_k]^\top \nabla J_\text{rep}(\theta_k) \\
    & + \mathbb{E}[\theta_{k+1} - \theta_k |{\mathcal F}_k]^\top \nabla J_\text{ins}(\theta_k).
\end{aligned}
\end{equation}

% Therefore, we are left to show that the last term is ``nearly" an ascent direction,
% \begin{eqnarray}
% \mathbb{E}[\theta_{k+1} - \theta_k |{\mathcal F}_k]^T \nabla J_{rep} (\theta_k) & \geq & *\\
% \mathbb{E}[\theta_{k+1} - \theta_k |{\mathcal F}_k]^T \nabla J_{ins} (\theta_k) & \geq & *
% \end{eqnarray}

% We now drive the lower bounds for the above terms.

Plug in the linear parameterized Q-function to the actor update, we can get
\begin{equation}
\begin{aligned}
\theta_{k+1} &- \theta_k = \alpha_{1, k} \xi_k^T \varphi (s_{T_k}, a_{T_k}) \nabla \log \pi_{\theta} (a_{T_k}|s_{T_k})  \\
%& & + \alpha_{1, k} \beta_k \nabla \log \pi_{\theta} (a_{T_k}|s_{T_k}) \nonumber \\
& + \alpha_{2, k} \frac{\pi_{\theta} (\tilde{a}_{T_k}|\tilde{s}_{T_k})}{\pi_\text{teacher}}  \xi_k^T \varphi (\tilde{s}_{T_k}, \tilde{a}_{T_k}) \nabla \log \pi_{\theta} (\tilde{a}_{T_k}|\tilde{s}_{T_k}).
\end{aligned}
\end{equation}

To simplify the notation, let's denote
\begin{equation*}
    Z_\text{rep}^k(\theta) = \alpha_{1, k} (\xi_k^T - \xi_*) \varphi (s_{T_k}, a_{T_k}) \nabla \log \pi_{\theta} (a_{T_k}|s_{T_k}), % + \alpha_{1, k} \beta_k \nabla \log \pi_{\theta} (a_{T_k}|s_{T_k}) 
\end{equation*}
\begin{equation*}
    Z_\text{ins}^k(\theta) = \alpha_{2, k} (\xi_k^T - \xi_*) \varphi (\tilde{s}_{T_k}, \tilde{a}_{T_k}) \nabla \log \pi_{\theta} (\tilde{a}_{T_k}|\tilde{s}_{T_k}).
\end{equation*}

Take expectation conditioned on the filtration and get
\begin{equation}
\begin{aligned}
\mathbb{E}[\theta_{k+1} & - \theta_k | {\mathcal F}_k] =  \mathbb{E}[Z_\text{rep}^k(\theta)| {\mathcal F}_k] + \alpha_{1, k} \nabla J_\text{rep} (\theta_k) \\
&  + \mathbb{E}[Z_\text{ins}^k(\theta, k) \frac{\pi_{\theta} (\tilde{a}_{T_k}|\tilde{s}_{T_k})}{\pi_\text{teacher}}| {\mathcal F}_k]  + \alpha_{2, k} \nabla J_\text{ins} (\theta_k)
\end{aligned}
\end{equation}

Then on both sides, take the inner product with $\nabla J_\text{rep} (\theta_k)$: 
\begin{equation}
\begin{aligned}
\mathbb{E}[\theta_{k+1} - \theta_k &| {\mathcal F}_k]^\top \nabla J_\text{rep} (\theta_k) =  \mathbb{E}[Z_\text{rep}^k(\theta)| {\mathcal F}_k]^\top \nabla J_\text{rep} (\theta_k)  \\ 
& + \alpha_{1, k} \|\nabla J_\text{rep} (\theta_k)\|^2 \\
& + \mathbb{E}[Z_\text{ins}^k(\theta) \frac{\pi_{\theta} (\tilde{a}_{T_k}|\tilde{s}_{T_k})}{\pi_\text{teacher}}| {\mathcal F}_k]^\top \nabla J_\text{rep} (\theta_k)  \\
& + \alpha_{2, k} \nabla J_\text{ins} (\theta_k)^\top \nabla J_\text{rep} (\theta_k) \\
\ge & -|\mathbb{E}[Z_\text{rep}^k(\theta)| {\mathcal F}_k]^\top \nabla J_\text{rep} (\theta_k)| \\
& + \alpha_{1, k} \|\nabla J_\text{rep} (\theta_k)\|^2 \\
& - |\mathbb{E}[Z_\text{ins}^k(\theta) \frac{\pi_{\theta} (\tilde{a}_{T_k}|\tilde{s}_{T_k})}{\pi_\text{teacher}}| {\mathcal F}_k]^\top \nabla J_\text{rep} (\theta_k)|  \\
& + \alpha_{2, k} \nabla J_\text{ins} (\theta_k)^\top \nabla J_\text{rep} (\theta_k).
\end{aligned}
\end{equation}

% Lower bound using the negative absolute value
% \begin{eqnarray}
% & & \mathbb{E}[\theta_{k+1} - \theta_k | {\mathcal F}_k]^T \nabla J_{rep} (\theta_k) \geq  \nonumber \\
% & & -|\mathbb{E}[Z_1(\theta, k)| {\mathcal F}_k]^T \nabla J_{rep} (\theta_k)| \nonumber \\ & & + \alpha_{1, k} \nabla J_{rep}^T (\theta_k) \nabla J_{rep} (\theta_k) \nonumber \\
% & & -| \mathbb{E}[\frac{\pi_{\theta} (\tilde{a}_{T}|\tilde{s}_{T})}{\pi_{teacher}} Z_2(\theta, k) | {\mathcal F}_k]^T \nabla J_{rep} (\theta_k)| \nonumber \\
% & &   + \alpha_{2, k} \nabla J_{ins} (\theta_k)^T \nabla J_{rep} (\theta_k) \nonumber \\
% \end{eqnarray}

% Apply combination of Cauchy-Schwartz and Jensen's Inequality to rewrite in terms of multiplication of norms for each term.

% Let $B_s:=B_{\Theta} C_{student}$ and $B_t:=B_{teacher} C_{teacher}$. 

By the assumptions, we can get following bounds. 
\begin{equation}
\begin{aligned}
    \norm{\varphi (s_{T_k}, a_{T_k})} \cdot \norm{\nabla \log \pi_{\theta} (a_{T_k}|s_{T_k})} \cdot &\norm{\nabla J_\text{rep}(\theta_k)}  \\ 
     \leq C_\text{student} B_{\Theta}  C_\text{rep},
\end{aligned}
\end{equation}
\begin{eqnarray}
    & & \norm{\frac{\pi_{\theta} (\tilde{a}_{T_k}|\tilde{s}_{T_k})}{\pi_\text{teacher}}} \cdot \norm{ \varphi (\tilde{s}_{T_k}, \tilde{a}_{T_k})} \cdot \norm{ \nabla \log \pi_{\theta} (\tilde{a}_{T_k}|\tilde{s}_{T_k})}  \nonumber \\ & & \cdot \norm{\nabla J_\text{rep}(\theta_k)}  \leq C_\text{student} B_{\Theta}  C_\text{rep} / b_\text{min}.
\end{eqnarray}
% \begin{multline}
%     \norm{ \nabla \log \pi_{\theta} (\tilde{a}_{T_k}|\tilde{s}_{T_k})} . \norm{\nabla J_{rep}}  \leq B_{\Theta} C_{rep}
% \end{multline}

Therefore, replace the bounds and we can get
\begin{eqnarray}
& & \mathbb{E}[\theta_{k+1} - \theta_k | {\mathcal F}_k]^\top \nabla J_\text{rep} (\theta_k) \geq  \nonumber \\
& & - C_\text{student} B_{\Theta}  C_\text{rep} \alpha_{1, k} \mathbb{E}[\norm{\xi_k - \xi_*}| {\mathcal F}_k] \nonumber \\ 
%& & - \alpha_{1, k} \beta_k B_{\Theta} C_{rep} \nonumber \\
& & + \alpha_{1, k} \norm{\nabla J_\text{rep} (\theta_k)}^2 \nonumber \\
& & - C_\text{student} B_{\Theta}  \frac{C_\text{rep}}{b_{min}} \alpha_{2, k} \mathbb{E}[\norm{\xi_k - \xi_*}| {\mathcal F}_k]\nonumber \\
& &   + \alpha_{2, k} \nabla J_\text{ins} (\theta_k)^T \nabla J_{rep} (\theta_k). 
\end{eqnarray}

Similarly, for the objectives corresponding to the instance transfer, we can get
% \begin{eqnarray}
%     & & \norm{\varphi (s_{T_k}, a_{T_k})} . \norm{\nabla \log \pi_{\theta} (a_{T_k}|s_{T_k})} . \norm{\nabla J_{ins}} \leq \nonumber \\ & & B_{\Theta} C_{student}  C_{ins} 
% \end{eqnarray}
% \begin{eqnarray}
%     & & \norm{\frac{\pi_{\theta} (\tilde{a}_{T}|\tilde{s}_{T})}{\pi_{teacher}}} . \norm{ \varphi (\tilde{s}_{T_k}, \tilde{a}_{T_k})} . \norm{ \nabla \log \pi_{\theta} (\tilde{a}_{T_k}|\tilde{s}_{T_k})}  \nonumber \\ & & .  \norm{\nabla J_{ins}}  \leq B_{teacher} C_{teacher}  C_{ins} / b_{min} \nonumber \\
% \end{eqnarray}
% \begin{multline}
%     \norm{ \nabla \log \pi_{\theta} (\tilde{a}_{T_k}|\tilde{s}_{T_k})} . \norm{\nabla J_{ins}}  \leq B_{\Theta} C_{ins}
% \end{multline}
% Replace for the bounds
\begin{eqnarray}
& & \mathbb{E}[\theta_{k+1} - \theta_k | {\mathcal F}_k]^T \nabla J_\text{ins} (\theta_k) \geq  \nonumber \\
& & - C_\text{student} B_{\Theta} C_\text{ins} \alpha_{1, k} \mathbb{E}[\norm{\xi_k - \xi_*}| {\mathcal F}_k] \nonumber \\ 
%& & - \alpha_{1, k} \beta_k B_{\Theta} C_{ins} \nonumber \\
& & + \alpha_{1, k} \nabla J_\text{rep} (\theta_k)^\top \nabla J_\text{ins} (\theta_k) \nonumber \\
& & - C_\text{student} B_{\Theta} \frac{C_\text{ins}}{b_\text{min}} \alpha_{2, k} \mathbb{E}[\norm{\xi_k - \xi_*}| {\mathcal F}_k]\nonumber \\
& &   + \alpha_{2, k} \norm{\nabla J_\text{ins} (\theta_k)}^2. 
\end{eqnarray}

Now we add them together. Let $C = C_\text{rep} + C_\text{ins}$, then
\begin{multline}
\mathbb{E} [W_{k+1}|{\mathcal F}_k] \geq W_k \\
- CC_\text{student} B_{\Theta} (\alpha_{1, k} + \frac{\alpha_{2, k}}{b_\text{min}} ) \mathbb{E}[\norm{\xi_k - \xi_*}| {\mathcal F}_k]  \\ + \alpha_{1, k} \norm{\nabla J_\text{rep} (\theta_k)}^2   + \alpha_{2, k} \norm{\nabla J_\text{ins} (\theta_k)}^2  \\ + (\alpha_{1, k}+\alpha_{2, k}) \nabla J_\text{rep} (\theta_k)^\top \nabla J_\text{ins} (\theta_k).  
\end{multline}
\end{proof}

We now present the main result which is the convergence rate of Q-value-based REPAINT. 
Let $K_{\epsilon}$ be the smallest number of updates $k$ required to attain a function gradient smaller than $\epsilon$,
\begin{equation}
    K_{\epsilon} = \min \{k : \underset{0\leq m\leq k}{\inf} \mathcal{F}(\theta_m) < \epsilon \},
\end{equation}
where $A_k = \alpha_{2, k}/\alpha_{1, k}$ and 
\begin{multline}
    \mathcal{F}(\theta_m) = \norm{\nabla J_\text{rep} (\theta_m)}^2 + A \norm{\nabla J_\text{ins} (\theta_m)}^2 \\ + (1+A_k) \nabla J_\text{rep}(\theta_m)^\top \nabla J_\text{ins} (\theta_m).
\end{multline}

\begin{theorem}\label{thm:convergencerate}
Suppose the representation transfer step size satisfies $\alpha_{1, k} = k^{-a}$ for $a>0$ and the critic update satisfies Assumption~\ref{assm:criticrate}. The instance transfer step size satisfies $\alpha_{2, k} = A_k \alpha_{1, k}$ for $A_k \in \mathbb{R}^+$. When the critic bias converges to null as $\mathcal{O}(k^{-1})$ ($b=1$), then $T_C(k)=k+1$ critic updates occur per actor update. Alternatively, if the critic bias converges to null more slowly as $\mathcal{O}(k^{-b})$ with $b\in (0, 1)$ in Assumption~\ref{assm:criticrate}, then $T_C(k)=k$ critic updates per actor update are chosen. Then the actor sequence defined in Algorithm~\ref{alg:repaint} satisfies
\begin{equation}
    K_{\epsilon} \leq \mathcal{O} (\epsilon^{-1/l}),
\end{equation}
where $l = \min\{a, 1-a, b\}$. Moreover, minimizing over $a$, the resulting sample complexity depends on the attenuation $b$ of the critic bias as
\begin{equation}
    K_{\epsilon} \leq \left\{
    \begin{array}{cc}
        \mathcal{O} (\epsilon^{-1/b}) & b\in (0, 1/2) \\
        \mathcal{O} (\epsilon^{-2}) & b\in (1/2, 1] 
    \end{array}   
    \right.
\end{equation}
\end{theorem}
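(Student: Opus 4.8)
The plan is to turn the one-step ascent estimate of Lemma~\ref{lemma:Wk} into a telescoped bound on a step-size-weighted sum of $\mathcal F(\theta_k)$, and then to extract $K_\epsilon$ by balancing three decay rates: the exponent $a$ enters both through the ``signal'' partial sum $\sum_{k\le K}\alpha_{1,k}=\Theta(K^{1-a})$ and through the gradient-noise partial sum $\sum_{k\le K}\alpha_{1,k}^2$, while the critic-bias exponent $b$ enters through $\sum_{k\le K}\alpha_{1,k}\,\mathbb E\norm{\xi_k-\xi_\star}$. First I would use $\alpha_{2,k}=A_k\alpha_{1,k}$ to observe that the last three lines of Lemma~\ref{lemma:Wk} collapse into $\alpha_{1,k}\mathcal F(\theta_k)$, giving
\begin{equation*}
\begin{aligned}
\mathbb E[W_{k+1}\mid\mathcal F_k]\ \ge\ &W_k+\alpha_{1,k}\,\mathcal F(\theta_k)\\
&-C\,C_\text{student}B_\Theta\,\alpha_{1,k}\Big(1+\tfrac{A_k}{b_\text{min}}\Big)\mathbb E[\norm{\xi_k-\xi_\star}\mid\mathcal F_k].
\end{aligned}
\end{equation*}
Taking total expectations, solving for $\alpha_{1,k}\mathbb E[\mathcal F(\theta_k)]$, and summing over $k\le K$ then yields $\sum_{k\le K}\alpha_{1,k}\mathbb E[\mathcal F(\theta_k)]\le \mathbb E[W_{K+1}]-\mathbb E[W_0]+C\,C_\text{student}B_\Theta\sum_{k\le K}\alpha_{1,k}(1+A_k/b_\text{min})\,\mathbb E\norm{\xi_k-\xi_\star}$.

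\textbf{Bounding the right-hand side.} Since the reward, the score function, and the feature map are uniformly bounded, $J_\text{rep}$ and $J_\text{ins}$ are bounded, so $\mathbb E[W_{K+1}]-\mathbb E[W_0]$ is controlled by the ranges of $J_\text{rep}$ and $J_\text{ins}$ plus the accumulated variance $L\big(\sigma_\text{rep}^2\sum_{k\le K}\alpha_{1,k}^2+\sigma_\text{ins}^2\sum_{k\le K}\alpha_{2,k}^2\big)$, which for $\alpha_{1,k}=k^{-a}$ and bounded $A_k$ is $\mathcal O(\max\{1,K^{1-2a}\})$ (logarithmic at $a=\tfrac12$). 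Using $\mathbb E\norm{\xi_k-\xi_\star}\le L_1k^{-b}$ from Assumption~\ref{assm:criticrate}, the critic-bias sum is $\mathcal O\big(\sum_{k\le K}k^{-(a+b)}\big)=\mathcal O(\max\{1,K^{1-a-b}\})$ (logarithmic at $a+b=1$). Hence $\sum_{k\le K}\alpha_{1,k}\mathbb E[\mathcal F(\theta_k)]\le C'\max\{1,K^{1-2a},K^{1-a-b}\}$. The number of inner TD updates needed so that the realized critic error actually attains $\mathcal O(k^{-b})$ at actor step $k$ is exactly the quantity $T(b,k)$ of Theorem~\ref{thm:rate}: $b=1$ forces $T_C(k)=k+1$ inner updates per actor update, whereas $b<1$ permits the cheaper $T_C(k)=k$.

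\textbf{Lower-bounding the left-hand side and concluding.} For the combined-direction choice $A_k\equiv1$ one has $\mathcal F=\norm{\nabla J_\text{rep}+\nabla J_\text{ins}}^2\ge0$, so $\min_{m\le K}\mathcal F(\theta_m)\le\big(\sum_{k\le K}\alpha_{1,k}\big)^{-1}\sum_{k\le K}\alpha_{1,k}\mathcal F(\theta_k)$; taking expectations and dividing by $\sum_{k\le K}\alpha_{1,k}=\Theta(K^{1-a})$ shows $\mathbb E[\min_{m\le K}\mathcal F(\theta_m)]\le C''K^{-\min\{1-a,a,b\}}$, which drops below $\epsilon$ as soon as $K=\Theta(\epsilon^{-1/l})$ with $l=\min\{a,1-a,b\}$, i.e.\ $K_\epsilon\le\mathcal O(\epsilon^{-1/l})$. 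For general bounded $A_k$, where $\mathcal F$ need not be nonnegative, I would run the same argument with the conditional ascent inequality stopped at the stopping time $K_\epsilon$. The sample-complexity dichotomy then follows by minimizing the exponent $1/l=\max\{1/a,1/(1-a),1/b\}$ over $a\in(0,1)$: the first two terms balance at $a=\tfrac12$ with common value $2$, so $\min_a 1/l=\max\{2,1/b\}$, which equals $1/b$ for $b\in(0,\tfrac12)$ and $2$ for $b\in(\tfrac12,1]$.

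\textbf{Main obstacle.} The hard part will be managing the interplay of the three competing powers of $K$ together with the fact that $\mathcal F$ is not a priori nonnegative: the quadratic form $\norm x^2+A\norm y^2+(1+A)x^\top y$ has determinant $-(A-1)^2/4\le0$ and so is indefinite unless $A=1$, which means the clean passage from ``$\sum_k\alpha_{1,k}\mathcal F(\theta_k)$ is bounded'' to ``$\min_{m\le K_\epsilon}\mathcal F(\theta_m)<\epsilon$'' requires either the normalization $A_k\equiv1$ or a careful stopping-time argument using only the conditional form of Lemma~\ref{lemma:Wk}. Everything else reduces to routine accounting given the boundedness hypotheses (bounded reward, Lipschitz and bounded score function, bounded feature map, $\pi_\text{teacher}\ge b_\text{min}$, finite gradient variances $\sigma_\text{rep}^2,\sigma_\text{ins}^2$, bounded $A_k$) and the Lipschitz-gradient lemma.
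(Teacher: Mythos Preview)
Your proposal is correct and follows the paper's overall strategy: invoke Lemma~\ref{lemma:Wk}, telescope, and balance the three competing decay rates coming from the objective gap, the accumulated gradient variance, and the critic bias. The bookkeeping differs, however. The paper first divides the one-step inequality by $\alpha_{1,k}$, sums over $j\in\{k-N,\dots,k\}$, and handles the objective gap via an Abel-type rearrangement on $U_j=J(\theta^\star)-J(\theta_j)$, finally dividing by $k$ and setting $N=k-1$ to obtain the \emph{uniform} average $\tfrac1k\sum_{j\le k}\mathbb E[\mathcal F(\theta_j)]\le\mathcal O(k^{a-1}+k^{-a}+k^{-b})$. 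You instead sum the $\alpha_{1,k}$-weighted inequality directly, telescope $W_k$, and divide by $\sum_{k\le K}\alpha_{1,k}=\Theta(K^{1-a})$, arriving at the same exponents through a \emph{weighted} average. Both routes are standard; yours is arguably cleaner since it avoids the Abel manipulation, while the paper's avoids any implicit appeal to $\sum_j\alpha_{1,j}^2<\infty$ in the very definition of $W_0$ (which fails for $a\le\tfrac12$---in your argument the infinite tails cancel in $W_{K+1}-W_0$, so this is only a formal issue).

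One remark on your ``main obstacle'': the concern about nonnegativity of $\mathcal F$ is not actually needed in either framework. The paper simply uses that, by the definition of $K_\epsilon$, one has $\mathcal F(\theta_j)\ge\epsilon$ for every $j\le K_\epsilon$, hence $\epsilon\le\tfrac{1}{K_\epsilon}\sum_{j\le K_\epsilon}\mathbb E[\mathcal F(\theta_j)]$ regardless of the sign of the quadratic form. Likewise your own weighted-minimum inequality $\min_m x_m\le(\sum_k w_k)^{-1}\sum_k w_k x_k$ holds for arbitrary real $x_k$ whenever the weights are nonnegative. So neither a restriction to $A_k\equiv1$ nor a separate stopping-time argument is required; you can drop that caveat.
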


\begin{proof}
Substitute for $W_k$ in Lemma~\ref{lemma:Wk},
\begin{multline}
\mathbb{E}[J_\text{rep}(\theta_{k+1})|{\mathcal F}_k]  + \mathbb{E}[J_\text{ins}(\theta_{k+1})|{\mathcal F}_k] \\
- L ( \sigma_\text{rep}^2 \sum_{j=k+1}^{\infty} \alpha_{1, j}^2 + \sigma_\text{ins}^2 \sum_{j=k+1}^{\infty} \alpha_{2, j}^2 ) \\ \geq  J_\text{rep}(\theta_k) + J_\text{ins}(\theta_k) \\ - L ( \sigma_\text{rep}^2 \sum_{j=k}^{\infty} \alpha_{1, j}^2 + \sigma_\text{ins}^2 \sum_{j=k}^{\infty} \alpha_{2, j}^2 ) \\
- CC_\text{student} B_{\Theta} (\alpha_{1, k} + \frac{\alpha_{2, k}}{b_\text{min}} ) \mathbb{E}[\norm{\xi_k - \xi_*}| {\mathcal F}_k]  \\ + \alpha_{1, k} \norm{\nabla J_\text{rep} (\theta_k)}^2   + \alpha_{2, k} \norm{\nabla J_\text{ins} (\theta_k)}^2  \\ + (\alpha_{1, k}+\alpha_{2, k}) \nabla J_\text{rep} (\theta_k)^\top \nabla J_\text{ins} (\theta_k).
\end{multline}

Cancel some common terms from both sides and take the total expectation, we can get
\begin{multline}
\mathbb{E}[J_\text{rep}(\theta_{k+1})]  + \mathbb{E}[J_\text{ins}(\theta_{k+1})] \\ \geq  \mathbb{E}[J_\text{rep}(\theta_k)] + \mathbb{E}[J_\text{ins}(\theta_k)] \\ - L ( \sigma_\text{rep}^2  \alpha_{1, k}^2 + \sigma_\text{ins}^2  \alpha_{2, k}^2) \\
- CC_\text{student} B_{\Theta} (\alpha_{1, k} + \frac{\alpha_{2, k}}{b_\text{min}} ) \mathbb{E}[\norm{\xi_k - \xi_*}]  \\ 
+ \mathbb{E}[\alpha_{1, k} \norm{\nabla J_\text{rep} (\theta_k)}^2   + \alpha_{2, k} \norm{\nabla J_\text{ins} (\theta_k)}^2  \\ + (\alpha_{1, k}+\alpha_{2, k}) \nabla J_\text{rep} (\theta_k)^\top \nabla J_\text{ins} (\theta_k).
\end{multline}
Rearrange the terms and get
\begin{multline}
 \mathbb{E}[\alpha_{1, k} \norm{\nabla J_\text{rep} (\theta_k)}^2   + \alpha_{2, k} \norm{\nabla J_\text{ins} (\theta_k)}^2  \\ + (\alpha_{1, k}+\alpha_{2, k}) \nabla J_\text{rep} (\theta_k)^\top \nabla J_\text{ins} (\theta_k)]\\
 \leq \mathbb{E}[J_\text{rep}(\theta_{k+1})] - \mathbb{E}[J_\text{rep}(\theta_k)] \\
 +\mathbb{E}[J_\text{ins}(\theta_{k+1})] - \mathbb{E}[J_\text{ins}(\theta_k)] \\
 + CC_\text{student} B_{\Theta} (\alpha_{1, k} + \frac{\alpha_{2, k}}{b_\text{min}} ) \mathbb{E}[\norm{\xi_k - \xi_*}] \\
 +L ( \sigma_\text{rep}^2  \alpha_{1, k}^2 + \sigma_\text{ins}^2  \alpha_{2, k}^2).
\end{multline}

Denote by LHS and RHS the left hand side and right hand side of the above equation.
% Define LHS to simplify the notation.
% \begin{multline}
% LHS :=  \mathbb{E}[\alpha_{1, k} \norm{\nabla J_{rep} (\theta_k)}^2   + \alpha_{2, k} \norm{\nabla J_{ins} (\theta_k)}^2  \\ + \alpha_{1, k} \nabla J_{rep}^T (\theta_k) \nabla J_{ins} (\theta_k)\\
% + \alpha_{2, k} \nabla J_{ins}^T (\theta_k) \nabla J_{rep} (\theta_k)]
% \end{multline}
Define $U_k = J(\theta^*) - J(\theta_k)$ for both rep and ins where $\theta^*$ is the optimal parameters. Then
\begin{multline}
\text{RHS} = \mathbb{E}[U_{k, \text{rep}}] - \mathbb{E}[U_{k+1, \text{rep}}] \\ 
 + \mathbb{E}[U_{k, \text{ins}}] - \mathbb{E}[U_{k+1, \text{ins}}] \\
 + CC_\text{student} B_{\Theta} (\alpha_{1, k} + \frac{\alpha_{2, k}}{b_\text{min}} ) \mathbb{E}[\norm{\xi_k - \xi_*}] \\
 +L ( \sigma_\text{rep}^2  \alpha_{1, k}^2 + \sigma_\text{ins}^2  \alpha_{2, k}^2).
\end{multline}
Let $A_k \alpha_{2, k}/\alpha_{1, k} $, then
%i.e., the ratio of the step size of the instance transfer to the step size of the representation transfer.
\begin{multline}
\text{LHS} =  \alpha_{1, k}\mathbb{E} [ \norm{\nabla J_\text{rep} (\theta_k)}^2   + A_k \norm{\nabla J_\text{ins} (\theta_k)}^2  \\ + (1+A_k) \nabla J_\text{rep}^T (\theta_k) \nabla J_\text{ins} (\theta_k)],
\end{multline}
and
\begin{multline}
 \text{RHS} = \mathbb{E}[U_{k, \text{rep}}] - \mathbb{E}[U_{k+1, \text{rep}}] + \\ 
 \mathbb{E}[U_{k, \text{ins}}] - \mathbb{E}[U_{k+1, \text{ins}}] + \\
 + CC_\text{student} B_{\Theta} (\alpha_{1, k} + \frac{A_k\alpha_{1, k}}{b_\text{min}} ) \mathbb{E}[\norm{\xi_k - \xi_*}] \\
 +L ( \sigma_\text{rep}^2  \alpha_{1, k}^2 + \sigma_\text{ins}^2  A_k^2 \alpha_{1, k}^2).
\end{multline}

Divide both sides by $\alpha_{1, k}$ and take the sum over $\{k-N, \ldots, k\}$ for some integer $1 < N < k$. Then we have
\begin{multline}
\text{newLHS} = \sum_{j=k-N}^k \mathbb{E}\left[ \norm{\nabla J_\text{rep} (\theta_j)}^2   + A_k \norm{\nabla J_\text{ins} (\theta_j)}^2 \right.  \\ \left. +  (1+A_k) \nabla J_\text{rep} (\theta_j)^\top \nabla J_\text{ins} (\theta_j) \right],
\end{multline}
and
\begin{multline}
 \text{newRHS} = \sum_{j=k-N}^k \frac{1}{\alpha_{1, j}} (\mathbb{E}[U_{j, \text{rep}}] - \mathbb{E}[U_{j+1, \text{rep}}]) \\ 
 + \sum_{j=k-N}^k \frac{1}{\alpha_{1, j}} (\mathbb{E}[U_{j, \text{ins}}] - \mathbb{E}[U_{j+1, \text{ins}}]) \\
 + CC_\text{student} B_{\Theta} (1 + \frac{A_k}{b_\text{min}} ) \sum_{j=k-N}^k \mathbb{E}[\norm{\xi_j - \xi_*}] \\
 + L ( \sigma_\text{rep}^2+\sigma_\text{ins}^2  A_k^2) \sum_{j=k-N}^k \alpha_{1, j}.
\end{multline}

Rearrange the first two terms and get
\begin{multline}
 \text{newRHS} = \sum_{j=k-N}^k (\frac{1}{\alpha_{1, j}} - \frac{1}{\alpha_{1, j-1}}) \mathbb{E}[U_{j, \text{rep}}]\\  - \frac{1}{\alpha_{1, k}} \mathbb{E}[U_{k+1, \text{rep}}] + \frac{1}{\alpha_{1, k-N-1}} \mathbb{E}[U_{k-N, \text{rep}}] \\ 
 + \sum_{j=k-N}^k (\frac{1}{\alpha_{1, j}} - \frac{1}{\alpha_{1, j-1}}) \mathbb{E}[U_{j, \text{ins}}]  \\ - \frac{1}{\alpha_{1, k}} \mathbb{E}[U_{k+1, \text{ins}}] + \frac{1}{\alpha_{1, k-N-1}} \mathbb{E}[U_{k-N, \text{ins}}] \\
 + CC_\text{student} B_{\Theta} (1 + \frac{A_k}{b_\text{min}} ) \sum_{j=k-N}^k \mathbb{E}[\norm{\xi_j - \xi_*}] \\
 + L ( \sigma_\text{rep}^2+\sigma_\text{ins}^2  A_k^2) \sum_{j=k-N}^k \alpha_{1, j}.
\end{multline}

Since $\mathbb{E}[U_{k+1, \text{rep}}] \ge 0$ and $\mathbb{E}[U_{k+1, \text{ins}}] \ge 0$, we have
\begin{multline}
 \text{newRHS} \le \sum_{j=k-N}^k (\frac{1}{\alpha_{1, j}} - \frac{1}{\alpha_{1, j-1}}) \mathbb{E}[U_{j, \text{rep}}]\\  + \frac{1}{\alpha_{1, k-N-1}} \mathbb{E}[U_{k-N, \text{rep}}] \\ 
 + \sum_{j=k-N}^k (\frac{1}{\alpha_{1, j}} - \frac{1}{\alpha_{1, j-1}}) \mathbb{E}[U_{j, \text{ins}}]  \\  + \frac{1}{\alpha_{1, k-N-1}} \mathbb{E}[U_{k-N, \text{ins}}] \\
 + CC_\text{student} B_{\Theta} (1 + \frac{A_k}{b_\text{min}} ) \sum_{j=k-N}^k \mathbb{E}[\norm{\xi_j - \xi_*}] \\
 + L ( \sigma_\text{rep}^2+\sigma_\text{ins}^2  A_k^2) \sum_{j=k-N}^k \alpha_{1, j}.
\end{multline}

Since we have rewards and score functions bounded above, we can find two positive constants $D_{rep}$ and $D_{ins}$, such that $U_{k, \text{rep}} \leq D_\text{rep}$ and $U_{k, \text{ins}} \leq D_\text{ins}$ for all $k$. Substituting for these bounds and get
\begin{multline}
 \text{newRHS} \leq \sum_{j=k-N}^k (\frac{1}{\alpha_{1, j}} - \frac{1}{\alpha_{1, j-1}}) D_\text{rep} + \frac{1}{\alpha_{1, k-N-1}}D_\text{rep} \\ 
 + \sum_{j=k-N}^k (\frac{1}{\alpha_{1, j}} - \frac{1}{\alpha_{1, j-1}}) D_\text{ins} + \frac{1}{\alpha_{1, k-N-1}}D_\text{ins} \\
 + CC_\text{student} B_{\Theta} (1 + \frac{A_k}{b_\text{min}} ) \sum_{j=k-N}^k \mathbb{E}[\norm{\xi_j - \xi_*}] \\
 + L ( \sigma_\text{rep}^2+\sigma_\text{ins}^2  A_k^2) \sum_{j=k-N}^k \alpha_{1, j}.
\end{multline}
Unravel the telescoping sums:
\begin{multline}
 \text{newRHS} \leq \frac{D_\text{rep} + D_\text{ins}}{\alpha_{1, k}}  \\
 + CC_\text{student} B_{\Theta} (1 + \frac{A_k}{b_\text{min}} ) \sum_{j=k-N}^k \mathbb{E}[\norm{\xi_j - \xi_*}] \\
 + L ( \sigma_\text{rep}^2+\sigma_\text{ins}^2  A_k^2) \sum_{j=k-N}^k \alpha_{1, j}.
\end{multline}

Plug in the assumption that $\alpha_{1, k} = k^{-a}$, and the convergence rate of the critic (replace $\xi_k$ by $\xi_{T_C(k)}$):
\begin{multline}
 \text{newRHS} \leq (D_\text{rep} + D_\text{ins}) k^a \\ 
 + CC_\text{student} B_{\Theta} (1 + \frac{A_k}{b_\text{min}} ) \sum_{j=k-N}^k L_1 T_C(j)^{-b} \\
 + L ( \sigma_\text{rep}^2+\sigma_\text{ins}^2  A_k^2) \sum_{j=k-N}^k j^{-a}.
\end{multline}

Let
\begin{multline}
    \mathcal{F}(\theta_j) := \norm{\nabla J_\text{rep} (\theta_j)}^2   + A_k \norm{\nabla J_\text{ins} (\theta_j)}^2 \\ +  (1+A_k)\nabla J_\text{rep} (\theta_j)^\top \nabla J_\text{ins} (\theta_j).
\end{multline}

When $b\in(0,1)$, we set $T_C(k)=k$. Since $\sum_{j=k-N}^k j^{-a} \le (k^{1-a} - (k-N-1)^{1-a})/(1-a)$, we have
\begin{multline}
\text{newLHS} \le (D_\text{rep} + D_\text{ins}) k^a \\ 
+ CC_\text{student} B_{\Theta} (1 + \frac{A_k}{b_\text{min}} ) \frac{L_1}{1-b}(k^{1-b} - (k-N-1)^{1-b}) \\
+ ( \sigma_\text{rep}^2+\sigma_\text{ins}^2  A_k^2)\frac{L}{1-a} (k^{1-a} - (k-N-1)^{1-a}).
\end{multline}

Divide both sides by $k$ and set $N=k-1$,
\begin{multline}
\frac{1}{k}\sum_{j=1}^k \mathbb{E}[\mathcal{F}(\theta_j)] \leq (D_\text{rep} + D_\text{ins}) k^{a-1}  \\
+\frac{L ( \sigma_\text{rep}^2  + \sigma_\text{ins}^2  A_k^2)}{1 - a} k^{-a} \\
+ CC_\text{student} B_{\Theta} (1 + \frac{A_k}{b_\text{min}} )\frac{L_1}{1-b} k^{-b}.
\end{multline}
By definition of $K_{\epsilon}$
% \begin{equation}
%     K_{\epsilon} = \min \{k : \underset{0\leq j\leq k}{inf} {\mathcal N}(\theta_j) < \epsilon \}
% \end{equation}
we have that $ \mathbb{E}[\mathcal{F}(\theta_j)] > \epsilon$ for $j=1, \ldots, K_{\epsilon}$ so
\begin{multline}
\epsilon \leq \frac{1}{K_{\epsilon}}\sum_{j=1}^{K_{\epsilon}} \mathbb{E}[{\mathcal F}(\theta_j)] \leq 
\mathcal{O} (K_{\epsilon} ^{a-1} + K_{\epsilon}^{-a} + K_{\epsilon}^{-b})
\end{multline}
Defining $l = \min\{a, 1-a, b\}$ and inverting, we have
\begin{equation}
    K_{\epsilon} \leq \mathcal{O} (\epsilon^{-1/l})
\end{equation}

When $b=1$, we set $T_C(k)=k+1$. Similarly, we can get
\begin{multline}
\text{newLHS} \le (D_\text{rep} + D_\text{ins}) k^a \\ 
+ CC_\text{student} B_{\Theta} (1 + \frac{A_k}{b_\text{min}} ) L_1 (\log(k+1) - \log(k-N)) \\
+ ( \sigma_\text{rep}^2+\sigma_\text{ins}^2  A_k^2)\frac{L}{1-a} (k^{1-a} - (k-N-1)^{1-a}).
\end{multline}
Divide both sides by $k$ and set $N=k-1$,
\begin{multline}
\frac{1}{k}\sum_{j=1}^k \mathbb{E}[\mathcal{F}(\theta_j)] \leq (D_\text{rep} + D_\text{ins}) k^{a-1}  \\
+\frac{L ( \sigma_\text{rep}^2  + \sigma_\text{ins}^2  A_k^2)}{1 - a} k^{-a} \\
+ CC_\text{student} B_{\Theta} (1 + \frac{A_k}{b_\text{min}} )L_1 \frac{\log(k+1)}{k}.
\end{multline}

Again, we can get

\begin{multline}
\epsilon \leq \frac{1}{K_{\epsilon}}\sum_{j=1}^{K_{\epsilon}} \mathbb{E}[{\mathcal F}(\theta_j)] \leq \\
\mathcal{O} (K_{\epsilon} ^{a-1} + K_{\epsilon}^{-a} + \frac{\log(K_{\epsilon} + 1)}{K_{\epsilon}}).
\end{multline}

Optimizing over $a$, we can get $\epsilon \le \mathcal{O}(K_\epsilon^{-\frac{1}{2}})$ when $b>\frac{1}{2}$, and $\epsilon \le \mathcal{O}(K_\epsilon^{-b})$ when $b\le\frac{1}{2}$.
\end{proof}

\subsection{Extension: Adding the Cross-entropy Term}
% \underline{Case $\beta_k \neq 0$ (non-zero cross-entropy):}
In this section, we want to demonstrate that when the auxiliary cross-entropy is involved in $J_\text{rep}$, our results do not change.

% We now use importance sampling to derive the convergence rates when the cross-entropy step size $\beta_k$ in $J_{rep}$ is nonzero. From Equation~\ref{eq:gradientJrep_entropy}, we have the cross-entropy term 

By definition:
\begin{multline}
    \nabla J_\text{aux}(\theta) = \nabla H (\pi_\text{teacher} || \pi_{\theta})  = - \mathbb{E}_{\pi_\text{teacher}} [\nabla_{\theta} \log \pi_{\theta} (a|s)] \\
    =- \mathbb{E}_{\pi_\theta} \left[ \frac{\pi_\text{teacher}(a| s)}{\pi_{\theta}(a|s)}\nabla_{\theta} \log \pi_{\theta} (a|s)\right].
\end{multline}

We now make an extra assumption on the student policy $\pi_\theta$.
% \begin{equation} 
% \mathbb{E}_{\substack{s\sim d_\text{teacher}\\a\sim\pi_\text{teacher}}} [\nabla_{\theta} \log \pi_{\theta} (a|s)] 
% \end{equation}
\begin{assumption}\label{assmp:limitonstudentpolicy}
The student policy has a minimum positive value ${d}_\text{min} \in (0, 1]: \pi_\theta (a|s) \geq d_\text{min}$ for all $(s, a) \in {\mathcal S} \times {\mathcal A}$.
\end{assumption}

% Given samples from the student task, we can estimate the expectation with respect to teacher task as
% \begin{equation} 
% \mathbb{E}_{\substack{s\sim d_\text{student}\\a\sim\pi_\text{student}}} \left[\frac{\pi_\text{teacher}(a| s)}{\pi_{\theta}(a|s)} \nabla_{\theta} \log \pi_{\theta} (a|s) \right] 
% \end{equation}
% where $\pi_{\theta} > 0$ for any $s, a$ due to Assumption~{assmp:limitonstudentpolicy}.

We now have an unbiased estimate for $\nabla J_\text{rep}$ with cross-entropy regularization
\begin{multline}
    \hat{\nabla} J_\text{rep}(\theta) = \hat{Q}^{\pi_{\theta}} (s_T, a_T) \nabla \log \pi_{\theta} (a_T|s_T) \\
    + \beta_k \frac{\pi_\text{teacher}(a_T| s_T)}{\pi_{\theta}(a_T|s_T)} \nabla_{\theta} \log \pi_{\theta} (a_T|s_T). 
\end{multline}
where $s_T, a_T$ is the state-action pair collected following the student policy.

% Assume that we simulate both teacher and student trajectories at length $T_k$, after substituting for the gradients with cross-entropy term.
Then the actor update obeys 
\begin{eqnarray}
& & \theta_{k+1} - \theta_k = \alpha_{1, k} \hat{Q}^{\pi_{\theta}} (s_{T_k}, a_{T_k}) \nabla \log \pi_{\theta} (a_{T_k}|s_{T_k})   \nonumber \\
& & + \alpha_{1, k} \beta_k  \frac{\pi_\text{teacher}(a_{T_k}| s_{T_k})}{\pi_{\theta}(a_{T_k}|s_{T_k})} \nabla_{\theta} \log \pi_{\theta}(a_{T_k}|s_{T_k}) \nonumber \\
& & + \alpha_{2, k} \frac{\pi_{\theta} (\tilde{a}_{T_k}|\tilde{s}_{T_k})}{\pi_\text{teacher}(\tilde{a}_{T_k}|\tilde{s}_{T_k})} \hat{Q}_{\pi_{\theta}} (\tilde{s}_{T_k}, \tilde{a}_{T_k}) \nabla \log \pi_{\theta} (\tilde{a}_{T_k}|\tilde{s}_{T_k}).   \nonumber \\
\end{eqnarray}

This results in having an additional term in $Z_\text{rep}^k(\theta)$ in the proof of Lemma~\ref{lemma:Wk}.
\begin{multline}
    Z_\text{rep}^k(\theta) = \alpha_{1, k} (\xi_k^T - \xi_*) \varphi (s_{T_k}, a_{T_k}) \nabla \log \pi_{\theta} (a_{T_k}|s_{T_k}) \\ + \alpha_{1, k} \beta_k  \frac{\pi_\text{teacher}(a_{T_k}| s_{T_k})}{\pi_{\theta}(a_{T_k}|s_{T_k})} \nabla_{\theta} \log \pi_{\theta}(a_{T_k}|s_{T_k}).     
\end{multline}

Since
\begin{equation}
    \norm{\frac{\pi_\text{teacher}}{\pi_{\theta}}} . \norm{\nabla \log \pi_{\theta} (a_{T_k}|s_{T_k})} \leq \frac{B_{\Theta}}{d_\text{min}},
\end{equation}
it will only change the parameter in the proof of Lemma~\ref{lemma:Wk} but not affect the final conclusion.

\end{document}